\documentclass{article}

%




\usepackage[utf8]{inputenc} 
\usepackage[T1]{fontenc}    
\usepackage{hyperref}       
\usepackage{url}            
\usepackage{bbm}
\usepackage{booktabs}       
\usepackage{amsfonts}       
\usepackage{nicefrac}       
\usepackage{microtype}      

\usepackage{times}
\usepackage{graphicx} 
\usepackage{subfigure}
\usepackage{caption}

\usepackage{fullpage}
\usepackage{amsmath}
\usepackage{amsthm}
\usepackage{amssymb}
\usepackage{tikz}
\usepackage{xcolor}
\usetikzlibrary{arrows}

\usepackage{algorithm}
\usepackage[noend]{algpseudocode}
\usepackage{algorithmicx}
\usepackage{hyperref}
\usepackage{bm}

\newcommand{\sign}{\text{sign}}
\newcommand{\comment}[1]{} 
\newcommand{\dd}{\displaystyle}
\usepackage{accents}
\makeatletter
\def\widebar{\accentset{{\cc@style\underline{\mskip10mu}}}}
\def\Widebar{\accentset{{\cc@style\underline{\mskip8mu}}}}
\makeatother
\newcommand{\DIS}{\textsf{DIS}}
\newcommand{\SC}{\textup{\textsf{SC}}}
\newcommand{\Tol}{\textup{\textsf{Tol}}}
\makeatletter
\newcommand{\newalgname}[1]{%
	\renewcommand{\ALG@name}{#1}%
}
\newtheorem{theorem}{Theorem}

\newtheorem{lemma}[theorem]{Lemma}

\renewcommand{\O}{\mathcal{O}}
\renewcommand{\P}{\mathcal{P}}
\newcommand{\tO}{\tilde{\mathcal{O}}}
\newcommand{\E}{\mathbb{E}}
\newcommand{\err}{\textup{\textsf{err}}}
\newtheorem{condition}[theorem]{Condition}

\newcommand{\comp}{\textup{\text{comp}}}
\newcommand{\lab}{\textup{\text{label}}}
\newcommand{\pad}{\vspace{-4mm}}
\newcommand{\spad}{\vspace{-2mm}}
\newcommand{\sspad}{\vspace{-1mm}}
\newcommand{\bC}{\mathbb{C}}
\usepackage[shortlabels]{enumitem}
\usepackage{mathtools}
\allowdisplaybreaks[1]

%

\algrenewcommand\algorithmicrequire{\textbf{Input:}}
\algrenewcommand\algorithmicensure{\textbf{Output:}}

\title{Noise-Tolerant Interactive Learning Using\\ Pairwise Comparisons}

%

\author{
	Yichong Xu\thanks{Carnegie Mellon University. Email: yichongx@cs.cmu.edu} \and
	Hongyang Zhang\thanks{Carnegie Mellon University. Email: hongyanz@cs.cmu.edu} \and
	Kyle Miller\thanks{Carnegie Mellon University. Email: mille856@andrew.cmu.edu} \and	
	Aarti Singh\thanks{Carnegie Mellon University. Email: aarti@cs.cmu.edu} \and
	Artur Dubrawski\thanks{Carnegie Mellon University. Email: awd@cs.cmu.edu} 
}

\begin{document}

\maketitle
\pad
\begin{abstract}
We study the problem of interactively learning a binary classifier using noisy labeling and pairwise comparison oracles, where the comparison oracle answers which one in the given two instances is more likely to be positive. Learning from such oracles has multiple applications where obtaining direct labels is harder but pairwise comparisons are easier, and the algorithm can leverage both types of oracles. In this paper, we attempt to characterize how the access to an easier comparison oracle helps in improving the label and total query complexity. We show that the comparison oracle reduces the learning problem to that of learning a threshold function. We then present an algorithm that interactively queries the label and comparison oracles and we characterize its query complexity under Tsybakov and adversarial noise conditions for the comparison and labeling oracles. Our lower bounds show that our label and total query complexity is almost optimal.
\end{abstract}
\pad
\section{Introduction}
\spad
Given high costs of obtaining labels for big datasets, interactive learning is gaining popularity in both practice and theory of machine learning. On the practical side, there has been an increasing interest in designing algorithms capable of engaging domain experts in two-way queries to facilitate more accurate and more effort-efficient learning systems (c.f. \cite{maji2014part,wah2014similarity}). On the theoretical side, study of interactive learning has led to significant advances such as exponential improvement of query complexity over passive learning under certain conditions (c.f. \cite{awasthi2016learning,awasthi2017power,balcan2006agnostic,castro2008minimax,hanneke2014theory,sabato2016interactive}). While most of these approaches to interactive learning fix the form of an oracle, e.g., the labeling oracle, and explore the best way of querying, recent work allows for multiple diverse forms of oracles~\cite{balcan2016noise,beygelzimer2016search,dekel2012selective,yang2009cost}.
The focus of this paper is on this latter setting, also known as active dual supervision \cite{attenberg2010unified}. We investigate how to recover a hypothesis $h$ that is a good approximator of the optimal classifier $h^*$, in terms of expected 0/1 error $\Pr_{X}[h(X)\ne h^*(X)]$, given limited access to labels on individual instances $X\in \mathcal{X}$ and pairwise comparisons about which one of two given instances is more likely to belong to the +1/-1 class.

Our study is motivated by important applications where comparisons are easier to obtain than labels, and the algorithm can leverage both types of oracles to improve label and total query complexity. 
For example, in material design, synthesizing materials for specific conditions requires expensive experimentation, but with an appropriate algorithm we can leverage expertize of material scientists, for whom it may be hard to accurately assess the resulting material properties, but who can quickly compare different input conditions and suggest which ones are more promising. 
Similarly, in clinical settings, precise assessment of each individual patient's health status can be difficult, expensive and/or risky (e.g.\ it may require application of invasive sensors or diagnostic surgeries), 
but comparing relative statuses of two patients at a time may be relatively easy and accurate. 
In both these scenarios we may have access to a modest amount of individually labeled data, but the bulk of more accessible training information is available via pairwise comparisons.
There are many other examples where humans find it easier to perform pairwise comparisons rather than providing direct labels, including content search~\cite{furnkranz2010preference}, image retrieval~\cite{wah2014similarity}, ranking~\cite{heckel2016active}, etc.

Despite many successful applications of comparison oracles, many fundamental questions remain. One of them is how to design \emph{noise-tolerant}, \emph{cost-efficient} algorithms that can approximate the unknown target hypothesis to arbitrary accuracy while having access to pairwise comparisons. On one hand, while there is theoretical analysis on the pairwise comparisons concerning the task of learning to rank \cite{ailon2007efficient,jamieson2011active}, estimating ordinal measurement models \cite{shah2014better} and learning combinatorial functions \cite{balcan2016learning}, much remains unknown how to extend these results to more generic hypothesis classes. On the other hand, although we have seen great progress on using single or multiple oracles with the same form of interaction  ~\cite{balcan2012robust,dekel2012selective}, classification using both comparison and labeling queries remains an interesting open problem.
Independently of our work, Kane et al.~\cite{kane2017active} concurrently analyzed a similar setting of learning to classify using both label and comparison queries. However, their algorithms work only in the noise-free setting.

\noindent{\textbf{Our Contributions:}}
Our work addresses the aforementioned issues by presenting a new algorithm, Active Data Generation with Adversarial Comparisons (ADGAC), which learns a classifier with both noisy labeling and noisy comparison oracles.
\spad
\begin{itemize}
\item We analyze ADGAC under Tsybakov (TNC) \cite{tsybakov2004optimal} and adversarial noise conditions for the labeling oracle, along with the adversarial noise condition for the comparison oracle. Our general framework can augment any active learning algorithm by replacing the batch sampling in these algorithms with ADGAC. Figure \ref{fig:flow} presents the work flow of our framework.
\item
We propose A$^2$-ADGAC algorithm, which can learn an arbitrary hypothesis class. The label complexity of the algorithm is as small as learning a threshold function under both TNC and adversarial noise condition, independently of the structure of the hypothesis class. The \emph{total query complexity} improves over previous best-known results under TNC which can only access the labeling oracle. 
\item
We derive Margin-ADGAC to learn the class of halfspaces. This algorithm has the same label and total query complexity as A$^2$-ADGAC, but is computationally efficient.
\item
We present lower bounds on total query complexity for any algorithm that can access both labeling and comparison oracles, and a noise tolerance lower bound for our algorithms. These lower bounds demonstrate that our analysis is nearly optimal.
\end{itemize}
\spad

\begin{figure}

\centering
\includegraphics[width=6in]{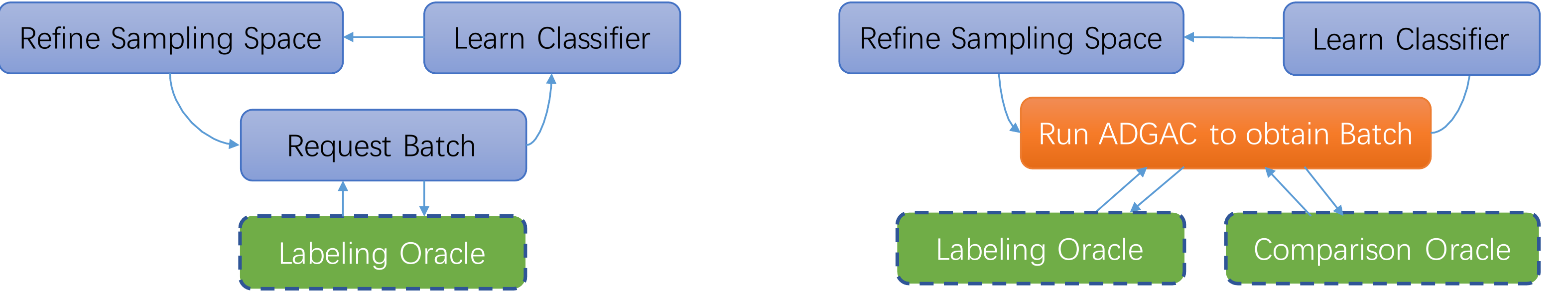}
\caption{Explanation of work flow of ADGAC-based algorithms. \textbf{Left:} Procedure of typical active learning algorithms. \textbf{Right:} Procedure of our proposed ADGAC-based interactive learning algorithm which has access to both pairwise comparison and labeling oracles.}\label{fig:flow}
\end{figure}


{\small \begin{table}
	\caption{Comparison of various methods for learning of generic hypothesis class (Omitting $\log(1/\varepsilon)$ factors).}
	\label{tab:genHyp}
	\centering
	\begin{tabular}{ccccc}
		\toprule
		Label Noise     & Work    & \# Label & \# Query & $\Tol_{\comp}$ \\
		\midrule
		Tsybakov ($\kappa$)  & \comment{Hanneke} \cite{hanneke2009adaptive} & $\tO\left(\left(\frac{1}{\varepsilon}\right)^{2\kappa-2}d\theta\right) $ & $\tO\left(\left(\frac{1}{\varepsilon}\right)^{2\kappa-2}d\theta\right) $ & N/A\\
		Tsybakov ($\kappa$)& \textbf{Ours} & $\tO\left(\left(\frac{1}{\varepsilon}\right)^{2\kappa-2}\right) $ & $\tO\left(\left(\frac{1}{\varepsilon}\right)^{2\kappa-2}\theta+d\theta\right)$ & $\O(\varepsilon^{2\kappa})$\\
		Adversarial ($\nu=\O(\varepsilon)$) & \comment{Hanneke} \cite{hanneke2014theory} & $\tO(d\theta)$ & $\tO(d\theta)$ & N/A \\
		Adversarial ($\nu=\O(\varepsilon)$)& \textbf{Ours} & $\tO(1)$ & $\tO(d\theta)$ & $\O(\varepsilon^2)$\\
		\bottomrule
	\end{tabular}
    \pad
    \spad
\end{table}}
An important quantity governing the performance of our algorithms is the adversarial noise level of comparisons: denote by $\Tol_{\comp}(\varepsilon,\delta,\mathcal{A})$ the adversarial noise tolerance level of comparisons that guarantees an algorithm $\mathcal{A}$ to achieve an error of $\varepsilon$, with probability at least $1-\delta$. Table \ref{tab:genHyp} compares our results with previous work in terms of label complexity, total query complexity, and $\Tol_{\comp}$ for generic hypothesis class $\bC$ with error $\varepsilon$. We see that our results significantly improve over prior work with the extra comparison oracle.
Denote by $d$ the VC-dimension of $\bC$ and $\theta$ the disagreement coefficient. 
We also compare the results in Table \ref{tab:learnhalf} for learning halfspaces under isotropic log-concave distributions.
In both cases, our algorithms enjoy small label complexity that is independent of $\theta$ and $d$. This is helpful when labels are very expensive to obtain. Our algorithms also enjoy better total query complexity under both TNC and adversarial noise condition for efficiently learning halfspaces.

\begin{table}
	\caption{Comparison of various methods for learning of halfspaces (Omitting $\log(1/\varepsilon)$ factors).}
	\label{tab:learnhalf}
	\centering
	\begin{tabular}{cccccc}
		\toprule
		Label Noise     & Work     & \# Label & \# Query  & $\Tol_{\comp}$& Efficient? \\
        \midrule
		Massart & \comment{Balcan et al.} \cite{balcan2007margin} & $\tO(d)$ & $\tO(d)$ & N/A& No\\
		Massart & \comment{Awasthi et al.} \cite{awasthi2016learning} & poly$(d)$ & poly$(d)$ & N/A& Yes\\
		Massart & \textbf{Ours} & $\tO(1)$ & $\tO(d)$ & $\O(\varepsilon^{2})$& Yes\\
		Tsybakov $(\kappa)$ & \comment{Hanneke} \cite{hanneke2014theory} & $\tO(\left(\frac{1}{\varepsilon}\right)^{2\kappa-2}d\theta)$ & $\tO(\left(\frac{1}{\varepsilon}\right)^{2\kappa-2}d\theta)$ & N/A& No\\
		Tsybakov $(\kappa)$& \textbf{Ours} & $\tO\left(\left(\frac{1}{\varepsilon}\right)^{2\kappa-2}\right)$ & $\tO\left(\left(\frac{1}{\varepsilon}\right)^{2\kappa-2}+d\right)$ & $\O(\varepsilon^{2\kappa})$& Yes \\
		Adversarial ($\nu=\O(\varepsilon)$) & \comment{Zhang et al.} \cite{zhang2014beyond} & $\tO(d)$ & $\tO(d)$ & N/A& No \\
		Adversarial ($\nu=\O(\varepsilon)$) & \comment{Awasthi et al.} \cite{awasthi2017power} & $\tO(d^2)$ & $\tO(d^2)$ & N/A& Yes \\
		Adversarial ($\nu=\O(\varepsilon)$) & \textbf{Ours} & $\tO(1)$ & $\tO(d)$ & $\O(\varepsilon^{2})$& Yes\\
		\bottomrule
	\end{tabular}
    \pad
    \spad
\end{table}


\spad
\section{Preliminaries}
\spad
\noindent{\textbf{Notations:}}
We study the problem of learning a classifier $h: \mathcal{X}\rightarrow \mathcal{Y}=\{-1,1\}$, where $\mathcal{X}$ and $\mathcal{Y}$ are the instance space and label space, respectively. Denote by $\P_{\mathcal{XY}}$ the distribution over $\mathcal{X}\times \mathcal{Y}$ and let $\P_\mathcal{X}$ be the marginal distribution over $\mathcal{X}$. A hypothesis class $\bC$ is a set of functions $h: \mathcal{X}\rightarrow \mathcal{Y}$. 
For any function $h$, define the error of $h$ under distribution $D$ over $\mathcal{X}\times \mathcal{Y}$ as $\err_D(h)=\Pr_{(X,Y)\sim D}[h(X)\ne Y]$. Let $\err(h)=\err_{P_{\mathcal{XY}}}(h)$. Suppose that $h^*\in \bC$ satisfies $\err(h^*)=\inf_{h\in \bC} \err(h)$. For simplicity, we assume that such an $h^*$ exists in class $\mathbb C$.

We apply the concept of disagreement coefficient from Hanneke \cite{hanneke2009adaptive} for generic hypothesis class in this paper. In particular, for any set $V\subseteq \bC$, we denote by $\DIS(V)=\{x\in \mathcal{X}:\exists h_1,h_2\in V, h_1(x)\ne h_2(x) \}$. The disagreement coefficient is defined as $\theta=\sup_{r>0} \frac{\Pr[\DIS(B(h^*,r))]}{r}$, where $B(h^*,r)=\{h\in \bC: \Pr_{X\sim\P_\mathcal{X}}[h(X)\ne h^*(X)]\leq r\}$.

\noindent{\textbf{Problem Setup:}
We analyze two kinds of noise conditions for the labeling oracle, namely, adversarial noise condition and Tsybakov noise condition (TNC). We formally define them as follows.
\begin{condition}[Adversarial Noise Condition for Labeling Oracle]\label{cond:labeladv}
Distribution $\P_{\mathcal{XY}}$ satisfies adversarial noise condition for labeling oracle with parameter $\nu\geq 0$, if $ \nu=\Pr_{(X,Y)\sim\P_{\mathcal{XY}}}[Y\ne h^*(X)]$.
\end{condition}

\spad
\begin{condition}[Tsybakov Noise Condition for Labeling Oracle]
	\label{cond:labeltsy}
	Distribution $\P_{\mathcal{XY}}$ satisfies Tsybakov noise condition for labeling oracle with parameters $\kappa\geq 1,\mu\geq 0$, if $\forall h\in \mathbb{C},\ \err(h)-\err(h^*)\geq \mu\Pr_{X\sim\P_{\mathcal{X}}}[h(X)\ne h^*(X)]^\kappa$.
    The special case of $\kappa=1$ is also called Massart noise condition.
\end{condition}
\spad
For TNC, we assume that the above-defined $h^*$ is the Bayes optimal classifier, i.e., $h^*(x)=\sign(\eta(x)-1/2)$~\cite{boucheron2005theory,hanneke2009adaptive},\footnote{The assumption that $h^*$ is Bayes optimal classifier can be relaxed if the approximation error of $h^*$ can be quantified under assumptions on the decision boundary (c.f. \cite{castro2008minimax}).} where $\eta(x)=\Pr[Y=1|X=x]$.
In the classic active learning scenario, the algorithm has access to an unlabeled pool drawn from $\P_{\mathcal{X}}$. The algorithm can then query the labeling oracle for any instance from the pool. The goal is to find an $h\in \bC$ such that the error $\Pr[h(X)\ne h^*(X)]\leq \varepsilon$. The labeling oracle has access to the input $x\in \mathcal{X}$, and outputs $y\in \{-1,1\}$ according to $\P_{\mathcal{XY}}$. In our setting, however, an extra comparison oracle is available. This oracle takes as input a pair of instances $(x,x')\in \mathcal{X}\times\mathcal{X}$, and returns a variable $Z(x,x')\in \{-1,1\}$, where $Z(x,x')=1$ indicates that $x$ is more likely to be positive, while $Z(x,x')=-1$ otherwise. In this paper, we discuss an adversarial noise condition for the comparison oracle. We discuss about dealing with TNC on the comparison oracle in appendix.

\begin{condition}[Adversarial Noise Condition for Comparison Oracle]\label{cond:compadv}
Distribution $\P_{\mathcal{XXZ}}$
satisfies adversarial noise with parameter $\nu'\geq 0$, if 
$\nu'=\Pr[Z(X,X')(h^*(X)-h^*(X'))<0]$. 
\end{condition}
\comment{
\spad
Note that the noise is defined only for labels $h^*(X)$ and $h^*(X')$; the oracle does not incur any error for $h^*(X)=h^*(X')$.
\spad}
\spad

For an interactive learning algorithm $\mathcal{A}$, given error $\varepsilon$ and failure probability $\delta$, let $\SC_{\comp}(\varepsilon,\delta,\mathcal{A})$ and $\SC_{\lab}(\varepsilon,\delta,\mathcal{A})$ be the comparison and label complexity, respectively. The query complexity of $\mathcal{A}$ is defined as the sum of label and comparison complexity. Similar to the definition of $\Tol_{\comp}(\varepsilon,\delta,\mathcal{A})$,  define $\Tol_{\lab}(\varepsilon,\delta,\mathcal{A})$ as the maximum $\nu$ such that algorithm $\mathcal{A}$ achieves an error of at most $\varepsilon$ with probability $1-\delta$. As a summary, $\mathcal{A}$ learns an $h$ such that $\Pr[h(X)\ne h^*(X)]\leq \varepsilon$ with probability $1-\delta$ using $\SC_{\comp}(\varepsilon,\delta,\mathcal{A})$ comparisons and $\SC_{\lab}(\varepsilon,\delta,\mathcal{A})$ labels, if $\nu\leq \Tol_{\lab}(\varepsilon,\delta,\mathcal{A})$ and $\nu'\leq \Tol_{\comp}(\varepsilon,\delta,\mathcal{A})$.
We omit the parameters of $\SC_{\comp},\SC_{\lab},\Tol_{\comp},\Tol_{\lab}$ if they are clear from the context. We use $\O(\cdot)$ to express sample complexity and noise tolerance, and $\tilde{O}(\cdot)$ to ignore the $\log(\cdot)$ terms. Table \ref{tab:notation} summarizes the main notations throughout the paper.

\begin{table}[t]
	\caption{Summary of notations.}
	\label{tab:notation}
	\centering
	\begin{tabular}{cc|cc}
		\toprule
		Notation     & Meaning  & Notation & Meaning\\
		\midrule
		$\bC$ & Hypothesis class & $\kappa$ & Tsybakov noise level (labeling)\\
        $X,\mathcal{X}$ & Instance \& Instance space & $\nu$ & Adversarial noise level (labeling) \\
        $Y,\mathcal{Y}$ & Label \& Label space & $\nu'$ & Adversarial noise level (comparison)\\
        $Z,\mathcal{Z}$ & Comparison \& Comparison space & $\err_D(h)$ & Error of $h$ on distribution $D$ \\
        $d$ & VC dimension of $\bC$ & $\SC_{\lab}$ & Label complexity \\
        $\theta$ & Disagreement coefficient & $\SC_{\comp}$ & Comparison complexity \\
        $h^*$ & Optimal classifier in $\bC$ & $\Tol_{\lab}$ & Noise tolerance (labeling)\\
        $g^*$ & Optimal scoring function &$\Tol_{\comp}$ & Noise tolerance (comparison)\\
		\bottomrule
	\end{tabular}
    \pad
    \spad
\end{table}

\spad
\section{Active Data Generation with Adversarial Comparisons (ADGAC)}
\spad
The hardness of learning from pairwise comparisons follows from the error of comparison oracle: the comparisons are \emph{noisy}, and can be \emph{asymmetric} and \emph{intransitive}, meaning that the human might give contradicting preferences like $x_1\preccurlyeq x_2 \preccurlyeq x_1$ or $x_1\preccurlyeq x_2\preccurlyeq x_3 \preccurlyeq x_1$ (here $\preccurlyeq$ is some preference).
This makes traditional methods, e.g., defining a function class $\{h: h(x)=Z(x,\hat{x}), \hat{x}\in \mathcal{X} \}$, fail, because such a class may have infinite VC dimension.

In this section, we propose a novel algorithm, ADGAC, to address this issue. Having access to both comparison and labeling oracles, ADGAC generates a labeled dataset by techniques inspired from group-based binary search. We show that ADGAC can be combined with any active learning procedure to obtain  interactive algorithms that can utilize both labeling and comparison oracles. We provide theoretical guarantees for ADGAC.

\spad
\subsection{Algorithm Description}
\spad
To illustrate ADGAC, we start with a general active learning framework in Algorithm~\ref{algo:alframe}. Many active learning algorithms can be adapted to this framework, such as A$^2$ \cite{balcan2006agnostic} and margin-based active algorithms~\cite{awasthi2017power,awasthi2016learning}. Here $U$ represents the querying space/disagreement region of the algorithm (i.e., we reject an instance $x$ if $x\not\in U$), and $V$ represents a version space consisting of potential classifiers. 
For example, A$^2$ algorithm can be adapted to Algorithm \ref{algo:alframe} straightforwardly by keeping $U$ as the sample space and $V$ as the version space. More concretely, A$^2$ algorithm \cite{balcan2006agnostic} for adversarial noise can be characterized by
\[U_0=\mathcal{X},\ V_0=\bC,\ f_V(U,V,W,i)=\{h:|W|\err_W(h)\leq n_i\varepsilon_i\},\ f_U(U,V,W,i)=\DIS(V), \]
where $\varepsilon_i$ and $n_i$ are parameters of the A$^2$ algorithm, and $\DIS(V)=\{x\in \mathcal{X}:\exists h_1,h_2\in V, h_1(x)\ne h_2(x) \}$ is the disagreement region of $V$.
Margin-based active learning \cite{awasthi2017power} can also be fitted into Algorithm \ref{algo:alframe} by taking $V$ as the halfspace that (approximately) minimizes the hinge loss, and $U$ as the region within the margin of that halfspace.

\begin{algorithm}
	\caption{Active Learning Framework}
	\label{algo:alframe}
	\begin{algorithmic}[1]		
		\Require{$\varepsilon,\delta$, a sequence of $n_i$, functions $f_U,f_V$.}
		\State Initialize $U\leftarrow U_0\subseteq \mathcal{X}, V\leftarrow V_0\subseteq \bC$.
		\For{$i=1,2,...,\log(1/\varepsilon)$}
		\State Sample unlabeled dataset $\tilde{S}$ of size $n_i$. Let $S\leftarrow \{x:x\in \tilde S, x\in U\}$.
        \State Request the labels of $x\in S$ and obtain $W\leftarrow \{(x_i,y_i):x_i\in S\}$. \label{step:sample}
		\State Update $V\leftarrow f_V(U,V,W,i)$, $U\leftarrow f_U(U,V,W,i)$.
		\EndFor
        \Ensure{Any classifier $\hat{h}\in V$.}
	\end{algorithmic}
\end{algorithm}

To efficiently apply the comparison oracle, we propose to replace step \ref{step:sample} in Algorithm \ref{algo:alframe} with a subroutine, ADGAC, that has access to both comparison and labeling oracles. Subroutine \ref{algo:adgac} describes ADGAC. It takes as input a dataset $S$ and a sampling number $k$. 
ADGAC first runs Quicksort algorithm on $S$ using feedback from comparison oracle, which is of form $Z(x,x')$. Given that the comparison oracle $Z(\cdot,\cdot)$ might be asymmetric w.r.t. its two arguments, i.e., $Z(x,x')$ may not equal to $Z(x',x)$, for each pair $(x_i,x_j)$, we randomly choose $(x_i,x_j)$ or $(x_j,x_i)$ as the input to $Z(\cdot,\cdot)$. After Quicksort, the algorithm divides the data into multiple groups of size $\alpha m=\varepsilon |\tilde{S}|$, and does group-based binary search by sampling $k$ labels from each group and determining the label of each group by majority vote.

For active learning algorithm $\mathcal{A}$, let $\mathcal{A}$-ADGAC be the algorithm of replacing step \ref{step:sample} with ADGAC using parameters $\left(S_i,n_i,\varepsilon_i,k_i\right)$, where $\varepsilon_i,k_i$ are chosen as additional parameters of the algorithm. We establish results for specific $\mathcal{A}$: A$^2$ and margin-based active learning in Sections \ref{sec:a2adgac} and \ref{sec:localadgac}, respectively.


\pad
\begin{algorithm}[t]
	\newalgname{Subroutine}
	\caption{Active Data Generation with Adversarial Comparison (ADGAC)}
	\label{algo:adgac}
	\begin{algorithmic}[1]		
		\Require{Dataset $S$ with $|S|=m$, $n$, $\varepsilon,k$.}
		\State $\alpha\leftarrow \frac{\varepsilon n}{2m}$.
		\State Define preference relation on $S$ according to $Z$. Run Quicksort on $S$ to rank elements in an increasing order. 
		Obtain a sorted list $S=(x_1,x_2,...,x_m)$.\label{step:quicksort}\label{step:rank}
		\State Divide $S$ into groups of size $\alpha m$: $S_i=\{x_{(i-1)\alpha m+1},...,x_{i\alpha m}\}, i=1,2,...,1/\alpha$ \label{step:divide}.
		\State $t_{\min}\leftarrow 1, t_{\max} \leftarrow 1/\alpha$.
		\While{$t_{\min}<t_{\max}$}\Comment{Do binary search}
		\State $t=(t_{\min}+t_{\max})/2$.
		\State Sample $k$ points uniformly without replacement from $S_t$ and obtain the labels $Y=\{y_1,...,y_k\}$.
        \State \textbf{If } $\sum_{i=1}^k y_i\geq 0$, \textbf{ then }$t_{\max}=t$; \textbf{ else } $t_{\min}=t+1$.
		\EndWhile
		\State For $t'>t$ and $x_i\in S_{t'}$, let $\hat{y}_i\leftarrow 1$. \label{step:havet}
		\State For $t'<t$ and $x_i\in S_{t'}$, let $\hat{y}_i\leftarrow -1$.
		\State For $x_i\in S_{t}$, let $\hat{y}_i$ be the majority of labeled points in $S_t$.
        \Ensure{Predicted labels $\hat{y}_1,\hat{y}_2,...,\hat{y}_m$.}
	\end{algorithmic}
\end{algorithm}

\subsection{Theoretical Analysis of ADGAC}

Before we combine ADGAC with active learning algorithms, we provide  theoretical results for ADGAC. By the algorithmic procedure, ADGAC reduces the problem of labeling the whole dataset $S$ to binary searching a threshold on the sorted list $S$. One can show that the conflicting instances cannot be too many within each group $S_i$, and thus binary search performs well in our algorithm. We also use results in \cite{ailon2007efficient} to give an error estimate of Quicksort. We have the following result based on the above arguments.
\begin{theorem}\label{thm:adgactsy}
	Suppose that Conditions \ref{cond:labeltsy} and \ref{cond:compadv} hold for $\kappa\geq 1, \nu'\geq 0$, and  $n=\Omega\left(\left(\frac{1}{\varepsilon}\right)^{2\kappa-1}\log(1/\delta) \right)$. Assume a set $\tilde{S}$ with $|\tilde{S}|=n$ is sampled i.i.d. from $\P_\mathcal{X}$ and $S\subseteq \tilde{S}$ is an arbitrary subset of $\tilde{S}$ with $|S|=m$.
	There exist absolute constants $C_1,C_2,C_3$ such that if we run Subroutine $\ref{algo:adgac}$ with $\varepsilon<C_1$, $\nu'\leq C_2\varepsilon^{2\kappa}\delta$, 
     $k=k^{(1)}(\varepsilon,\delta)\coloneqq C_3\log\left(\frac{\log(1/\varepsilon)}{\delta}\right)\left(\frac{1}{\varepsilon}\right)^{2\kappa-2}$,
it will output a labeling of $S$ such that $|\{x_i\in S:\hat{y}_i\ne h^*(x_i) \}|\leq \varepsilon n$, with probability at least $1-\delta$. The expected number of comparisons required is $\O(m\log m)$, and the number of sample-label pairs required is $\SC_{\lab}(\varepsilon,\delta)=\tilde{\O}\left(\log\left(\frac{m}{\varepsilon n}\right)\log(1/\delta)\left(\frac{1}{\varepsilon}\right)^{2\kappa-2}\right)$.
	
\end{theorem}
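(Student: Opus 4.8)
The plan is to analyse the three stages of the subroutine separately --- the Quicksort pass under the noisy comparator, the cutting of the sorted list into $1/\alpha=2m/(\varepsilon n)$ blocks of size $\alpha m=\varepsilon n/2$, and the block-level binary search with majority votes of $k$ noisy labels --- bound the number of mislabellings each stage is responsible for, and add them up. The two complexity claims are immediate: Quicksort on $m$ items makes $\O(m\log m)$ comparisons in expectation, and the binary search runs for $\lceil\log_2(1/\alpha)\rceil=\lceil\log_2(2m/(\varepsilon n))\rceil$ iterations, each drawing $k=k^{(1)}(\varepsilon,\delta)$ labels, so $\SC_{\lab}=k^{(1)}\lceil\log_2(2m/(\varepsilon n))\rceil=\tilde{\O}\!\left(\log\frac{m}{\varepsilon n}\log\frac1\delta\,(1/\varepsilon)^{2\kappa-2}\right)$.

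For the Quicksort pass, take the target ranking to be ``all $h^*$-negatives, then all $h^*$-positives'' (ties broken arbitrarily), and call an output pair $x_i$ before $x_j$ with $h^*(x_i)=+1$, $h^*(x_j)=-1$ a \emph{cross-inversion}; let $T$ be their number. The analysis of Quicksort with an arbitrary comparator in \cite{ailon2007efficient} bounds $\E[T]$ by a constant times the number of pairs in $S$ that $Z$ orders against the target; since $\tilde S$ is i.i.d.\ and $S\subseteq\tilde S$, Condition~\ref{cond:compadv} makes the expectation of that count at most $\nu'\binom n2$, so $\E[T]=\O(\nu' n^2)$. I will also use two elementary facts about a $\{\pm1\}$-string with $T$ cross-inversions: the best threshold error $\mathrm{OPT}$ --- the minimum over split points of (number of $+$'s on the low side) $+$ (number of $-$'s on the high side) --- is at most $T$, and any threshold whose block is at block-distance $\ge c_1(T/(\varepsilon n)+1)$ from the minimizing block has error at least $\mathrm{OPT}+T$; thus once $T=\O(\varepsilon n)$ all near-optimal thresholds lie in a window of $\O(1)$ blocks. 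Because $n=\Omega((1/\varepsilon)^{2\kappa-1}\log(1/\delta))$ and $\nu'\le C_2\varepsilon^{2\kappa}\delta$, Markov's inequality yields $T\le c_2\varepsilon n$ with probability $\ge1-\delta/4$ once $C_2$ is small enough.

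For the binary search, write $\bar\eta_j$ for the average of $\eta(\cdot)$ over block $S_j$; a majority vote of $k$ fresh labels from $S_j$ returns $\sign(\bar\eta_j-1/2)$ except with probability $e^{-\Omega(\gamma^2 k)}$ whenever $|\bar\eta_j-1/2|\ge\gamma$ (Hoeffding). Condition~\ref{cond:labeltsy} with $h^*$ Bayes-optimal gives the marginal low-noise bound $\Pr_X[\,|2\eta(X)-1|\le\gamma\,]=\O\bigl((\gamma/\mu)^{1/(\kappa-1)}\bigr)$ (identically $0$ for $\gamma<\mu$ when $\kappa=1$); choosing $\gamma=\Theta(\varepsilon^{\kappa-1})$ makes this mass $\O(\varepsilon)$, so a binomial tail bound (costing $\delta/4$) leaves at most $\O(\varepsilon n)$ points of $\tilde S$ in the $\gamma$-ambiguous zone, while the same $\gamma$ makes $k=k^{(1)}=\Theta\bigl(\gamma^{-2}\log(\log(1/\varepsilon)/\delta)\bigr)$ large enough that all $\O(\log(1/\varepsilon))$ probed votes are correct except with probability $\delta/2$. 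On the intersection of these events, any probed block far (in the window sense) from the transition region has all but its $\O(\varepsilon n)$ ambiguous points of one $h^*$-sign, hence $|\bar\eta_j-1/2|\ge\gamma$ with the correct sign and votes correctly, so the search ends at a block $t$ inside the $\O(1)$-block window. The output error is then (positives in blocks $<t$) $+$ (negatives in blocks $>t$) $+$ (miscounts in $S_t$) $\le\mathrm{OPT}+\O(T)+\alpha m+\O(\varepsilon n)\le\varepsilon n$ once $C_1,C_2,C_3$ are fixed appropriately; a union bound over the $\delta/4+\delta/4+\delta/2$ exceptional probabilities finishes the proof.

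The genuinely delicate step is the last one. The comparison noise is adversarial, so the $T$ cross-inversions --- hence the blocks in which a majority vote could be misleading --- may be placed arbitrarily, and one must argue that the block size $\alpha m=\varepsilon n/2$ is large enough, relative both to $T$ and to the $\O(\varepsilon n)$ Tsybakov-ambiguous points, that no such block ever sits far enough from the transition to divert the binary search, and that wherever it lands the accumulated error still fits in the $\varepsilon n$ budget. Making the absolute constants $C_1,C_2,C_3$ (and the hidden constant in the lower bound on $n$) mutually consistent across the Quicksort/Markov step, the Tsybakov concentration step, and the Hoeffding step is the fiddly part; by contrast the Quicksort error bound of \cite{ailon2007efficient} and the two complexity counts are routine.
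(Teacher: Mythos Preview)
Your high-level decomposition matches the paper's, and the two complexity counts are fine. But the Markov step is wrong, and once it goes the rest of the error accounting collapses.

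You write that from $\E[T]=\O(\nu' n^2)$, the bound $\nu'\le C_2\varepsilon^{2\kappa}\delta$, and $n=\Omega((1/\varepsilon)^{2\kappa-1}\log(1/\delta))$, Markov yields $T\le c_2\varepsilon n$ with probability $\ge 1-\delta/4$. Markov only gives $T\le\O(\nu' n^2/\delta)=\O(C_2\varepsilon^{2\kappa}n^2)$. The ratio of this to $\varepsilon n$ is $\Theta(\varepsilon^{2\kappa-1}n)$, which is bounded only when $n\asymp(1/\varepsilon)^{2\kappa-1}$; the theorem merely \emph{lower}-bounds $n$, so $n$ can be arbitrarily large and then $T\gg\varepsilon n$. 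With $T$ not $\O(\varepsilon n)$, your claims that ``all near-optimal thresholds lie in a window of $\O(1)$ blocks'' and that the final tally $\mathrm{OPT}+\O(T)+\alpha m+\O(\varepsilon n)\le\varepsilon n$ both fail.

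The paper does not try to make $T$ small in absolute terms. It accepts only $|T|<2\nu' n^2/\delta$ from Markov and works block-wise: letting $q(S_i)$ be the minority $h^*$-fraction in block $S_i$ and $\beta=\frac{2}{\varepsilon}\sqrt{\nu'/\delta}=\O(\varepsilon^{\kappa-1})$, it shows (i) at most one block can have $q(S_i)\ge\beta$, since two such blocks would force $\ge 2(\alpha\beta m)^2=2\nu' n^2/\delta$ cross-inversions; (ii) the block-majorities $\mu(S_i)$ are monotone in $i$; and (iii) the unique high-minority block, if any, coincides with the boundary block. For every other block the paper then bounds $\Pr[Y\ne\mu(S_i)\mid X\sim S_i]\le\tfrac12-\Theta(\varepsilon^{\kappa-1})$ by combining $q(S_i)<\beta$ with the Tsybakov margin (choosing $t=(\varepsilon/(16\tilde\mu))^{\kappa-1}$ and using Chernoff on $|\{x:|\eta(x)-1/2|\le t\}|$), which is exactly what makes $k=\Theta(\varepsilon^{2-2\kappa}\log(\log(1/\varepsilon)/\delta))$ suffice for Hoeffding. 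The point you are missing is that the right dimensionless quantity is $\beta$, not $T/(\varepsilon n)$, and $\beta$ does not depend on $n$.

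A smaller conflation: your sentence ``any probed block far from the transition region has all but its $\O(\varepsilon n)$ ambiguous points of one $h^*$-sign, hence $|\bar\eta_j-1/2|\ge\gamma$ with the correct sign'' mixes two different sets that must be controlled separately---the Tsybakov-ambiguous points $\{|\eta-1/2|\le t\}$ (bounded via the low-noise inequality plus Chernoff) and the $h^*$-minority points in the block (bounded via the cross-inversion counting). Having $\sign(\bar\eta_j-1/2)$ agree with the block majority $\mu(S_j)$ requires both bounds, not just the first.
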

\pad
Similarly, we analyze ADGAC under adversarial noise condition w.r.t. labeling oracle with $\nu=\O(\varepsilon)$.
\begin{theorem}\label{thm:adgacagn}
	Suppose that Conditions \ref{cond:labeladv} and \ref{cond:compadv} hold for $\nu,\nu'\geq 0$, and $n=\Omega\left(\frac{1}{\varepsilon}\log(1/\delta) \right)$. Assume a set $\tilde{S}$ with $|\tilde{S}|=n$ is sampled i.i.d. from $\P_\mathcal{X}$ and $S\subseteq \tilde{S}$ is an arbitrary subset of $\tilde{S}$ with $|S|=m$.
    There exist absolute constants $C_1,C_2,C_3,C_4$ such that if we run Subroutine $\ref{algo:adgac}$ with $\varepsilon<C_1$, $\nu'\leq C_2\varepsilon^{2}\delta$, 
    $k=k^{(2)}(\varepsilon,\delta)\coloneqq C_3\log\left(\frac{\log(1/\varepsilon)}{\delta}\right)$,
    and $\nu\leq C_4\varepsilon$, 
it will output a labeling of $S$ such that $|\{x_i\in S:\hat{y}_i\ne h^*(x_i) \}|\leq \varepsilon n$, with probability at least $1-\delta$.
    The expected number of comparisons required is $\O(m\log m)$, and the number of sample-label pairs required is $\SC_{\lab}(\varepsilon,\delta)=\O\left(\log\left(\frac{m}{\varepsilon n}\right)\log\left(\frac{\log(1/\varepsilon)}{\delta}\right)\right)$.
	
\end{theorem}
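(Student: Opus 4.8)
The plan is to use the two-phase structure of ADGAC: the QuickSort call turns $S$ into a list that is almost $h^*$-sorted, and the group-wise binary search then recovers a good threshold on that list. There are two noise budgets to control separately --- the comparison noise $\nu'$, which degrades the ranking, and the label noise $\nu$, which degrades the majority votes inside the search. The argument runs parallel to that of Theorem~\ref{thm:adgactsy} with $\kappa=1$, except that the label side is easier here: under Condition~\ref{cond:labeladv} the oracle errs on only a $\nu=\O(\varepsilon)$ fraction of the points, so each group is almost entirely correctly labelled.

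First I would control the ranking. Since $\tilde S$ is i.i.d.\ from $\P_{\mathcal X}$ and $S\subseteq\tilde S$ is an \emph{arbitrary} subset, the number of cross pairs in $S$ on which $Z$ disagrees with $h^*$ is at most that for $\tilde S$, whose expectation is $\le\nu'\binom n2$; inserting this into the QuickSort error estimate of \cite{ailon2007efficient} and then applying Markov's inequality with $\nu'\le C_2\varepsilon^2\delta$ yields, with probability $\ge1-\delta/3$, a sorted list $(x_1,\dots,x_m)$ whose number of ``$h^*$-discordant'' pairs (indices $i<j$ with $h^*(x_i)=+1$ and $h^*(x_j)=-1$) is at most a small constant fraction of $(\alpha m)^2=(\varepsilon n/2)^2$. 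From this I would extract the three structural facts used below, writing $e(t)$ for the number of $h^*$-positives lying in groups before $S_t$ plus the number of $h^*$-negatives in groups after $S_t$: (i) $\min_t e(t)\le\varepsilon n/8$ --- the threshold putting exactly the $h^*$-negatives in the prefix leaves $2j$ wrong-side points with $j^2$ at most the discordant-pair count, and the nearest group boundary inherits the bound; (ii) at most one of the $1/\alpha$ groups is ``non-pure'' (has more than $\alpha m/8$ points of each label), since two non-pure groups already force $(\alpha m/8)^2$ discordant pairs; (iii) the ``pure'' groups form a $(-1)$-majority prefix followed by a $(+1)$-majority suffix, because a $(+1)$-pure group preceding a $(-1)$-pure one forces $(7\alpha m/8)^2$ discordant pairs. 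Making this estimate quantitatively strong enough relative to the very small group size $\varepsilon n/2$ --- which is exactly what forces the tolerance $\Tol_{\comp}=\O(\varepsilon^2\delta)$ --- is the main obstacle; the rest is careful bookkeeping of the two noise budgets.

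Next, the labels and the search. Under Condition~\ref{cond:labeladv} with $\nu\le C_4\varepsilon$ and $n=\Omega(\varepsilon^{-1}\log(1/\delta))$ (so $\varepsilon n=\Omega(\log(1/\delta))$), a Chernoff bound shows that with probability $\ge1-\delta/3$ the total number of points of $\tilde S$ --- hence of $S$ --- whose oracle label differs from $h^*$ is below $\alpha m/8$ once $C_4$ is a small constant. Hence every group has fewer than $\alpha m/8$ mislabeled points, so for each pure group the majority oracle label coincides with the majority $h^*$-label, and by Hoeffding's inequality for sampling without replacement a draw of $k=k^{(2)}(\varepsilon,\delta)=C_3\log(\log(1/\varepsilon)/\delta)$ labels reproduces that majority except with probability $\delta/(3\lceil\log_2(1/\alpha)\rceil)$. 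Union-bounding over the $\le\lceil\log_2(1/\alpha)\rceil$ groups visited by the binary search, every one of its votes on a pure group is correct; combined with fact (iii) this forces the search to terminate at a group $\hat t$ lying at the $(-1)$/$(+1)$-pure boundary, up to the single non-pure group. Since $e(t)$ is strictly decreasing along the $(-1)$-pure prefix and strictly increasing along the $(+1)$-pure suffix, $\argmin_{t} e(t)$ sits at the same boundary, so $\hat t$ is within $\O(1)$ of $\argmin_{t} e(t)$ with every intermediate step of $e$ of size $\O(\alpha m)$.

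Finally I would sum the errors. The output mislabels exactly $e(\hat t)$ points outside $S_{\hat t}$, plus the $h^*$-minority of $S_{\hat t}$, which is below $\alpha m/2$ because $\hat t$ is at a majority boundary; and $e(\hat t)\le\min_t e(t)+\O(\alpha m)\le\varepsilon n/8+\O(\varepsilon n)$ by the previous paragraph and fact (i). Choosing $C_1$ small (so $\varepsilon$ is below the thresholds where all the $\O(1)$ factors are absorbed), $C_2$ small (so there is exactly one non-pure group, $\hat t$ lies within one group of $\argmin_{t} e(t)$, and the wrong-side count in (i) is $\le\varepsilon n/8$), and $C_4$ small, the total mislabeling is at most $\varepsilon n$. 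The degenerate cases $\hat t\in\{1,1/\alpha\}$, where the search never leaves an endpoint, are handled by noting that then every pure group shares one majority, and (i) forces the opposite $h^*$-class to have total size $\O(\varepsilon n)$, so the constant-label output is again within $\varepsilon n$. The comparison count is the standard $\O(m\log m)$ expectation for randomized QuickSort (the random orientation of each query leaves it unchanged), and the search performs $\lceil\log_2(1/\alpha)\rceil=\O(\log(m/(\varepsilon n)))$ rounds of $k$ labels, so $\SC_{\lab}(\varepsilon,\delta)=\O\big(\log(m/(\varepsilon n))\log(\log(1/\varepsilon)/\delta)\big)$, matching the statement.
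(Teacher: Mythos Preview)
Your proposal is correct and follows the same overall route as the paper: bound the number of anti-sorted pairs by Markov, deduce that at most one group is mixed and that the group $h^*$-majorities are monotone, control the labeling noise inside each group so that $k=\Theta(\log(\log(1/\varepsilon)/\delta))$ votes recover that majority, and then tally the boundary errors. The one real technical difference is how you handle the adversarial label noise. The paper does not pre-draw labels: it bounds the measure of $V=\{x:\Pr[Y\ne h^*(X)\mid X=x]>1/4\}$ by $4\nu$ via Markov, uses Chernoff to bound the number of sample points landing in $V$, and combines this with the $\beta$-purity of each group to obtain $\Pr[Y\ne\mu(S_i)\mid X\sim S_i]\le \tfrac12-\tfrac{1}{32}$, after which ordinary Hoeffding handles the $k$ fresh label draws. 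Your coupling argument---draw one label $Y_j$ for every $x_j\in\tilde S$ upfront, Chernoff the count of flips, then invoke Hoeffding for sampling without replacement on the now-fixed labels---is equally valid under Condition~\ref{cond:labeladv} and a bit more direct; the trade-off is that the coupling step must be made explicit, whereas the paper's route works verbatim even if the oracle is queried afresh each time. One small cosmetic slip: your fact~(i) is stated too sharply, since passing from the best unconstrained threshold to the nearest group boundary already costs up to $\alpha m=\varepsilon n/2$, so $\min_t e(t)$ is only $\O(\varepsilon n)$ rather than $\varepsilon n/8$; this does not affect the final conclusion, as you already absorb such terms into ``$\O(\varepsilon n)$'' in the last paragraph.
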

\pad

Theorems \ref{thm:adgactsy} and \ref{thm:adgacagn} show that ADGAC gives a labeling of dataset with arbitrary small error using label complexity \emph{independent} of the data size. Moreover, ADGAC is computationally efficient and distribution-free.
These nice properties of ADGAC lead to improved query complexity when we combine ADGAC with other active learning algorithms.


\pad
\section{A$^2$-ADGAC: Learning of Generic Hypothesis Class}
\label{sec:a2adgac}
\spad
In this section, we combine ADGAC with A$^2$ algorithm to learn a generic hypothesis class. We use the framework in Algorithm \ref{algo:alframe}:
let A$^2$-ADGAC be the algorithm that replaces step \ref{step:sample} in Algorithm \ref{algo:alframe} with ADGAC of parameters $\left(S, n_i, \varepsilon_i, k_i\right)$, where $n_i,\varepsilon_i,k_i$ are parameters to be specified later.
Under TNC, we have the following result.
\begin{theorem}\label{thm:a2adgactsy}
	Suppose that Conditions \ref{cond:labeltsy} and \ref{cond:compadv} hold, and $h^*(x)=\sign(\eta(x)-1/2)$.  There exist global constants $C_1,C_2$ such that if we run A$^2$-ADGAC with $\varepsilon<C_1,\delta$, $\nu'\leq \Tol_{\comp}(\varepsilon,\delta) = C_2\varepsilon^{2\kappa}\delta$, $\varepsilon_i=2^{-(i+2)}, n_i= \Omega\left(\frac{1}{\varepsilon_i}\left(d\log(1/\varepsilon)\right)+\left(\frac{1}{\varepsilon_i}\right)^{2\kappa-1}\log(1/\delta)\right)$, $ k_i=k^{(1)}\left(\varepsilon_i,\frac{\delta}{4\log(1/\varepsilon)}\right)$ with $k^{(1)}$ specified in Theorem \ref{thm:adgactsy}, with probability at least $1-\delta$, the algorithm will return a classifier $\hat{h}$ with $\Pr[\hat{h}(X)\ne h^*({X}) ]\leq \varepsilon$ with comparison and label complexity
    \spad
	\[\E[\SC_{\comp}]=\tilde{\O}\left(\theta\log^2\left(\frac{1}{\varepsilon}\right)\log (d\theta)\left(\left(d\log\left(\frac{1}{\varepsilon}\right)\right)+\left(\frac{1}{\varepsilon}\right)^{2\kappa-2}\log(1/\delta)\right)\right),\] 
    \spad
    \[\SC_{\lab}=\tilde{\O}\left(\log\left(\frac{1}{\varepsilon}\right)\log\left(\min\left\{\frac{1}{\varepsilon},\theta\right\}\right)\log(1/\delta)\left(\frac{1}{\varepsilon}\right)^{2\kappa-2}\right). \]
	The dependence on $\log^2(1/\varepsilon)$ in $\SC_{\comp}$ can be reduced to $\log(1/\varepsilon)$ under Massart noise.
\end{theorem}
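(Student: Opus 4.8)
The plan is to recognize that A$^2$-ADGAC is nothing but the standard disagreement-based A$^2$ algorithm in which the per-round batch of labels, rather than being queried directly, is manufactured by ADGAC from comparison queries plus a handful of labels. The proof then splits cleanly into two pieces: (i) Theorem~\ref{thm:adgactsy}, which certifies that at round $i$ the labeling $\hat y$ returned by Subroutine~\ref{algo:adgac} disagrees with $h^*$ on at most an $\varepsilon_i$-fraction of the round (i.e. on at most $\varepsilon_i n_i$ points, counted against the full round size $n_i$ rather than the in-region subset); and (ii) the usual A$^2$ version-space analysis, which — crucially — already tolerates exactly this amount of per-round label corruption because its retention threshold is itself of order $n_i\varepsilon_i$. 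So the only genuinely new content beyond textbook A$^2$ is the bookkeeping that connects these two pieces.

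First I would fix the notation of a run: at round $i$ draw a fresh i.i.d.\ sample $\tilde S_i\sim\P_{\mathcal X}$ of size $n_i$, set $S_i=\tilde S_i\cap U_{i-1}$ with $U_{i-1}=\DIS(V_{i-1})$, call ADGAC with parameters $(S_i,n_i,\varepsilon_i,k_i)$ to get labels $\hat y$, form $W_i=\{(x,\hat y_x):x\in S_i\}$, and update $V_i=\{h\in V_{i-1}: |W_i|\,\err_{W_i}(h)\le c\,n_i\varepsilon_i\}$ for an absolute constant $c\ge 1$. I would then condition on the intersection of three high-probability events, charging $\delta/4$ to each and taking per-round ADGAC confidence $\delta_i=\Theta(\delta/\log(1/\varepsilon))$: (a) ADGAC succeeds in every round; here one checks that the choice of $n_i$ meets the hypothesis $n_i=\Omega((1/\varepsilon_i)^{2\kappa-1}\log(1/\delta_i))$ of Theorem~\ref{thm:adgactsy} (its second summand), that $\varepsilon_i<C_1$, and that $\nu'\le C_2\varepsilon_i^{2\kappa}\delta_i$ holds simultaneously for all $i$ whenever $\nu'\le\Tol_{\comp}(\varepsilon,\delta)$, the binding constraint being the last round where $\varepsilon_i\asymp\varepsilon$; (b) uniform convergence of $\tfrac1{n_i}|\{x\in\tilde S_i:h(x)\ne h^*(x)\}|$ to $\Pr_X[h(X)\ne h^*(X)]$ over all $h\in\bC$, legitimate since $n_i=\Omega(\tfrac1{\varepsilon_i}d\log(1/\varepsilon))$ and $\tilde S_i$ is drawn after, hence independent of, $V_{i-1}$; (c) $|S_i|\le 2n_i\Pr[U_{i-1}]$ by a Chernoff bound.

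On this event I would prove by induction that $h^*\in V_i$ and $V_i\subseteq B(h^*,\varepsilon_i)$. For the first claim, ADGAC's guarantee gives $|W_i|\err_{W_i}(h^*)=|\{x\in S_i:\hat y_x\ne h^*(x)\}|\le\varepsilon_i n_i\le c\,n_i\varepsilon_i$, so $h^*$ survives. For the second, any $h\in V_i\subseteq V_{i-1}$ has $\{x:h(x)\ne h^*(x)\}\subseteq\DIS(V_{i-1})=U_{i-1}$ (both $h$ and $h^*$ lie in $V_{i-1}$), hence $\{x\in\tilde S_i:h(x)\ne h^*(x)\}=\{x\in S_i:h(x)\ne h^*(x)\}$; the triangle inequality then yields $|\{x\in S_i:h(x)\ne h^*(x)\}|\le|W_i|\err_{W_i}(h)+\varepsilon_i n_i\le (c+1)n_i\varepsilon_i$, and event (b) upgrades this to $\Pr_X[h(X)\ne h^*(X)]\le\varepsilon_i$ once the $O(1)$ slack in the constant is absorbed into the geometric schedule $\varepsilon_i=2^{-(i+2)}$ (equivalently, into the number of rounds). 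Monotonicity of $\DIS$ and the definition of the disagreement coefficient give $\Pr[\DIS(V_i)]\le\theta\varepsilon_i$, closing the induction; after $i^\star=\lceil\log(1/\varepsilon)\rceil$ rounds, $V_{i^\star}\ni h^*$ is nonempty and any $\hat h\in V_{i^\star}$ satisfies $\Pr_X[\hat h(X)\ne h^*(X)]\le\varepsilon$.

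The query bounds come from summing the per-round costs of Theorem~\ref{thm:adgactsy}. Round $i$ uses $\tO\!\big(\log(|S_i|/(\varepsilon_i n_i))\log(1/\delta_i)(1/\varepsilon_i)^{2\kappa-2}\big)$ labels; on event (c) with $V_{i-1}\subseteq B(h^*,\varepsilon_{i-1})$ one bounds $|S_i|\lesssim n_i\theta\varepsilon_i$ and also $|S_i|\le n_i$, so the binary-search depth is $O(\log\min\{1/\varepsilon,\theta\})$, and summing the geometric series $\sum_{i\le\log(1/\varepsilon)}(1/\varepsilon_i)^{2\kappa-2}$ — dominated by its last term when $\kappa>1$, contributing the extra $\log(1/\varepsilon)$ when $\kappa=1$ — gives the stated $\SC_{\lab}$. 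Round $i$ uses $\O(|S_i|\log|S_i|)$ comparisons in expectation; substituting $|S_i|\lesssim\theta\varepsilon_i n_i=\theta\big(d\log(1/\varepsilon)+(1/\varepsilon_i)^{2\kappa-2}\log(1/\delta)\big)$ and $\log|S_i|=\tO(\log(d\theta)+\log(1/\varepsilon))$ and summing over the $\log(1/\varepsilon)$ rounds yields $\E[\SC_{\comp}]$; under Massart noise $\varepsilon_i n_i$ is independent of $i$, which removes one $\log(1/\varepsilon)$ factor. I expect the main obstacle to be not any individual estimate but the careful orchestration of conditioning and union bounds: verifying that Theorem~\ref{thm:adgactsy} genuinely applies at round $i$ despite $S_i$ depending on the history (it does, since the theorem is stated for an \emph{arbitrary} subset of a \emph{freshly drawn} i.i.d.\ $\tilde S_i$), that the adversarially placed ADGAC errors are harmless because they are bounded in \emph{count} by $\varepsilon_i n_i$ while the A$^2$ threshold already scales as $n_i\varepsilon_i$, and that the constants $c$, $\varepsilon_i$, $n_i$, $\delta_i$, $C_1$, $C_2$ can be chosen consistently across all rounds.
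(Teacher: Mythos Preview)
Your proposal is correct and follows essentially the same route as the paper's proof: induction on rounds showing $h^*$ survives (via the ADGAC error bound $\le\varepsilon_i n_i$) and every surviving $h$ stays close to $h^*$ (triangle inequality on $S_i$, agreement outside $\DIS(V_{i-1})$, then uniform convergence on the fresh sample $\tilde S_i$ via the Hanneke-type VC lemma), followed by $|S_i|\lesssim n_i\theta\varepsilon_i$ from the disagreement coefficient and the same per-round cost summations. Your treatment is in fact a bit more explicit than the paper's about the conditioning, the union-bound budget, and the applicability of Theorem~\ref{thm:adgactsy} to adaptively chosen subsets of a fresh i.i.d.\ draw.
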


We can prove a similar result for adversarial noise condition.
\begin{theorem}\label{thm:a2adgacagn}
	Suppose that Conditions \ref{cond:labeladv} and \ref{cond:compadv} hold. There exist global constants $C_1,C_2,C_3$ such that if we run A$^2$-ADGAC with $\varepsilon<C_1,\delta$, $\nu'\leq \Tol_{\comp}(\varepsilon,\delta) =C_2\varepsilon^2\delta, \nu\leq \Tol_{\lab}(\varepsilon,\delta) = C_3\varepsilon$, $\varepsilon_i=2^{-(i+2)}, n_i= \tilde{\Omega}\left(\frac{1}{\varepsilon_i}d\log\left(\frac{1}{\varepsilon_i}\right)\log(1/\delta)\right),  k_i=k^{(2)}\left(\varepsilon_i,\frac{\delta}{4\log(1/\varepsilon)}\right)$ with $k^{(2)}$ specified in Theorem \ref{thm:adgacagn}, with probability at least $1-\delta$, the algorithm will return a classifier $\hat{h}$ with $\Pr[\hat{h}(X)\ne h^*({X}) ]\leq \varepsilon$ with comparison and label complexity
\spad
\[\E[\SC_{\comp}]=\tilde{\O}\left(\theta d \log(\theta d) \log\left(\frac{1}{\varepsilon_i}\right)\log(1/\delta)\right),\]
    \spad
    \[\SC_{\lab}=\tilde{\O}\left(\log\left(\frac{1}{\varepsilon}\right)\log\left(\min\left\{\frac{1}{\varepsilon},\theta\right\}\right)\log(1/\delta)\right). \]
\end{theorem}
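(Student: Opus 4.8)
The plan is to instantiate the generic Active Learning Framework (Algorithm \ref{algo:alframe}) with the A$^2$ choices for $U_0, V_0, f_U, f_V$ already spelled out in the text, with step \ref{step:sample} replaced by ADGAC, and then run the standard A$^2$ disagreement-based analysis on top of the error guarantee of Theorem \ref{thm:adgacagn}. First I would recall the A$^2$ invariant: maintaining with high probability that $h^*\in V$ at every round $i$, and that after round $i$ the version space $V$ satisfies $\sup_{h\in V}\Pr[h(X)\ne h^*(X)]\le \varepsilon_i=2^{-(i+2)}$; after $\log(1/\varepsilon)$ rounds this yields the claimed accuracy $\varepsilon$. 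The key modification is that in round $i$ we do not get true labels on $S$: instead ADGAC returns a labeling $\hat y_i$ of $S$ with at most $\varepsilon_i n_i$ errors relative to $h^*$, provided its hypotheses hold. So I would check those hypotheses: $n_i=\tilde\Omega((1/\varepsilon_i)d\log(1/\varepsilon_i)\log(1/\delta))=\Omega((1/\varepsilon_i)\log(1/\delta))$ as Theorem \ref{thm:adgacagn} requires; the comparison-noise bound $\nu'\le C_2\varepsilon^2\delta$ is stronger than the per-round requirement $\nu'\le C_2\varepsilon_i^2(\delta/4\log(1/\varepsilon))$ once constants are matched (since $\varepsilon\le\varepsilon_i$ and we split the failure probability over the $\log(1/\varepsilon)$ rounds); and the labeling-noise bound $\nu\le C_3\varepsilon$ similarly covers $\nu\le C_4\varepsilon_i$. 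Taking $k_i=k^{(2)}(\varepsilon_i,\delta/4\log(1/\varepsilon))$ and a union bound over rounds gives that all $\log(1/\varepsilon)$ invocations of ADGAC succeed with probability $\ge 1-\delta/2$.

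Next I would feed the ADGAC output into the A$^2$ update. The point is that $\err_W(h)$ computed on the ADGAC-labeled sample $W=\{(x,\hat y)\}$ differs from $\err_W(h)$ on the true labels by at most the fraction of corrupted points, $\varepsilon_i$; combined with the adversarial label noise $\nu\le C_3\varepsilon_i$ this behaves exactly like running A$^2$ against an agnostic oracle with noise rate $O(\varepsilon_i)$ inside the current disagreement region, which is precisely the regime in which A$^2$ (or the Hanneke-style analysis in \cite{hanneke2014theory}) contracts the version space by a constant factor while keeping $h^*$ feasible. Since $U=\DIS(V)$ and the disagreement coefficient is $\theta$, the number of sampled points actually falling in $U$ in round $i$ is $O(\theta\,\varepsilon_i n_i)=\tilde O(\theta d\log(1/\varepsilon)\log(1/\delta))$ in expectation (the sampled pool has size $n_i$, but only a $\Pr[\DIS(B(h^*,\varepsilon_i))]\le\theta\varepsilon_i$ fraction survives the filter $x\in U$). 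Here I would be a little careful about the standard subtlety that the analysis needs $|S|=m_i=\Theta(\theta\varepsilon_i n_i)$ up to logs with high probability, which follows by a Chernoff bound on a Binomial$(n_i,\le\theta\varepsilon_i)$; I would also note the boundary case where $\theta$ is large, in which $\min\{1/\varepsilon,\theta\}$ takes over because the effective group count $1/\alpha_i=m_i/(\varepsilon_i n_i)$ and hence the binary-search depth is $O(\log\min\{1/\varepsilon,\theta\})$.

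For the complexity bookkeeping: by Theorem \ref{thm:adgacagn} each ADGAC call on $|S|=m_i$ points uses an expected $O(m_i\log m_i)$ comparisons and $O(\log(m_i/(\varepsilon_i n_i))\log(\log(1/\varepsilon)/\delta))$ labels. Summing the comparisons over $i=1,\dots,\log(1/\varepsilon)$ with $m_i=\tilde O(\theta d\log(1/\delta))$ per round gives $\E[\SC_{\comp}]=\tilde O(\theta d\log(\theta d)\log(1/\varepsilon_i)\log(1/\delta))$ as stated (the $\log(\theta d)$ coming from $\log m_i$). Summing the labels over rounds, with the per-round binary-search depth bounded by $\log\min\{1/\varepsilon,\theta\}$ and an extra $\log(1/\varepsilon)$ rounds, gives $\SC_{\lab}=\tilde O(\log(1/\varepsilon)\log\min\{1/\varepsilon,\theta\}\log(1/\delta))$. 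The main obstacle I anticipate is not any single calculation but rather cleanly decoupling the two error sources — the $\varepsilon_i$-fraction of ADGAC mislabelings and the $\nu\le C_3\varepsilon_i$ adversarial label noise — inside the A$^2$ version-space contraction argument, i.e.\ verifying that the threshold $f_V(U,V,W,i)=\{h:|W|\err_W(h)\le n_i\varepsilon_i\}$ (with $n_i$ replaced by $|W|$ and $\varepsilon_i$ rescaled by the appropriate constant) still both retains $h^*$ and excludes every $h$ with $\Pr[h\ne h^*]>\varepsilon_i/2$; this requires a uniform-convergence (VC) bound over $V$ on the clean sample, then transferring it through the two bounded perturbations, and getting the constants $C_1,C_2,C_3$ to line up so that the recursion closes.
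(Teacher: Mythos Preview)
Your proposal is essentially correct and follows the same strategy as the paper: invoke Theorem~\ref{thm:adgacagn} per round to get that ADGAC's labels differ from $h^*$ on at most $\varepsilon_i n_i$ points, show $h^*$ survives the filter $\{h:|W|\err_W(h)\le n_i\varepsilon_i\}$, use the disagreement coefficient plus Chernoff to bound $m_i=|S|=\tilde\O(\theta d\log(1/\delta))$, and sum the per-round complexities of ADGAC.

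One simplification you are missing: the ``obstacle'' you anticipate at the end---decoupling the ADGAC mislabelings from the adversarial label noise $\nu$ inside the version-space contraction---does not actually arise. The label noise $\nu$ enters \emph{only} through the hypothesis of Theorem~\ref{thm:adgacagn}; once that theorem fires, the labels $\hat y$ satisfy $|\{x\in S:\hat y\ne h^*(x)\}|\le \varepsilon_i n_i$ and the true labels $Y$ play no further role. The contraction step is then purely a comparison to $h^*$: since $|W|\err_W(h^*)\le \varepsilon_i n_i$, $h^*$ is retained; and for any $h\in V$ after filtering, the triangle inequality on the empirical set gives
\[
|\{x\in S:h(x)\ne h^*(x)\}|\le |W|\err_W(h)+|W|\err_W(h^*)\le 2\varepsilon_i n_i,
\]
hence $\Pr_{X\sim \tilde S}[h(X)\ne h^*(X)]\le 2\varepsilon_i$ (recall $h$ and $h^*$ agree on $\tilde S\setminus S$). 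A single application of the uniform-convergence Lemma~\ref{lemma:full} with $U(n_i,\delta/4\log(1/\varepsilon))\le \varepsilon_i$ then yields $\Pr_{X\sim\P_\mathcal{X}}[h(X)\ne h^*(X)]\le 4\varepsilon_i$, closing the induction without any separate accounting for $\nu$.
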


Theorems \ref{thm:a2adgactsy} and \ref{thm:a2adgacagn} show that having access to even a biased comparison function can reduce the problem of learning a classifier in high-dimensional space to that of learning a threshold classifier in one-dimensional space as the label complexity matches that of actively learning a threshold classifier. 
Given the fact that comparisons are usually easier to obtain, A$^2$-ADGAC will save a lot in practice due to its small label complexity.
More importantly, we improve the total query complexity under TNC by separating the dependence on $d$ and $\varepsilon$; The query complexity is now the sum of the two terms instead of the product of them. This observation shows the power of pairwise comparisons for learning classifiers. Such small label/query complexity is impossible without access to a comparison oracle, since query complexity with only labeling oracle is at least $\Omega\left(d\left(\frac{1}{\varepsilon}\right)^{2\kappa-2}\right)$ and $\Omega\left(d\log\left(\frac{1}{\varepsilon}\right)\right)$ under TNC and adversarial noise conditions, respectively \cite{hanneke2014theory}. Our results also matches the lower bound of learning with labeling and comparison oracles up to log factors (see Section \ref{sec:lowerbound}).

We note that Theorems \ref{thm:a2adgactsy} and \ref{thm:a2adgacagn} require rather small $\Tol_{\comp}$, equal to $\O(\varepsilon^{2\kappa}\delta)$ and $\O(\varepsilon^{2}\delta)$, respectively. We will show in Section \ref{sec:upperboundnup} 
that it is necessary to require $\Tol_{\comp}=\O(\varepsilon^{2})$ in order to obtain a classifier of error $\varepsilon$, if we restrict the use of labeling oracle to only learning a threshold function. Such restriction is able to reach the near-optimal label complexity as specified in Theorems \ref{thm:a2adgactsy} and \ref{thm:a2adgacagn}.

\pad

\section{Margin-ADGAC: Learning of Halfspaces}
\label{sec:localadgac}
\spad
In this section, we combine ADGAC with margin-based active learning  \cite{awasthi2017power} to efficiently learn the class of halfspaces. Before proceeding, we first mention a naive idea of utilizing comparisons: we can i.i.d. sample pairs $(x_1,x_2)$ from $\P_\mathcal{X}\times \P_\mathcal{X}$, and use $Z(x_1,x_2)$ as the label of $x_1-x_2$, where $Z$ is the feedback from comparison oracle. However, this method cannot work well in our setting without additional assumption on the noise condition for the labeling $Z(x_1,x_2)$.


Before proceeding, we assume that $\P_{\mathcal{X}}$ is isotropic log-concave on $\mathbb{R}^d$; i.e., $\P_{\mathcal{X}}$ has mean 0, covariance $I$ and the logarithm of its density function is a concave function~\cite{awasthi2016learning,awasthi2017power}. The hypothesis class of halfspaces can be represented as $\bC=\{h:h(x)=\sign(w\cdot x), w\in \mathbb{R}^d \}$. Denote by $h^*(x)=\sign(w^*\cdot x)$ for some $w^*\in \mathbb{R}^d$. Define $l_{\tau}(w,x,y)=\max\left(1-y(w\cdot x)/\tau,0\right)$ and $l_\tau(w,W)=\frac{1}{|W|}\sum_{(x,y)\in W} l_\tau(w,x,y)$ as the hinge loss. The expected hinge loss of $w$ is $L_\tau(w,D)=\E_{x\sim D}[l_\tau(w,x,\sign(w^*\cdot x))]$. 


Margin-based active learning~ \cite{awasthi2017power} is a concrete example of Algorithm \ref{algo:alframe} by taking $V$ as (a singleton set of) the hinge loss minimizer, while taking $U$ as the margin region around that minimizer. More concretely, take $U_0=\mathcal{X}$ and $V_0=\{w_0\}$ for some $w_0$ such that $\theta(w_0,w^*)\leq \pi/2$. The algorithm works with constants $M\geq 2,\kappa<1/2$ and a set of parameters $r_i,\tau_i,b_i,z_i$ that equal to $\Theta(M^{-i})$. $V$ always contains a single hypothesis. Suppose $V=\{w_{i-1}\}$ in iteration $i-1$.  Let $v_i$ satisfies $l_{\tau_i}(v_i,W)\leq \min_{v:\|v-w_{i-1}\|_2\leq r_i, \|v\|_2\leq 1} l_{\tau_i}(v,W)+\kappa/8$, where $w_{i}$ is the content of $V$ in iteration $i$. We also have $f_V(V,W,i)=\{w_i\}=\left\{\frac{v_i}{\|v_i\|_2} \right\}$ and $f_U(U,V,W,i)=\{x: |w_i\cdot x|\leq b_i \}$.

Let Margin-ADGAC be the algorithm obtained by replacing the sampling step in margin-based active learning with ADGAC using parameters $\left(S, n_i, \varepsilon_i, k_i\right)$, where $n_i, \varepsilon_i, k_i$ are additional parameters to be specified later. We have the following results under TNC and adversarial noise conditions, respectively.
\begin{theorem}\label{thm:localadgactsy}
	Suppose that Conditions \ref{cond:labeltsy} and \ref{cond:compadv} hold, and $h^*(x)=\sign(w^*\cdot x)=\sign(\eta(x)-1/2)$. There are settings of $M, \kappa, r_i,\tau_i,b_i, \varepsilon_i,k_i$, and constants $C_1, C_2$ such that for all $\varepsilon\leq C_1, \nu'\leq \Tol_{\comp}(\varepsilon,\delta)= C_2\varepsilon^{2\kappa}\delta$, if we run Margin-ADGAC with $w_0$ such that $\theta(w_0,w^*)\leq \pi/2$, and  $n_i=\tilde{O}\left(\frac{1}{\varepsilon_i}d\log^3(dk/\delta)+\left(\frac{1}{\varepsilon}\right)^{2\kappa-1}\log(1/\delta)\right)$, it finds $\hat{w}$ such that $\Pr[\sign(\hat{w}\cdot X)\ne \sign(w^*\cdot X)]\leq \varepsilon$ with probability at least $1-\delta$. The comparison and label complexity are
	\[\E[\SC_{\comp}]=\tilde{\O}\left(\log^2(1/\varepsilon)\left(d\log^4(d/\delta)+\left(\frac{1}{\varepsilon}\right)^{2\kappa-2}\log(1/\delta)\right)\right),\]\spad
	\[\SC_{\lab}=\tilde{\O}\left(\log(1/\varepsilon)\log(1/\delta)\left(\frac{1}{\varepsilon}\right)^{2\kappa-2}\right).\]	
The dependence on $\log^2(1/\varepsilon)$ in $\SC_{\comp}$ can be reduced to $\log(1/\varepsilon)$ under Massart noise.
\end{theorem}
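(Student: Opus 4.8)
The plan is to take the margin-based active learning algorithm of \cite{awasthi2017power} as the instance of Algorithm~\ref{algo:alframe}, and to replace its label-query step (line~\ref{step:sample}) by the ADGAC subroutine. Recall the structure of the margin-based analysis: starting from $w_0$ with $\theta(w_0,w^*)\le\pi/2$, at round $i$ one draws $\tilde S_i\sim\P_{\mathcal X}$ of size $n_i$, keeps the band sample $S_i=\{x\in\tilde S_i:|w_{i-1}\cdot x|\le b_{i-1}\}$, queries labels on $S_i$, and sets $w_i=v_i/\|v_i\|_2$, where $v_i$ minimizes the empirical hinge loss $l_{\tau_i}(\cdot,W_i)$ over $\{v:\|v-w_{i-1}\|_2\le r_i,\ \|v\|_2\le 1\}$ up to a small additive constant; with $r_i,\tau_i,b_i=\Theta(M^{-i})$, the core induction step — which crucially uses isotropic log-concavity of $\P_{\mathcal X}$ — shows that $\theta(w_{i-1},w^*)\le r_i$ together with a sufficiently small hinge loss of $w^*$ on $S_i$ forces $\theta(w_i,w^*)\le r_{i+1}$, so that after $\log(1/\varepsilon)$ rounds $\theta(\hat w,w^*)$, and hence $\Pr[\sign(\hat w\cdot X)\ne\sign(w^*\cdot X)]$, is at most $\varepsilon$.

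First I would set, at round $i$, the ADGAC accuracy $\varepsilon_i=c'b_{i-1}$ for a sufficiently small absolute constant $c'$, a per-round failure probability $\delta_i$ with $\sum_i\delta_i\le\delta/4$ (allocating more of the budget to the later, higher-accuracy rounds, e.g. $\delta_i\propto\varepsilon_i^{-2\kappa}$ so that $\varepsilon_i^{2\kappa}\delta_i=\Theta(\varepsilon^{2\kappa}\delta)$ uniformly in $i$), $k_i=k^{(1)}(\varepsilon_i,\delta_i)$, and $n_i=\tilde\Omega\!\big(\tfrac{d}{\varepsilon_i}\log^3(dk_i/\delta)+(1/\varepsilon_i)^{2\kappa-1}\log(1/\delta_i)\big)$, which is simultaneously large enough to satisfy the hypothesis of Theorem~\ref{thm:adgactsy} at round $i$ and to make the uniform-convergence bounds over halfspaces (VC dimension $d$) used by margin-based learning hold. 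Since $S_i$ is determined by $w_{i-1}$ alone, it is an ``arbitrary subset'' of $\tilde S_i$ in the sense of Theorem~\ref{thm:adgactsy}; applying that theorem with $\nu'\le C_2\varepsilon_i^{2\kappa}\delta_i$ yields a labelling $\widehat W_i$ of $S_i$ with at most $\varepsilon_i n_i$ disagreements with $h^*$, at the cost of $\O(|S_i|\log|S_i|)$ comparisons and $\tilde\O\!\big(\log(|S_i|/(\varepsilon_i n_i))\,\log(1/\delta_i)(1/\varepsilon_i)^{2\kappa-2}\big)$ labels. The binding constraint on $\nu'$ is the last round, $\varepsilon_i=\Theta(\varepsilon)$, giving $\Tol_{\comp}(\varepsilon,\delta)=\Theta(\varepsilon^{2\kappa}\delta)$.

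The crux — and the step I expect to be the main obstacle — is to re-derive the margin-based induction step using $\widehat W_i$ in place of the true labels. By log-concavity $\P_{\mathcal X}(|w_{i-1}\cdot x|\le b_{i-1})=\Theta(b_{i-1})$, so $|S_i|=\Theta(n_i b_{i-1})$ and the ADGAC errors occupy only a fraction $\rho_i=\varepsilon_i n_i/|S_i|=\Theta(\varepsilon_i/b_{i-1})=\Theta(c')$ of $S_i$; each flipped point in the band shifts $l_{\tau_i}$ by at most $O(1+b_{i-1}/\tau_i)=O(1)$ (because $b_{i-1}\asymp\tau_i$ and, by log-concavity, $|w^*\cdot x|\lesssim b_{i-1}$ off an exponentially small part of the band), so the empirical hinge loss of $w^*$ on $\widehat W_i$ exceeds its value under the true labels by only $O(c')$. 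Choosing $c'$ small enough keeps this extra term, together with the optimization slack and the uniform-convergence error, below the threshold at which the log-concave hinge-loss-to-angle argument of \cite{awasthi2017power} still certifies $\theta(w_i,w^*)\le r_{i+1}$; concretely this means carrying an additional $O(c')$ adversarial-label-noise term through that argument and re-tuning the constants $(M,\kappa,r_i,\tau_i,b_i)$ — the other rounds and the outside-band disagreement are handled verbatim as in \cite{awasthi2017power}. A union bound over the $\log(1/\varepsilon)$ rounds (each failing with probability $\le\delta_i$) then yields the correctness claim.

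Finally I would sum the per-round costs over $i=1,\dots,\log(1/\varepsilon)$. Since $|S_i|=\Theta(n_i b_{i-1})=\tilde\Theta\!\big(d\log^3(d/\delta)+(1/\varepsilon_i)^{2\kappa-2}\log(1/\delta_i)\big)$ and $|S_i|/(\varepsilon_i n_i)=\Theta(1/c')=O(1)$, the comparison cost $\sum_i\O(|S_i|\log|S_i|)$ sums (the $(1/\varepsilon_i)^{2\kappa-2}$ part forming a geometric series dominated by the last round when $\kappa>1$) to $\E[\SC_{\comp}]=\tilde\O\!\big(\log^2(1/\varepsilon)\,(d\log^4(d/\delta)+(1/\varepsilon)^{2\kappa-2}\log(1/\delta))\big)$, and the label cost $\sum_i\tilde\O\!\big(\log(1/\delta_i)(1/\varepsilon_i)^{2\kappa-2}\big)$ sums to $\SC_{\lab}=\tilde\O\!\big(\log(1/\varepsilon)\log(1/\delta)(1/\varepsilon)^{2\kappa-2}\big)$. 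Under Massart noise ($\kappa=1$) the factor $(1/\varepsilon_i)^{2\kappa-2}$ equals $1$, so $|S_i|=\tilde\Theta(d\log^3(d/\delta))$ carries no $1/\varepsilon$ dependence and $\log|S_i|=O(\mathrm{polylog}(d/\delta))$ rather than $O(\log(1/\varepsilon))$; this removes one $\log(1/\varepsilon)$ factor from $\sum_i\O(|S_i|\log|S_i|)$, giving the stated improvement of $\log^2(1/\varepsilon)$ to $\log(1/\varepsilon)$ in $\E[\SC_{\comp}]$.
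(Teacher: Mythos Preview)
Your overall strategy matches the paper's: run margin-based active learning \cite{awasthi2017power}, replace the label step by ADGAC with $\varepsilon_i=\Theta(b_{i-1})$, show the ADGAC mislabels perturb the empirical hinge loss by at most a small constant, and then carry the hinge-loss-to-angle induction through. The complexity accounting and the Massart improvement are also done the same way.

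There is, however, a genuine gap in your ``crux'' step. You claim each flipped point changes $l_{\tau_i}$ by $O(1+b_{i-1}/\tau_i)=O(1)$, justified by $|w^*\cdot x|\lesssim b_{i-1}$ in the band. But the shift when flipping the label of $x$ is $2|w\cdot x|/\tau_i$ for the $w$ whose loss you are evaluating, and you need the bound for \emph{both} $w^*$ and the data-dependent minimizer $v_i$. For a generic $w\in B(w_{i-1},r_i)$ one only has $|w\cdot x|\le b_{i-1}+r_i\|x\|$, and $\|x\|=\Theta(\sqrt d)$ under isotropic log-concavity, so a naive per-flip bound is $O(\sqrt d)$, not $O(1)$; moreover the ADGAC errors are not random and may land exactly on the points where $|w\cdot x|$ is largest. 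The paper (Lemma~\ref{lem:boundl}) does \emph{not} use a per-point bound: it upper-bounds $\sum_{x\in N}|w\cdot x|$ by the sum over the worst $|N|$ points, then applies Cauchy--Schwarz together with the log-concave second-moment estimate $\E_{D_{w_{i-1},b_{i-1}}}[(w\cdot X)^2]\le c_4(r_i^2+b_{i-1}^2)$ (Lemma~\ref{lem:ilc}\ref{item:boundsquare}), and finally McDiarmid to concentrate. This is what keeps the perturbation dimension-free and justifies $\varepsilon_i=\Theta(b_{i-1})$.

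A second omission is the uniform-convergence step for the hinge loss itself. You cite ``VC dimension $d$'' bounds, but hinge loss on the band is unbounded, and the paper explicitly develops a Rademacher argument (Lemma~\ref{lem:boundE}) exploiting $\|w-w_{i-1}\|\le r_i$ to get $R_n=\tilde O(\sqrt{d/n})$; this is singled out in the paper as one of the two new technical ingredients needed to obtain the stated $d\log^4(d/\delta)$ dependence. Without it your $n_i$ setting is not justified.
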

\begin{theorem}\label{thm:localadgacagn}
	Suppose that Conditions \ref{cond:labeladv} and \ref{cond:compadv} hold. There are settings of $M, \kappa, r_i,\tau_i,b_i, \varepsilon_i,k_i$, and constants $C_1, C_2, C_3$ such that for all $\varepsilon\leq C_1, \nu'\leq \Tol_{\comp}(\varepsilon,\delta)= C_2\varepsilon^{2\kappa}\delta, \nu\leq \Tol_{\comp}(\varepsilon,\delta)= C_3\varepsilon$, if we run Margin-ADGAC with $n_i=\tilde{\O}\left(\frac{1}{\varepsilon_i}d\log^3(dk/\delta)\right)$ and $w_0$ such that $\theta(w_0,w^*)\leq \pi/2$, it finds $\hat{w}$ such that $\Pr[\sign(\hat{w}\cdot X)\ne \sign(w^*\cdot X)]\leq \varepsilon$ with probability at least $1-\delta$. The comparison and label complexity are
     \sspad
	\[\E[\SC_{\comp}]=\tilde{\O}\left(\log(1/\varepsilon)\left(d\log^4(d/\delta)\right)\right),\quad\SC_{\lab}=\tilde{\O}\left(\log(1/\varepsilon)\log(1/\delta)\right).\]
\end{theorem}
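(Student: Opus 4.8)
The plan is to apply the generic ``base active learner $+$ ADGAC'' reduction with the base learner being the margin-based active learning algorithm of \cite{awasthi2017power} analyzed under adversarial noise, and to run an induction over its $T=\lceil\log_M(1/\varepsilon)\rceil=\Theta(\log(1/\varepsilon))$ epochs, replacing the batch label-query step by Subroutine~\ref{algo:adgac} and controlling the labelling error it introduces via Theorem~\ref{thm:adgacagn}. Choose $M,\kappa,r_i,\tau_i,b_i$ exactly as in \cite{awasthi2017power} (so $r_i,\tau_i,b_i=\Theta(M^{-i})$, and, after localizing to the band of width $b_i$ around the current iterate, the hinge-loss update tolerates any corruption of at most a sufficiently small constant fraction of the banded labels), set $\varepsilon_i=\eta\, b_i=\Theta(M^{-i})$ for a small absolute constant $\eta$, $\delta_i=\delta/(4T)$, and $k_i=k^{(2)}(\varepsilon_i,\delta_i)$ with $k^{(2)}$ from Theorem~\ref{thm:adgacagn}. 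The inductive invariant is $\theta(w_i,w^*)\le c\,M^{-i}$ for a suitable constant $c$; the base case holds since $\theta(w_0,w^*)\le\pi/2$, and after $T$ epochs it gives $\theta(\hat w,w^*)=O(\varepsilon)$, so by the standard log-concave fact $\Pr[\sign(\hat w\cdot X)\ne\sign(w^*\cdot X)]=\Theta(\theta(\hat w,w^*))\le\varepsilon$ after adjusting constants (absorbed into one extra epoch).

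In epoch $i$ we draw $\tilde S_i$ i.i.d.\ from $\P_\mathcal{X}$ with $|\tilde S_i|=n_i=\tilde O(\tfrac1{\varepsilon_i}d\log^3(dk_i/\delta))$, form the band-restricted subset $S_i=\{x\in\tilde S_i:|w_{i-1}\cdot x|\le b_i\}$, and observe via log-concavity plus a Chernoff bound that $m_i:=|S_i|=\Theta(b_i n_i)=\tilde\Theta(d\log^3(dk_i/\delta))$ with probability $1-\delta_i$; in particular $m_i=\tilde\Omega(d)$ and $m_i/(\varepsilon_i n_i)=1/\eta=\Theta(1)$. We then invoke Theorem~\ref{thm:adgacagn} with accuracy $\varepsilon_i$, confidence $\delta_i$, full sample $\tilde S_i$, and the arbitrary subset $S_i$: its preconditions $n_i=\Omega(\tfrac1{\varepsilon_i}\log\tfrac1{\delta_i})$, $\varepsilon_i<C_1$ (true once $i$ exceeds an absolute constant; the first $O(1)$ epochs are handled directly at $O(1)$ extra cost), $k_i=k^{(2)}(\varepsilon_i,\delta_i)$, $\nu\le C_4\varepsilon_i$, and $\nu'\le C_2\varepsilon_i^2\delta_i$ all hold for every $i$ once $\Tol_{\lab}=\Theta(\varepsilon)$ and $\Tol_{\comp}=\tilde O(\varepsilon^2\delta)$ are taken small enough, the binding constraint being the last epoch where $\varepsilon_i=\Theta(\varepsilon)$. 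ADGAC thus returns, with probability $1-\delta_i$, a labelling $W_i=\{(x,\hat y_x):x\in S_i\}$ with $|\{x\in S_i:\hat y_x\ne h^*(x)\}|\le\varepsilon_i n_i$, at expected cost $O(m_i\log m_i)$ comparisons and $\SC_{\lab}(\varepsilon_i,\delta_i)=\tilde O(\log(1/\delta))$ labels (using $m_i/(\varepsilon_i n_i)=\Theta(1)$). Since $\varepsilon_i n_i/m_i=\eta$, the dataset $W_i$ is, as far as the margin-AL update is concerned, precisely a banded sample corrupted on at most an $O(\eta)$-fraction of its points, adversarially; running the approximate hinge-loss minimization $l_{\tau_i}(v_i,W_i)\le\min_{\|v-w_{i-1}\|_2\le r_i,\ \|v\|_2\le1}l_{\tau_i}(v,W_i)+\kappa/8$ and setting $w_i=v_i/\|v_i\|_2$, the adversarial-noise analysis of \cite{awasthi2017power} --- which needs $m_i=\tilde\Omega(d)$ for uniform convergence of the truncated hinge loss over halfspaces and only a cardinality bound on the corrupted set --- yields $\theta(w_i,w^*)\le c\,M^{-i}$ with probability $1-\delta_i$, closing the induction.

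For the complexity bounds, sum over epochs: $\E[\SC_{\comp}]=\sum_{i=1}^{T}O(m_i\log m_i)=\sum_i\tilde O(d\log^4(d/\delta))=\tilde O(\log(1/\varepsilon)\,d\log^4(d/\delta))$, and, again using $m_i/(\varepsilon_i n_i)=\Theta(1)$, $\SC_{\lab}=\sum_i\tilde O(\log(1/\delta))=\tilde O(\log(1/\varepsilon)\log(1/\delta))$, as claimed. A union bound over the $T$ epochs, each failing with probability at most $3\delta_i=3\delta/(4T)$ (through the ADGAC call, the band-size concentration, or the margin-AL step), keeps the overall failure probability below $\delta$.

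The main obstacle is the transfer of robustness in the second step: the analysis of \cite{awasthi2017power} was written for the random adversarial corruptions coming directly from the labelling oracle, whereas the corruptions here are produced by ADGAC and are adversarial in a stronger sense --- their \emph{locations} within $S_i$ are determined jointly by the possibly intransitive and asymmetric comparison oracle and by the internal randomness of Quicksort and the group binary search, and could a priori be correlated with $\|x\|$ or with the band geometry. The resolution is to check that the relevant perturbation lemma of \cite{awasthi2017power} controls the effect of a corrupted set on the localized hinge loss purely through (i) its cardinality and (ii) a high-probability upper bound on the number of points of the i.i.d.\ sample $\tilde S_i$ with atypically large norm --- a log-concave tail estimate that does not depend on which points get mislabelled --- so that the same bound applies to \emph{every} corrupted set of size at most $\varepsilon_i n_i$; with that in hand, the remaining steps are the routine bookkeeping above. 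A lighter secondary point is confirming that the \emph{global} comparison-noise budget suffices even though ADGAC operates on the low-probability band $S_i$: this is immediate because Theorem~\ref{thm:adgacagn} is already phrased for an arbitrary subset $S\subseteq\tilde S$ at the global noise level, so one only needs to match the last-epoch requirement $\nu'\le C_2\varepsilon_i^2\delta_i$ to the stated $\Tol_{\comp}$.
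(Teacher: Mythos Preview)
Your proposal is correct and follows essentially the same route as the paper: reduce to the TNC analysis with $\kappa=1$, replace the batch-labeling step by ADGAC via Theorem~\ref{thm:adgacagn}, and run the epoch induction of margin-based active learning while controlling the effect of ADGAC's corrupted labels on the localized hinge loss through a cardinality-only argument. The paper makes two refinements you gloss over: rather than citing a perturbation lemma from \cite{awasthi2017power}, it proves the needed bound $|l(w,W)-l(w,c(W))|\le\kappa/8$ directly (Lemma~\ref{lem:boundl}, via Cauchy--Schwarz and McDiarmid over the worst-$|N|$ fraction of $|w\cdot x|$), and it replaces the uniform-convergence step of \cite{awasthi2017power} by a Rademacher-complexity argument (Lemma~\ref{lem:boundE}) which is what delivers the $d$-linear sample complexity $m_i=\tilde\Omega(d)$ stated in the theorem---if you simply invoke \cite{awasthi2017power} as written you would inherit their $d^2$ dependence.
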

\spad
The proofs of Theorems \ref{thm:localadgactsy} and \ref{thm:localadgacagn} are different from the conventional analysis of margin-based active learning in two aspects: a) Since we use labels generated by ADGAC, which is not independently sampled from the distribution $\P_{\mathcal{XY}}$, we require new techniques that can deal with adaptive noises; b) We improve the results of \cite{awasthi2017power} over the dependence of $d$ by new Rademacher analysis.

Theorems \ref{thm:localadgactsy} and \ref{thm:localadgacagn} enjoy better label and query complexity than previous results (see Table \ref{tab:learnhalf}). We mention that while Yan and Zhang~\cite{yan2017revisiting} proposed a perceptron-like algorithm with label complexity as small as $\tilde{\O}(d\log(1/\varepsilon))$ under Massart and adversarial noise conditions, their algorithm works only under uniform distributions over the instance space. In contrast, our algorithm Margin-ADGAC works under broad log-concave distributions.
The label and total query complexity of Margin-ADGAC improves over that of traditional active learning. The lower bounds in Section \ref{sec:lowerbound} show the optimality of our complexity.
\pad
\section{Lower Bounds}
\label{sec:lowerbound}
\pad
In this section, we give lower bounds on learning using labeling and pairwise comparison. In Section \ref{sec:labellowerbound}, we give a lower bound on the optimal label complexity $\SC_{\lab}$. In Section \ref{sec:querylowerbound} we use this result to give a lower bound on the total query complexity, i.e., the sum of comparison and label complexity. Our two methods match these lower bounds up to log factors. In Section \ref{sec:upperboundnup}, we additionally give an information-theoretic bound on $\Tol_{\comp}$, which matches our algorithms in the case of Massart and adversarial noise.

Following from \cite{hanneke2014theory,hanneke2012surrogate}, we assume that there is an underlying score function $g^*$ such that $h^*(x)=\sign(g^*(x))$. Note that $g^*$ does not necessarily have relation with $\eta(x)$; We only require that $g^*(x)$ represents how likely a given $x$ is positive. For instance, in digit recognition, $g^*(x)$ represents how an image looks like a 7 (or 9); In the clinical setting, $g^*(x)$ measures the health condition of a patient. Suppose that the distribution of $g^*(X)$ is continuous, i.e., the probability density function exists and for every $t\in \mathbb{R}$, $\Pr[g^*(X)=t]=0$.
\pad
\subsection{Lower Bound on Label Complexity}
\spad
\label{sec:labellowerbound}
The definition of $g^*$ naturally induces a comparison oracle $Z$ with $Z(x,x')=\sign(g^*(x)-g^*(x'))$. We note that this oracle is invariant to shifting w.r.t. $g^*$, i.e., $g^*$ and $g^*+t$ lead to the same comparison oracle. As a result, we cannot distinguish $g^*$ from $g^*+t$ without labels. 
In other words, pairwise comparisons do not help in improving label complexity when we are learning a threshold function on $\mathbb{R}$, where all instances are in the natural order. So the label complexity of any algorithm is lower bounded by that of learning a threshold classifier, and we formally prove this in the following theorem.
\spad
\begin{theorem}
\label{thm:labellowerbound}
For any algorithm $\mathcal{A}$ that can access both labeling and comparison oracles,  sufficiently small $\varepsilon,\delta$, and any score function $g$ that takes at least two values on $\mathcal{X}$, there exists a distribution $P_{\mathcal{XY}}$ satisfying Condition \ref{cond:labeltsy} such that the optimal function is in the form of $h^*(x)=\sign(g(x)+t)$ for some $t\in \mathbb{R}$ and
\spad
\begin{align}
\SC_{\lab} (\varepsilon,\delta,\mathcal{A})=\Omega\left(\left(1/\varepsilon\right)^{2\kappa-2}\log(1/\delta) \right). \label{eqn:labellowerbound}
\end{align} 
If $\P_{\mathcal{XY}}$ satisfies Condition \ref{cond:labeladv} with $\nu=O(\varepsilon)$, $\SC_{\lab}$ satisfies \eqref{eqn:labellowerbound} with $\kappa=1$.
\end{theorem}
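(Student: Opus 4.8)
The plan is to prove a matching label-complexity lower bound by exhibiting, for a given score function $g$ taking at least two distinct values, a pair of instances on which the induced comparison oracle is completely uninformative, so that the learning problem collapses to a one-dimensional hypothesis test whose difficulty is governed by the Tsybakov exponent. The guiding intuition is the one already flagged before the theorem: $Z(x,x')=\sign(g(x)-g(x'))$ is invariant under $g\mapsto g+t$, hence reveals nothing about the threshold $t$.

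First I would set up the hard family. Pick $x_0,x_1\in\mathcal{X}$ with $g(x_0)<g(x_1)$ and let $\P_{\mathcal X}$ put mass $2\varepsilon$ on $x_0$ and $1-2\varepsilon$ on $x_1$ (atoms can be replaced by thin bands of $g$-values if a non-atomic marginal is desired; this is the only place the "$\ge 2$ values'' hypothesis is used). On $\{x_1\}$ set $\eta\equiv 1$, and on $\{x_0\}$ consider two candidates, $\P^{+}$ with $\eta(x_0)=\tfrac12+\Delta$ and $\P^{-}$ with $\eta(x_0)=\tfrac12-\Delta$, where $\Delta=c\,\varepsilon^{\kappa-1}$ for a constant $c$ making $\Delta<\tfrac12$. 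Taking $\bC=\{x\mapsto\sign(g(x)+t):t\in\mathbb R\}$, the restriction to $\{x_0,x_1\}$ has only three effective members — all $+1$, or $-1$ on $x_0$ and $+1$ on $x_1$, or all $-1$ — and under $\P^{+}$ (resp.\ $\P^{-}$) the Bayes classifier $h^*$ is the first (resp.\ second), which has the required form $\sign(g(x)+t)$. I would then check: (i) $h^*$ is Bayes optimal and attains $\inf_{h\in\bC}\err(h)$; (ii) Condition \ref{cond:labeltsy} holds with exponent $\kappa$ and a fixed $\mu$ — the only binding inequality is for the hypothesis differing from $h^*$ on $x_0$ alone, where the error gap is $4\varepsilon\Delta$ and $\Pr[h\ne h^*]=2\varepsilon$, so $\Delta=\Theta(\varepsilon^{\kappa-1})$ is precisely what yields $4\varepsilon\Delta\ge\mu(2\varepsilon)^{\kappa}$; and (iii) the deterministic comparison oracle $Z(x,x')=\sign(g(x)-g(x'))$ is consistent with $h^*$ under both $\P^{+}$ and $\P^{-}$, so $\nu'=0$, and — crucially — it is the \emph{same} oracle in both cases, depending only on the ordering of $g$-values, which the shift $t$ does not change.

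Next comes the information-theoretic step. Since the comparison channel is identical under $\P^{+}$ and $\P^{-}$, and labels queried at $x_1$ are deterministically $+1$ under both, the algorithm's entire transcript — conditioned on its (possibly adaptive) query sequence — is a function only of the i.i.d.\ $\mathrm{Bernoulli}(\tfrac12\pm\Delta)$ labels it draws at $x_0$. Any classifier $h$ with $h(x_0)\ne h^*(x_0)$ has $\Pr[h(X)\ne h^*(X)]\ge 2\varepsilon>\varepsilon$, so to achieve error $\le\varepsilon$ the algorithm must correctly output $\sign(\eta(x_0)-\tfrac12)$. A standard two-point argument (Le Cam / Bretagnolle--Huber) then forces at least $\Omega(\Delta^{-2}\log(1/\delta))=\Omega\big((1/\varepsilon)^{2\kappa-2}\log(1/\delta)\big)$ label queries at $x_0$, giving \eqref{eqn:labellowerbound}. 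For the adversarial statement, run the same construction with $\kappa=1$, i.e.\ $\Delta$ a fixed constant; then $\nu=\Pr[Y\ne h^*(X)]=2\varepsilon(\tfrac12-\Delta)=O(\varepsilon)$, so Condition \ref{cond:labeladv} holds, and the bound becomes $\Omega(\log(1/\delta))$, which is \eqref{eqn:labellowerbound} specialized to $\kappa=1$.

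I expect the main obstacle to be not any single calculation but making rigorous the assertion that comparison queries are worthless: formally, because the comparison oracle is a fixed deterministic function of the known $g$-ordering, the learner can simulate it internally, so its answers are statistically independent of whether the world is $\P^{+}$ or $\P^{-}$; this is exactly what licenses invoking the classical threshold lower bound as a black box. A secondary point to handle carefully is the standing continuity assumption on $g^*(X)$: if $g$ takes only finitely many values one simply omits that assumption for the lower bound (it is needed only to define the comparison oracle cleanly and in the upper-bound analyses), while if $g$ takes infinitely many values one spreads the two atoms into narrow intervals of $g$-values and repeats the argument verbatim.
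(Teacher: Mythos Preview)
Your proposal is correct and follows essentially the same route as the paper: a two-point marginal concentrated on a rare ``uncertain'' instance with conditional label probability $\tfrac12\pm\Delta$ for $\Delta=\Theta(\varepsilon^{\kappa-1})$, the observation that the comparison oracle is identical under both hypotheses (the paper sets $Z\equiv 0$; your choice $Z=\sign(g(x)-g(x'))$ is cleaner and respects the $\{-1,1\}$ range), and a reduction to a standard one-dimensional threshold lower bound (the paper cites Hanneke's Theorem~4.3, you invoke Le~Cam directly). Your TNC verification and the simulation argument for why comparisons are vacuous are more explicit than the paper's, but the construction and the logic are the same.
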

\spad
The lower bound in Theorem \ref{thm:labellowerbound} matches the label complexity of A$^2$-ADGAC and Margin-ADGAC up to a log factor. So our algorithm is near-optimal.

\pad
\subsection{Lower Bound on Total Query Complexity}
\spad
\label{sec:querylowerbound}
We use Theorem \ref{thm:labellowerbound} to give lower bounds on the total query complexity of any algorithm which can access both comparison and labeling oracles.
\sspad
\begin{theorem}
	\label{thm:totallowerbound}
	For any algorithm $\mathcal{A}$ that can access both labeling and comparison oracles, and sufficiently small $\varepsilon,\delta$, there exists a distribution $P_{\mathcal{XY}}$ satisfying Condition \ref{cond:labeltsy},  such that
    \spad
\begin{equation}
\label{equ: lower bound equ}
\SC_{\comp}(\varepsilon,\delta,\mathcal{A})+\SC_{\lab}(\varepsilon,\delta,\mathcal{A})=\Omega\left(\left(1/\varepsilon\right)^{2\kappa-2}\log(1/\delta) +d\log(1/\varepsilon) \right).
\end{equation}
If $\P_{\mathcal{XY}}$ satisfies Condition \ref{cond:labeladv} with $\nu=O(\varepsilon)$, 
$\SC_{\comp}+\SC_{\lab}$ satisfies \eqref{equ: lower bound equ} with $\kappa=1$.
\end{theorem}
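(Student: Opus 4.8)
The plan is to establish the two summands of \eqref{equ: lower bound equ} on two separate hard instances and then combine them, using the elementary fact that if some admissible $\P_{\mathcal{XY}}$ forces $\SC_{\comp}+\SC_{\lab}=\Omega(a)$ and some (possibly different) admissible $\P_{\mathcal{XY}}$ forces $\SC_{\comp}+\SC_{\lab}=\Omega(b)$, then — for each fixed $\varepsilon,\delta$ and the ambient $d,\kappa$ — whichever instance carries the larger of the two guarantees already forces $\Omega(a+b)$, since $a+b\le 2\max\{a,b\}$. Thus it suffices to produce, separately, a bad distribution for $(1/\varepsilon)^{2\kappa-2}\log(1/\delta)$ and a bad distribution for $d\log(1/\varepsilon)$.

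For the first summand I would simply observe $\SC_{\comp}+\SC_{\lab}\ge\SC_{\lab}$ (query complexity is defined as the sum) and invoke Theorem~\ref{thm:labellowerbound} with the same algorithm $\mathcal{A}$ and a score function $g$ taking at least two values on $\mathcal{X}$: it produces a $\P_{\mathcal{XY}}$ satisfying Condition~\ref{cond:labeltsy} with the given $\kappa$ on which $\SC_{\lab}=\Omega((1/\varepsilon)^{2\kappa-2}\log(1/\delta))$, and the comparison oracle induced by $g$, namely $Z(x,x')=\sign(g(x)-g(x'))$, is consistent ($\nu'=0$ in Condition~\ref{cond:compadv}), exactly as set up in Section~\ref{sec:labellowerbound}; the adversarial version with $\kappa=1$ comes from the second clause of Theorem~\ref{thm:labellowerbound}.

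For the second summand I would use an oracle-agnostic bit-counting argument whose only input is that every query — to the labeling oracle \emph{or} to the comparison oracle, and regardless of which pair $\mathcal{A}$ feeds to the latter — returns a single bit. Instantiate a VC-dimension-$d$ subclass of $\bC$ as homogeneous halfspaces $h_w(x)=\sign(w\cdot x)$ on $\mathbb{R}^d$, take $\P_{\mathcal{X}}$ uniform on $S^{d-1}$ (the \emph{same} marginal for every instance, so the free unlabeled pool carries no information about the target), and fix a maximal $c\varepsilon$-packing $w_1,\dots,w_N$ of $S^{d-1}$, so that $N=(c'/\varepsilon)^{\Omega(d)}$ and $\Pr_X[h_{w_i}(X)\ne h_{w_j}(X)]\ge 2\varepsilon$ for $i\ne j$. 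Each $w_i$ gives a noiseless $\P^{(i)}_{\mathcal{XY}}$, which satisfies Condition~\ref{cond:labeltsy} with $\mu=1$ for every $\kappa\ge 1$ (and Condition~\ref{cond:labeladv} with $\nu=0$), and induces the exact comparison oracle $Z_i(x,x')=\sign(w_i\cdot(x-x'))$, so $\nu'=0$. For a deterministic $\mathcal{A}$ making at most $q$ queries the string of binary answers it sees has length at most $q$, so across the runs on $\P^{(1)}_{\mathcal{XY}},\dots,\P^{(N)}_{\mathcal{XY}}$ the output $\hat h$ takes at most $2^{q+1}$ distinct values; since the targets are pairwise $2\varepsilon$-separated no single $\hat h$ is $\varepsilon$-good for two of them, so correctness on all $N$ instances forces $2^{q+1}\ge N$, i.e. $q=\Omega(d\log(1/\varepsilon))$. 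For a randomized $\mathcal{A}$ with per-instance success probability $\ge 1-\delta$ I would apply Yao's principle with the uniform prior on $\{1,\dots,N\}$ to fix the internal coins, obtaining a deterministic algorithm correct on $\ge(1-\delta)N$ instances; for $\delta$ bounded away from $1$ this is still $(1/\varepsilon)^{\Omega(d)}$, so the same count gives $\SC_{\comp}+\SC_{\lab}=\Omega(d\log(1/\varepsilon))$ on the hardest instance of the family.

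To finish, for given $\varepsilon,\delta$ I would output whichever of the two instances carries the larger proven bound; by the opening remark this is $\Omega\big((1/\varepsilon)^{2\kappa-2}\log(1/\delta)+d\log(1/\varepsilon)\big)$, i.e. \eqref{equ: lower bound equ}, and both instances remain admissible under Condition~\ref{cond:labeladv} with $\nu=O(\varepsilon)$ (the first by Theorem~\ref{thm:labellowerbound}, the second because it is noiseless), which yields the $\kappa=1$ statement. I expect the main obstacle to be the second summand: exhibiting a $2\varepsilon$-separated packing of size $(1/\varepsilon)^{\Omega(d)}$ that genuinely sits inside a VC-dimension-$d$ subclass of $\bC$ while simultaneously being noiseless (or Massart), and pushing the counting bound cleanly through the adaptivity of $\mathcal{A}$ and the $1-\delta$ requirement via Yao — whereas the observation that a comparison query is worth only one bit, which is precisely what makes the labeling-oracle lower bound survive the addition of the comparison oracle, is exactly the part that the count delivers for free.
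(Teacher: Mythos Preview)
Your decomposition into two summands combined by the $\max$ trick, and your handling of the first summand via Theorem~\ref{thm:labellowerbound}, match the paper exactly. The difference is entirely in the second summand.

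The paper does not re-derive the $d\log(1/\varepsilon)$ term from a packing/bit-count; instead it observes that in the noiseless case ($\nu=\nu'=0$) a perfect comparison oracle can be \emph{simulated} by two label queries, since $Z(x,x')=\sign(h^*(x)-h^*(x'))$ is computable from $Y(x)$ and $Y(x')$. Hence any $\mathcal{A}$ with both oracles yields a label-only algorithm $\mathcal{A}'$ with $\SC_{\lab}(\mathcal{A}')=2\SC_{\comp}(\mathcal{A})+\SC_{\lab}(\mathcal{A})$, and one simply invokes the standard realizable active-learning lower bound $\SC_{\lab}(\mathcal{A}')=\Omega(d\log(1/\varepsilon))$ as a black box. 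This is shorter and sidesteps exactly the obstacle you identified: you never need to exhibit a large packing inside the given $\bC$, because that work is already encapsulated in the cited label-only bound, which holds for arbitrary VC classes. Your direct information-theoretic route is also valid---every query, label or comparison, is indeed one bit, so the counting goes through---but it obliges you to rebuild the packing argument inside $\bC$ (your halfspace instantiation presupposes $\bC$ contains halfspaces, which is not given), whereas the simulation reduction gets the generality for free.
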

The first term of \eqref{equ: lower bound equ} follows from Theorem \ref{thm:labellowerbound}, whereas the second term follows from transforming a lower bound of active learning with access to only the labeling oracle. The lower bounds in Theorem \ref{thm:totallowerbound} match the performance of A$^2$-ADGAC and Margin-ADGAC up to log factors.
\comment{
We can also prove similar bounds in the adversarial case that matches the performance of A$^2$-ADGAC and Margin-ADGAC. This shows the near-optimality of our two algorithms.
}
\pad
\subsection{Adversarial Noise Tolerance of Comparisons}
\label{sec:upperboundnup}
\spad
Note that label queries are typically expensive in practice. Thus it is natural to ask the following question: what is the minimal requirement on $\nu'$, given that we are only allowed to have access to minimal label complexity as in Theorem \ref{thm:labellowerbound}? We study this problem in this section. More concretely, we study the requirement on $\nu'$ when we learn a threshold function using labels.
Suppose that the comparison oracle gives feedback using a scoring function $\hat{g}$, i.e., $Z(x,x')=\sign(\hat{g}(x)-\hat{g}(x'))$, and has error $\nu'$. We give a sharp minimax bound on the risk of the optimal classifier in the form of $h(x)=\sign(\hat{g}(x)-t)$ for some $t\in \mathbb{R}$ below. 
\begin{theorem}
	\label{thm:minimaxnup}
	Suppose that $\min\{\Pr[h^*(X)=1], \Pr[h^*(X)=-1]\}\geq \sqrt{\nu'}$  and both $\hat{g}(X)$ and $g^*(X)$ have probability density functions. If $\hat{g}(X)$ induces an oracle with error $\nu'$, then we have $\min_t\max_{\hat{g},g^*} \Pr[\sign(\hat{g}(X)-t)\ne h^*(X)]=\sqrt{\nu'}$.
    \spad
\end{theorem}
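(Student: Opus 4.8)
The plan is to show that $\sqrt{\nu'}$ is simultaneously (a) an upper bound on $\min_t\Pr[\sign(\hat g(X)-t)\ne h^*(X)]$ for \emph{every} pair $(\hat g,g^*)$ meeting the hypotheses, and (b) attained by a single hard instance; together these give the claimed equality. Throughout I work with the joint law of $(\hat g(X),h^*(X))$: write $A=\{h^*=1\}$, $B=\{h^*=-1\}$, $p=\Pr[A]$, $q=\Pr[B]$, so $p+q=1$ and $\min(p,q)\ge\sqrt{\nu'}$. Since $\hat g(X)$ has no atoms, the error of the threshold rule $x\mapsto\sign(\hat g(x)-t)$ equals $E(t):=\Pr[X\in A,\hat g(X)<t]+\Pr[X\in B,\hat g(X)>t]=:a(t)+b(t)$, and because $(X,X')$ are i.i.d.\ the comparison error satisfies $\nu'=2\Pr[X\in A,\,X'\in B,\,\hat g(X)<\hat g(X')]$.

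\emph{Upper bound.} Let $m:=\inf_t E(t)$; continuity of $E$ together with $\lim_{t\to-\infty}E(t)=q$ and $\lim_{t\to+\infty}E(t)=p$ forces $m\le\min(p,q)$. Conditioning on $X'$ turns the comparison-error identity into $\nu'/2=\int_{\mathbb R}a(s)\,d\mu_B(s)$, where $\mu_B$ is the law of $\hat g$ restricted to $B$ — a finite atomless measure of mass $q$ whose survival function is exactly $b$, i.e.\ $b(s)=\mu_B((s,\infty))$. The crux is a probability-integral-transform identity: if $\xi\sim\mu_B/q$ then $b(\xi)/q$ is uniform on $[0,1]$, hence $\int\phi(b(s))\,d\mu_B(s)=\int_0^q\phi(u)\,du$ for every bounded $\phi\ge0$. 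Combining this with the pointwise bound $a(s)\ge(m-b(s))^+$ (valid since $a(s)+b(s)=E(s)\ge m$ and $a(s)\ge0$) gives
\[
\nu'/2=\int a(s)\,d\mu_B(s)\ \ge\ \int\big(m-b(s)\big)^+ d\mu_B(s)=\int_0^q(m-u)^+\,du=\frac{m^2}{2},
\]
where the last equality uses $m\le q$. Therefore $m\le\sqrt{\nu'}$, which is the $\le$ direction.

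\emph{Matching instance.} Put $\varepsilon:=\sqrt{\nu'}$ (the balance hypothesis forces $\varepsilon\le1/2$). Take $\mathcal X=\mathbb R\times\{-1,1\}$, $X=(U,Y)$, $\hat g((u,y))=u$, and $g^*((u,y))=y\,(u+10)$, so that $h^*=\sign g^*=Y$ and $|g^*(X)|=U+10$ is bounded away from $0$. Let $U$ have the piecewise-uniform density equal to $\tfrac12-\varepsilon$ on $[0,1]$, $2\varepsilon$ on $[1,2]$, and $\tfrac12-\varepsilon$ on $[2,3]$ (a genuine density precisely because $\min(p,q)\ge\varepsilon$), with $Y\equiv-1$ when $U\in[0,1]$, $Y\equiv+1$ when $U\in[2,3]$, and $\Pr[Y=1\mid U=u]=\tfrac12$ for $u\in[1,2]$. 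One then checks: $\hat g(X)=U$ and $g^*(X)=Y(U+10)$ both have densities and $\sign g^*=h^*$; the only pairs $(X,X')$ that can contribute to $\Pr[X\in A,X'\in B,\hat g(X)<\hat g(X')]$ have both $U,U'\in[1,2]$, whose contribution is $\varepsilon^2/2$, so the comparison error is exactly $2\cdot\varepsilon^2/2=\varepsilon^2=\nu'$; and a direct computation gives $E(t)\equiv\varepsilon$ for $t\in[1,2]$ and $E(t)\ge\varepsilon$ otherwise, so $\min_t E(t)=\varepsilon=\sqrt{\nu'}$. Combining with the upper bound yields the theorem.

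I expect the upper bound to be the main obstacle, specifically the passage from a pointwise constraint on $E$ to a bound on $\nu'$. The tempting shortcut of evaluating $E$ only at its minimizer $t^*$ yields merely $\nu'\ge2a(t^*)b(t^*)$, which is vacuous whenever one of those two terms is $0$; one genuinely needs to integrate the constraint $a(s)+b(s)\ge m$ against $\mu_B$ over \emph{all} $s$, which is what the integral-transform step accomplishes. Justifying that step also needs a little care since $\mu_B$ is only assumed atomless, so its survival function $b$ may be flat on intervals — handled by the standard fact that the survival function of an atomless law is uniformly distributed. The lower-bound construction is more mechanical once the extremal shape (half-label noise concentrated on a thin middle band, with the classes cleanly separated and correctly ordered by $\hat g$ outside it) is identified, and this shape is exactly the equality case of the inequality chain above.
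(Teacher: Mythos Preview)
Your proof is correct and takes a genuinely different route from the paper on both halves.

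For the upper bound, the paper discretizes: it writes the comparison error and the threshold error as Riemann sums over an $n$-point grid, proves a purely combinatorial inequality (their Lemma~\ref{lemma:inequality}: if $\sum_{i\le j}x_iy_j\le t$ then $\min_k(x_1+\cdots+x_k+y_{k+1}+\cdots+y_n)\le\sqrt{2nt/(n+1)}$) by an exchange argument plus Cauchy--Schwarz, and then passes to the limit $n\to\infty$. Your argument stays in the continuum and replaces that lemma by the single observation that, under the atomless measure $\mu_B$, the survival function $b$ is uniformly distributed, turning $\int(m-b(s))^+\,d\mu_B$ into $\int_0^q(m-u)^+\,du=m^2/2$. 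This is cleaner and sidesteps the discretization entirely; the paper's route, on the other hand, isolates a self-contained finite inequality that might be reused elsewhere.

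For the lower bound, the paper proves something slightly stronger than you do: for \emph{any} $g^*$ whose pushforward has a density, it builds a matching $\hat g$ (by mapping the sub-threshold and super-threshold halves of a $2\sqrt{\nu'}$-mass band onto one another via the CDF) so that every threshold has error exactly $\sqrt{\nu'}$. Your construction fixes one explicit $(\P_\mathcal{X},g^*,\hat g)$ with a piecewise-uniform $\hat g(X)$ and a half-labeled middle band. For the minimax statement as written this suffices, and your instance is the transparent ``equality case'' of your own upper-bound chain; the paper's construction buys the extra universality over $g^*$.
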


By Theorem \ref{thm:minimaxnup}, we see that the condition of $\nu'=\varepsilon^2$ is necessary if labels from $g^*$ are only used to learn a threshold on $\hat{g}$. This matches our choice of $\nu'$ under Massart and adversarial noise conditions for labeling oracle (up to a factor of $\delta$).

%

\pad
\section{Conclusion}
\spad
We presented a general algorithmic framework, ADGAC, for learning with both comparison and labeling oracles. 
We proposed two variants of the base algorithm, A$^2$-ADGAC and Margin-ADGAC, to facilitate low query complexity under Tsybakov and adversarial noise conditions. The performance of our algorithms matches lower bounds for learning with both oracles. Our analysis is relevant to a wide range of practical applications where it is easier, less expensive, and/or less risky to obtain pairwise comparisons than  labels.
\section*{Acknowledgements}
We thank Chicheng Zhang for insightful ideas on improving results in \cite{awasthi2017power} using Rademacher complexity.

{\small
\bibliographystyle{abbrv}
\bibliography{yichongref}
}
\clearpage

\appendix

\section{Our Techniques}

\medskip
\noindent{\textbf{Intransitivity:}} The main challenge of learning with pairwise comparisons is that the comparisons might be \emph{asymmetric} or \emph{intransitive}. If we construct a classifier $h(x)$ by simply comparing $x$ with a fixed instance $\hat{x}$ by comparison oracle, then the concept class of classifiers $\{h: h(x)=Z(x,\hat{x}), \hat{x}\in \mathcal{X}\}$ will have infinite VC dimension, so the complexity will be as high as infinite if we apply the traditional tools of VC theory. To resolve the issue, we conduct a group-based binary search in ADGAC. The intuition is that by dividing the dataset into several ranked groups $S_1,S_2,...$, the majority of labels in each group can be stably decided if we sample enough examples from that group. Therefore, we are able to reduce the original problem in the high-dimensional space to the problem of learning a ``threshold'' function in one-dimension space. Then some straightforward approaches such as binary search learns the thresholding function.


\medskip
\noindent{\textbf{Combining with Active Learning Algorithms:}} If the labels follow Tsybakov noise (i.e., Condition \ref{cond:labeltsy}), the most straightforward method to combine ADGAC with existing algorithms is to combine ADGAC with an algorithm that uses the label oracle only and works under TNC. However, we cannot save query complexity if we follow this method. To see this, notice that in each round we need roughly $n_i=\tilde{\O}\left(d\theta\left(\frac{1}{\varepsilon_i}\right)^{2\kappa-1}\right)$ samples and $m_i=\tilde{\O}\left(d\theta\left(\frac{1}{\varepsilon_i}\right)^{2\kappa-2}\right)$ labels; if we use ADGAC, we can obtain a labeling of $n_i$ samples with at most $\varepsilon_i n_i\approx m_i$ errors with low label complexity. Suppose $N$ is the set of labels that ADGAC makes error on. However, since the outside active learning algorithm works under TNC, we will need to query labels in $N$ to make sure that the ADGAC labels follow TNC. That means our label complexity is still $m_i$, the same as the original algorithm. To avoid this problem, we combine ADGAC with algorithms under adversarial noise in all cases including TNC. This eliminates the need to query additional labels, and also reduces the query complexity.

\medskip
\noindent{\textbf{Handling Independence:}} We mostly follow previous works on combining ADGAC with existing algorithms. However, since we now obtain labels from ADGAC instead of $\P_{\mathcal{XY}}$, the labels are not independently sampled, and we need to adapt the proof to our case. We use different methods for A$^2$-ADGAC and Margin-ADGAC: For the former, we use results from PAC learning to bound the error on all $n_i$ samples; for the latter, we decompose the error of any classifier $h$ on labels generated by ADGAC into two parts: The first part is caused by the error of ADGAC itself, and second is by $h$ on truthful labels. Using the above techniques enables us to circumvent the independence problem. 

\medskip
\noindent\textbf{Lower Bounds:} It is typically hard to provide a unified lower bound for multi-query learning framework, as several quantities are simultaneously involved in the analysis, e.g., the comparison complexity, the label complexity, the noise tolerance, etc. So traditional proof techniques for active learning, e.g., Le Cam's and Fano's bounds~\cite{castro2008minimax,hanneke2014theory}, cannot be trivially applied to our setting. 
Instead, we prove lower bounds on one quantity by allowing arbitrary budgets of other quantities. Another non-trivial technique is in the proof of minimax bound for the adversarial noise level of comparison oracle (see Theorem \ref{thm:minimaxnup}): In the proof of upper bound, we divide the integral region w.r.t. the expectation into $n$ segments, each of size $1/n$, and the expectation is thus the limit when $n\rightarrow \infty$. We upper bound the discrete approximation of the integral by a careful calibration of noise on each segment for a fixed $n$, and then let $n\rightarrow \infty$. The proof then leads to a general inequality (Lemma \ref{lemma:inequality}), and it might be of independent interest.

\section{Additional Related Work}
\label{sec:related}
It is well known that people are better at comparison than labeling \cite{stewart2005absolute,shah2014better}. It has been widely used to tackle problems in classification \cite{maji2014part}, clustering \cite{krishnamurthy2015interactive} and ranking \cite{agarwal2009generalization,furnkranz2010preference}. 
Balcan et al.~\cite{balcan2016learning} studied using pairwise comparisons to learn submodular functions on sets. Another related problem is bipartite ranking \cite{agarwal2005stability}, which exactly does the opposite of our problem: Given a group of binary labels, learn a ranking function that rank positive samples higher than negative ones.

Interactive learning has wide application in the field of computer vision and natural language processing (see e.g., \cite{wah2014similarity}). There are also abundant literatures on interactive ways to improve unsupervised and semi-supervised learning \cite{krishnamurthy2015interactive}. However, there lacks a general statistical analysis of interactive learning for traditional classification tasks. Balcan and Hanneke~\cite{balcan2012robust} analyze class conditional queries (CCQ), where the user gives counterexamples to a given classification. Beygelzimer et al.~\cite{beygelzimer2016search} used a similar idea using search queries. However, their interactions requires a oracle that is usually stronger than the traditional labelers (i.e., we can simulate traditional active learning using such oracles), and is generally hard to deploy in practice. There turns out to be little general analysis on using a ''weaker'' interaction between human and computer. Balcan and Hanneke\cite{balcan2012robust} studied an abstract query based notions from exact learning, but their analysis cannot handle queries that gives relation between samples (as comparisons do). Our work fits in this blank.

We compare our work to traditional label-based active learning \cite{hanneke2014theory}, which has drawn a lot of attention in the society in recent years. Disagreement-based active learning has been shown to reach a near-optimal rate on classification problems \cite{hanneke2009adaptive}. Another line of research is margin-based active learning \cite{awasthi2016learning}, which aims at computational efficiency of learning halfspaces, under the large-margin assumption.


\section{Learning under TNC for Comparisons}\label{sec:tnccomp}
In this section we justify our choice of analyzing adversarial noise model for the comparison oracle. In fact, any algorithm using adversarial comparisons can be transformed into an algorithm using TNC comparisons, by treating learning comparison functions as a separate learning problem. Let $\mathbb{C}'$ be a hypothesis class consisting of comparison functions $f:\mathcal{X}\times\mathcal{X}\rightarrow \{-1,1\}$. Suppose the optimal comparison function is $f^*(x,x')=\sign(g^*(x)-g^*(x'))$, and Tsybakov noise condition holds for $((X,X'),Z)$ with some constant $\mu',\kappa'$; i.e., for any $f\in \mathbb{C}'$ we have
\[\Pr[f(X,X')\ne Z]-\Pr[f^*(X,X')\ne Z]\geq \mu'\Pr[f(X,X')\ne f^*(X,X')]^{\kappa'}. \]
Also suppose $f^*(x,x')=\sign(\Pr[Z=1|X=x,X'=x']-1/2)$.
Assume $\mathbb{C}'$ has VC-dimension $d'$ and disagreement coefficient $\theta'$, standard active learning requires $\Phi(\nu')=\tilde{\O}\left(\theta'\left(\frac{1}{\nu'}\right)^{2\kappa'-2}(d'\log(\theta')+\log(1/\delta))\log\left(\frac{1}{\nu'}\right)\right)$
samples to learn a comparison function of error $\nu'$ with probability $1-\delta$. So an algorithm $\mathcal{A}$ for adversarial noise on comparisons can be automatically transformed into an algorithm $\mathcal{A}'$ for TNC on comparisons with $\SC_{\lab}(\mathcal{A}')=\SC_{\lab}(\mathcal{A})$ and $\SC_{\comp}(\mathcal{A})=\Phi(\Tol_{\comp}(\mathcal{A}))$. So we only analyze adversarial noise for comparison in other parts of this paper.

\section{Proof of Theorem \ref{thm:adgactsy}}
\begin{proof}
	We only prove the theorem for $\kappa>1$, the case of $\kappa=1$ holds with a similar proof. An equivalent condition (see \cite{hanneke2014theory}) for Condition \ref{cond:labeltsy} under $\kappa>1$ is that there exists constant $\tilde{\mu}>0$ such that for all $t>0$ we have
\spad
\begin{align}
\Pr(|\eta(x)-1/2|<t)\leq \tilde{\mu}t^{1/{(\kappa-1)}}. \label{eqn:tsy2}
\end{align}
We use \eqref{eqn:tsy2} instead of Condition \ref{cond:labeltsy} through out the proof. 
    
    To bound the error in labeling by ADGAC, we first bound the number of incorrectly sorted pairs due to noise/bias of the comparison oracle.
    We call $(x_i,x_j)$ an \emph{inverse pair} if $h^*(x_i)=1,h^*(x_j)=-1,x_i\preccurlyeq x_j$  (the partial order is decided by randomly querying $Z(x_i,x_j)$ or $Z(x_j,x_i)$). Also, we call $(x_i,x_j)$ an \emph{anti-sort pair} if $h^*(x_i)=1,h^*(x_j)=-1, i<j $ (after sorting by Quicksort).
	Let $T$ be the set of all anti-sort pairs, $T'$ be the set of all inverse pairs in $S$, and $\tilde{T}'$ be the set of all inverse pairs in $\tilde{S}$. We first bound $|T|$ using $|T'|$. Let $s$ be the random bits supplied for Quicksort in its process, by Theorem 3 in \cite{ailon2007efficient} we have
	\[\E_{s}[|T|]=|T'|. \]
	%
	Notice that sampling a pair of $(X,X')$ is equivalent to sample a set $\tilde{S}$ of $n$ points and then uniformly pick two different points in it. Also, number of inverse pairs in $S$ is less than that in $\tilde{S}$. So we have
	\[\E_S\left[\E_{s}[|T|]\right]=\E_S[|T'|]\leq \E_{\tilde{S}}[|\tilde{T}'|]= n(n-1)\nu'\leq n^2\nu'.  \]
	By Markov inequality we have
	\begin{align}
	\label{eqn:markov}
	\Pr\left(|T|\geq \frac{2\nu'}{\delta}n^2\right)\leq \frac{\delta}{2}.
	\end{align}
	
	Suppose $|T|< \frac{2\nu'}{\delta}n^2$ (which holds with probability $>1-\delta/2$). We now proceed to bound the number of labeling errors made by ADGAC. First, notice that in Algorithm \ref{algo:adgac}, we divide all samples into groups of size $\alpha m=\varepsilon n/2$. 
	For every set $S_i$, let 
    \begin{align*}
    q(S_i)&=\frac{1}{|S_i|}\min\left\{\sum_{x\in S_i} I(h^*(x)=1),  \sum_{x\in S_i} I(h^*(x)=-1)\right\}\\
    &=\min\left\{\Pr_{X\sim S_i}(h^*(x)=-1),\Pr_{X\sim S_i}(h^*(x)=1) \right\}
    \end{align*}
    where $X\sim S_i$ denote the empirical distribution that $X$ is drawn uniformly at random from the finite collection of points in $S_i$. Let
	\[\beta=\frac{2}{\varepsilon}\sqrt{\frac{\nu'}{\delta}}\leq C\varepsilon^{\kappa-1} \]
	for some constant $C$. Suppose $\varepsilon$ is small enough such that $\beta\leq 1/2$. 
    Then we claim that there is at most 1 set $S_i$ such that $q(S_i)\geq \beta$.
	Otherwise, suppose two such sets exist; let them be $S_i$ and $S_j$. So there are at least $\alpha\beta m$ points $x\in S_i$ with $h^*(x)=-1$, and $\alpha\beta m$ points $x\in S_i$ with $h^*(x)=1$; the same holds for $S_j$. These -1s and 1s would indicate at least
	\[2\alpha^2\beta^2m^2=\frac{2\nu'}{\delta}n^2 \]
	anti-sort pairs, which violates our claim of $|T|$.
	
    Since ADGAC uses group binary search, we first analyze some properties of the majority label of the Bayes optimal classifier within each group/set. For each set $S_i$, let $\mu(S_i)=\sign(\sum_{x\in S_i} h^*(x_i))$ be the majority Bayes optimal label. We can show that $\mu(S_i)$ is monotonic: that is, for every $i<j$ we have $\mu(S_i)\leq \mu(S_j)$. To see this, suppose there exist two sets $S_i, S_j, i<j$ such that $\mu(S_i)=1$ and $\mu(S_j)=-1$. That would indicate at least $\alpha^2m^2/2>\alpha^2\beta^2m^2$ anti-sort pairs, which violates our assumption. So there must be a boundary $l$ such that $\mu(S_i)=-1$ for $i<l$, and $\mu(S_i)=1$ for $i\geq l$. We call $S_l$ to be the \emph{boundary set}. Now consider two cases: 
    \begin{itemize}
    \item Case 1: there exists a set $S_{l'}$ such that $q(S_{l'})\geq \beta$ (recall that from previous arguments, there is only one such set). If $l\ne l'$, the sets $S_l$ and $S_{l'}$ generates at least $2(\alpha m/2)(\alpha \beta m)\geq 2\alpha^2\beta^2m^2$ anti-sort pairs, which violates our assumption for $|T|$. So $l=l'$.
    \item Case 2: for all sets $S_i$, we have $q(S_i) < \beta$. 
    \end{itemize}
    In both cases, we have $q(S_i)<\beta$ for all $i\ne l$.


Now we prove that the majority vote of the noisy labels agrees with the majority vote of the Bayes optimal classifier $\mu(S_i)$ for each set $S_i$ that we visit, and hence we will find the boundary set $S_l$.	
Suppose $q(S_i)<\beta$. Take
	\[t=\left(\frac{\varepsilon}{16\tilde{\mu}}\right)^{\kappa-1}. \]
	For small enough $\varepsilon$, we have $t\leq 1/2$, and $\Pr(x:|\eta(x)-1/2|\leq t)\leq \varepsilon/16$.
	Let $U=\{x_i\in S: |\eta(x_i)-1/2|\leq t \}$. By relative form of Chernoff bound we have
	\[\Pr\left(|U|>3\log(4/\delta)+n\varepsilon/8 \right)\leq \exp\left(-\frac{3\log(4/\delta)+\varepsilon n/16}{3}\right)\leq \frac{\delta}{4}. \]
	Suppose $|U|/n\leq \varepsilon/8$, so at most 1/4 of each $S_i$ is in $U$.
	
	For each set $S_i$, let $\bar{S}_i=\{x\in S_i: h^*(x)\ne \mu(S_i)\}$ and $S'_i=\{x\in S_i: |\eta(x)-1/2|\leq t\}$. So for each set such that $q(S_i)\leq \beta$, we have
	\begin{align*}
	\Pr(Y\ne \mu(S_i)|X\sim S_i)\leq& \Pr(Y\ne \mu(S_i)|X\in S_i')\Pr(X\in S_i'|X\sim S_i)+\\
	&\Pr(Y\ne \mu(S_i)|X\in \bar{S}_i)\Pr(X\in \bar{S}_i|X\sim S_i)+\\
	&\Pr(Y\ne \mu(S_i)|X\not\in S'_i,X\not\in \bar{S}_i)\Pr(X\not\in S'_i,X\not\in \bar{S}_i|X\sim S_i)\\
	\leq& \left(\frac{1}{2}+t\right)\cdot \frac{1}{4}+1\cdot \beta+\left(\frac{1}{2}-t\right)\cdot \left(\frac{3}{4}-\beta\right) \\
	=& \frac{1}{2}-\frac{1}{2}t+\left(\frac{1}{2}+t\right)\beta.
	\end{align*}
	Pick $\nu'$ small enough such that $\beta\leq \frac{1}{4}t$:
	\[\frac{2}{\varepsilon}\sqrt{\frac{\nu'}{\delta}}\leq \frac{1}{4}\left(\frac{\varepsilon}{16\tilde{\mu}}\right)^{\kappa-1}. \]
	This yields
	\[\nu'\leq \frac{ \varepsilon^{2\kappa}\delta}{32(16\tilde{\mu})^{2\kappa-2}}. \]
	Note that this also guarantees $\beta\leq 1/2$ above, since $t\leq \frac{1}{2}$. Now we have
	\[\Pr\left[Y\ne \mu(S_i)|X\sim S_i\right]\leq \frac{1}{2}-\frac{1}{2}t+\frac{1}{4}t\left(t+1/2\right)\leq \frac{1}{2}-\frac{1}{4}t. \]
	In the algorithm, suppose we pick $X_1,X_2,...,X_k\in S_i$ as the points for which to query the label and the labels are $Y_1,...,Y_k$. By Hoeffding's inequality, we have
	\[\Pr\left[\sign\left(\sum_{j=1}^k Y_j\right)=\mu(S_i)\right]=\Pr\left[\frac{1}{k}\sum_{j=1}^n I(Y_j=\mu(S_i))>\frac{1}{2}\right]\leq \exp\left(-\frac{1}{8}k t^2\right). \]
	The choice of $k$ yields that the majority vote of the noisy labels agrees with the majority vote $\mu(S_i)$ of the Bayes optimal classifier for each $S_i$ with $q(S_i) \leq \beta$ we visit, with probability $\frac{\delta}{8\log(2/\varepsilon)}$.
	
    Suppose the binary search output set $S_t$ (i.e., the value of $t$ at step \ref{step:havet}). Now we analyze the errors we made in the final output. We consider the two cases:
    \begin{itemize}
    \item Case 1: If $q(S_l)\geq \beta$, then with probability $1-\delta$, we have $t\in \{l-1,l,l+1\}$ since we might behave arbitrarily in set $S_l$. In this case, we have $q(S_l)|S_l|\geq \alpha \beta m$, and so $|\{x:x\in S_{t'},t'<l,h^*(x)=1\}\leq \alpha \beta m$, because otherwise we have $\alpha^2\beta^2m^2$ anti-sort pairs, which violates our assumption on $|T|$. Similarly, $|\{x:x\in S_{t'},t'>l,h^*(x)=-1\}|\leq \alpha \beta m$. Counting also the possible errors made on $S_l$, the total number of errors is
    \[|\{\hat{y}_i:\hat{y}_i\ne h^*(x_i) \}|\leq \alpha m+2\alpha \beta m \leq \frac{\varepsilon n}{2}\left(1+\frac{1}{2}t\right)\leq \varepsilon n. \]
    \item Case 2: If $q(S_i)<\beta$ for all $i$, then we have $t\in \{l-1,l\}$. Now note that we have $|\{x\in S_{l-1}: h^*(x)=-1\}|\geq \alpha m/2$, and so $|\{x\in S_{t'}: t'<l-1, h^*(x)=1\}|\leq \alpha \beta m$ since otherwise at least $\alpha^2\beta m/2$ anti-sort pairs are present. So $|\{x\in S_{t'}: t'\leq l-1, h^*(x)=1\}|\leq 2\alpha \beta m$ considering $q(S_{l-1})<\beta$. Similarly, $|\{x\in S_{t'}: t'\geq l, h^*(x)=-1\}|\leq 2\alpha \beta m$. So the total number of errors is
     \[|\{\hat{y}_i:\hat{y}_i\ne h^*(x_i) \}|\leq 4\alpha \beta m \leq \varepsilon n. \]
    \end{itemize}
	So we have at most $\varepsilon n$ error under both cases.
	Now we examine the total query complexity: It takes $k=\tilde{\O}\left(\log(1/\delta)\left(\frac{1}{\varepsilon}\right)^{2\kappa-2}\right)$ queries for each set $S_t$, and we do this for $\O(\log(1/\alpha))=\O\left(\log\left(\frac{2m}{\varepsilon n}\right)\right)$ times. So the total query complexity is $$\tilde{\O}\left(\log\left(\frac{2m}{\varepsilon n}\right)\log(1/\delta)\left(\frac{1}{\varepsilon}\right)^{2\kappa-2}\right).$$
\end{proof}	
\section{Proof of Theorem \ref{thm:adgacagn}}
\begin{proof}
	The first part of proof is exactly the same as that of Theorem \ref{thm:adgactsy}. We now bound $\Pr[Y\ne \mu(S_i)|X\sim S_i]$. Suppose $q(S_i)<\beta$. Let $V=\{x: \Pr[Y\ne h^*(X)|X=x]>1/4 \}$ and $U=\{x_i: \Pr[Y\ne h^*(X)|X=x_i]>1/4 \}$. We have $P(V)\leq 4\nu$. By a relative Chernoff bound, if $\nu\leq C_1\varepsilon$ for a small enough constant $C_1$ we have
	\[\Pr[|U|\leq 8n\nu+3\log(4/\delta)]\leq \exp\left(-\frac{3\log (4/\delta)+4\nu n}{3}\right)\leq \delta/4. \]
	So if $\nu\leq \frac{1}{64}\varepsilon$ we have $|U|/n\leq \varepsilon/8$ with probability $\delta/4$. In this case, at most 1/4 of each $S_i$ is in $U$.
	
	For each set $S_i$, let $\bar{S}_i=\{x\in S_i: h^*(x)\ne \mu(S_i)\}$ and $\tilde{S}_i=\{x\in S_i, x\in U\}$. So for each set such that $q(S_i)\leq \beta$, we have
	\begin{align*}
	\Pr(Y\ne \mu(S_i)|X\sim S_i)\leq& \Pr(Y\ne \mu(S_i)|X\in \tilde{S}_i)\Pr(X\in \tilde{S}_i|X\sim S_i)+\\
	&\Pr(Y\ne \mu(S_i)|X\in \bar{S}_i)\Pr(X\in \bar{S}_i|X\sim S_i)+\\
	&\Pr(Y\ne \mu(S_i)|X\not\in \tilde{S}_i,X\not\in \bar{S}_i)\Pr(X\not\in \tilde{S}_i,X\not\in \bar{S}_i|X\sim S_i)\\
	\leq& 1\cdot \frac{1}{4}+1\cdot \beta+\left(\frac{3}{4}-\beta\right)\frac{1}{4}\\
	=& \frac{7}{16}+\frac{3}{4}\beta.
	\end{align*}
	So there exists constant $C_2$ such that if $\nu'\leq C_2\varepsilon^2\delta$ we have $\beta\leq \frac{1}{24}$, $\Pr(Y\ne \mu(S_i)|X\sim S_i)]\leq \frac{1}{2}-\frac{1}{32}$. 
    Thus by Hoeffding's inequality, the choice of $k$ yields that we recover $\mu(S_i)$ for each $i$ we visit with probability $\frac{\delta}{8\log(2/\varepsilon)}$.
	
	By similar analysis as the proof of Theorem \ref{thm:adgactsy}, we can show that the number of errors (i.e., $|\{\hat{y}_i: \hat{y}_i\ne h^*(x_i)\}|$) is at most $\varepsilon n$.
	
	
	Now examine the total query complexity: It takes $k=\O\left(\log(\log(1/\varepsilon)/\delta)\right)$ queries for each set $S_t$, and we do this for $\O(\log(1/\alpha))=\O(\log\left(\frac{2m}{\varepsilon n}\right))$ times. So the total query complexity is $$\O\left(\log\left(\frac{2m}{\varepsilon n}\right)\log\left(\frac{\log(1/\varepsilon)}{\delta}\right)\right).$$
\end{proof}	 	
\section{Proof for A$^2$-ADGAC}
We use the following lemma adapted from \cite{hanneke2014theory}:
\begin{lemma}[\cite{hanneke2014theory}, Lemma 3.1]
	\label{lemma:full}
	Suppose that $\mathcal{D}=\{x_1,x_2,...,x_n\}$ is i.i.d. sampled from $\P_\mathcal{X}$, and $h^*\in \bC$. There is a universal constant $c_0\in (1,\infty)$ such that for any $\gamma\in (0,1)$, and any $n\in \mathbb{N}$, letting
	\begin{align*}
	U(n,\gamma)=c_0\frac{d\log(n/d)+\log(1/\gamma)}{n},
	\end{align*}
	with probability at least $1-\gamma$, $\forall h\in \mathbb{C}$, the following inequalities hold:
	\begin{align*}
	\Pr_{X\sim \P_\mathcal{X}}[h(X)\ne h^*(X)]&\leq \max\{2\Pr_{X\sim \mathcal{D}}[h(X)\ne h^*(X)],U(n,\gamma) \},\\
	\Pr_{X\sim \mathcal{D}}[h(X)\ne h^*(X)]&\leq \max\{2\Pr_{X\sim \P_\mathcal{X}}[h(X)\ne h^*(X)],U(n,\gamma) \}.
	\end{align*}
    Here $X\sim \mathcal{D}$ means $X$ is uniformly sampled from finite set $\mathcal{D}$.
\end{lemma}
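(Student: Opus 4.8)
The plan is to recognize Lemma~\ref{lemma:full} as an instance of the classical relative (ratio-type) Vapnik--Chervonenkis uniform deviation bound applied to the set system of disagreement regions with $h^*$. First I would set, for each $h\in\bC$, $A_h=\{x\in\mathcal{X}: h(x)\ne h^*(x)\}$ and $\mathcal{F}=\{A_h:h\in\bC\}$; since membership of $x$ in $A_h$ is a fixed Boolean function of the value $h(x)$, the VC dimension of $\mathcal{F}$ is at most $d$. Writing $p(h)=\Pr_{X\sim\P_\mathcal{X}}[X\in A_h]$ and $\hat p(h)=\Pr_{X\sim\mathcal{D}}[X\in A_h]$, the entire lemma becomes a statement about uniform convergence of $\hat p(h)$ to $p(h)$ over $h\in\bC$, with the two displayed inequalities being the two directions of that convergence in multiplicative form.

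The core estimate I would establish is that there is an absolute constant $c\in(1,\infty)$ such that, with probability at least $1-\gamma$, for every $h\in\bC$ simultaneously,
\begin{align*}
p(h)&\le \hat p(h)+\sqrt{c\,\hat p(h)\,V}+cV,\\
\hat p(h)&\le p(h)+\sqrt{c\,p(h)\,V}+cV,\qquad\text{where } V=\frac{d\log(n/d)+\log(1/\gamma)}{n}.
\end{align*}
This is the standard variance-sensitive (Bernstein-type) relative deviation bound, and I would prove it by the usual three ingredients: (i) symmetrization against a ghost sample of size $n$; (ii) Sauer--Shelah to bound the number of distinct behaviours of $\mathcal{F}$ on the $2n$ points by $(2en/d)^d$, which after taking logarithms yields the $d\log(n/d)$ term; and (iii) for each fixed behaviour, a multiplicative/Bernstein tail inequality for the binomial-type deviation that is sensitive to the underlying set probability (so small sets concentrate faster), followed by a union bound and a change of variables back to $p$ and $\hat p$. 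The $\log(1/\gamma)$ term comes from the confidence level in the union bound.

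Finally I would convert the additive $\sqrt{\cdot}$ form into the stated $\max$ form by a short case analysis. Fix $h$ and abbreviate $p=p(h)$, $\hat p=\hat p(h)$, $U=U(n,\gamma)=c_0 V$. Using $\sqrt{c\,\hat p\,V}\le \tfrac12\hat p+\tfrac{c}{2}V$, the first inequality gives $p\le \tfrac32\hat p+\tfrac{3c}{2}V$; hence if $\hat p\ge \tfrac{3c}{2}V$ then $p\le 2\hat p$, while if $\hat p<\tfrac{3c}{2}V$ then $p\le 6cV$. Choosing $c_0\ge 6c$ makes both alternatives imply $p\le\max\{2\hat p,U\}$, and the reverse inequality follows by the identical manipulation of the second displayed line. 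The one step that needs real care is ingredient (iii): obtaining the bound with an absolute constant and the sharp $\log(n/d)$ factor (rather than $\log n$) requires the variance-sensitive tail bound rather than a crude Hoeffding estimate, and one must ensure the constant absorbed there is the same absolute constant that later determines the admissible $c_0$. Everything else is routine bookkeeping; since the statement is quoted verbatim from \cite{hanneke2014theory}, it suffices for the paper to cite that reference together with the reduction above.
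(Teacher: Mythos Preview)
The paper does not prove this lemma at all: it is stated verbatim as Lemma~3.1 of \cite{hanneke2014theory} and simply cited, then invoked in the proof of Theorem~\ref{thm:a2adgactsy}. So there is no ``paper's own proof'' to compare against; your proposal goes beyond what the paper does by actually sketching the argument.

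Your sketch is the standard route and is essentially correct. The reduction to the set system $\mathcal{F}=\{A_h\}$ of disagreement regions (which has VC dimension at most $d$ because $h^*\in\bC$) and the appeal to the relative/multiplicative VC deviation inequality is exactly how this result is derived in the literature. One small arithmetic slip in the case analysis: from $p\le \tfrac32\hat p+\tfrac{3c}{2}V$, the threshold needed to conclude $p\le 2\hat p$ is $\hat p\ge 3cV$ (so that $\tfrac{3c}{2}V\le \tfrac12\hat p$), not $\hat p\ge \tfrac{3c}{2}V$. This only shifts the constant in the complementary case to $p\le 6cV$ anyway, so your final choice $c_0\ge 6c$ still works and the argument goes through unchanged.
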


\begin{algorithm}[htb]
	\caption{A$^2$-ADGAC}
	\begin{algorithmic}[1]		
		\Require{$n_i,\mathbb{C},\varepsilon,\delta$, comparison oracle $f$.}
		\State Let $V\leftarrow \mathbb{C}$.
		\For{$i=1,2,...,\lceil\log(1/\varepsilon)\rceil$}
		\State Sample dataset $\tilde{S}$ of size $n_i$.
		\State Let $S\leftarrow \{x\in \tilde{S}:x\in \DIS(V) \}$.
		\State Run ADGAC (Subroutine \ref{algo:adgac}) with $S, \tilde{S},\varepsilon_i=2^{-(i+2)}, k_i$ and labeled dataset $W$.
		\State $V=V\setminus \{h:|W|\err_W(h)\geq n_i\varepsilon_i \}$.\label{step:filter}
		\EndFor
        \Ensure{Any Classifier $\hat{h}\in V$.}
	\end{algorithmic}
	\label{algo:a2adgac}
\end{algorithm}
\begin{proof}[Proof of Theorem \ref{thm:a2adgactsy}]
	For a labeled dataset $W=\{(x_i,\hat{y_i})\}_{i=1}^n$, let $\err_W(h)=\frac{1}{n}\sum_{i=1}^n I(h(x_i)\ne \hat{y_i})$ be the empirical risk of $h$ on $W$ for any $h\in \bC$ (remind that $\hat{y}_i$ are predictions of ADGAC). For a clearer explanation, we formalize the A$^2$-ADGAC algorithm in Algorithm \ref{algo:a2adgac}.
	We use induction to prove that after iteration $i$ we have $\Pr[h(X)\ne h^*(X)]\leq 4\varepsilon_i$ for all $h\in V$ after step \ref{step:filter} in Algorithm \ref{algo:a2adgac}. This proposition holds for $i=0$. Suppose it holds for $i-1$. By Theorem \ref{thm:adgactsy} and a union bound, with probability $1-\delta/4$, for every iteration $i$ we have at most $n_i\varepsilon_i$ errors with respect to $h^*$ after running ADGAC, i.e., $|W|\err_W(h^*)\leq n_i\varepsilon_i$. So $h^*$ will not be eliminated from $V$ in any iteration with probability $1-\delta/4$. 
    On the other hand, notice that by Step \ref{step:filter} in Algorithm \ref{algo:a2adgac} all functions $h\in V$ satisfies $|W|\err_W(h)\leq n_i\varepsilon_i$, so by triangle inequality we have (notice that $W$ is just the set $S$ with labels)
    \begin{align*}
    |S|\Pr_{X\sim S}[h(X)\ne h^*(X)] &=\left|\{x\in S: h(x)\ne h^*(x) \}\right|\\
    &\leq \left|\{(x,\hat{y})\in W: h(x)\ne \hat{y} \}\right|+\left|\{(x,\hat{y})\in W: h^*(x)\ne \hat{y} \}\right|\\
    &\leq 2\varepsilon_in_i.
    \end{align*}
    Also note that functions in $V$ agrees on $\tilde{S}\setminus S$; so $|\tilde{S}|\Pr_{X\sim \tilde{S}}[h(X)\ne h^*(X)]\leq 2\varepsilon_i n_i$, and since $|\tilde{S}|=n_i$ we have $\Pr_{X\sim \tilde{S}}[h(X)\ne h^*(X)]\leq 2\varepsilon_i$. Now using Lemma \ref{lemma:full} with $n=n_i$, we have $\Pr_{X\sim \P_{\mathcal{X}}}[h(x)\ne h^*(x)]\leq 4\varepsilon_i$ for every $h\in V$ by choosing $n_i$ such that $U\left(n_i,\frac{\delta}{4\log(1/\varepsilon)}\right)\leq \varepsilon_i$.
	So at the end of the algorithm it outputs a classifier with $\Pr[\hat{h}\ne h^*]\leq \varepsilon$.
	
	Now we examine the number of queries. By definition of disagreement coefficient, at round $i$ we have $\DIS(V)\leq \theta \varepsilon_i$; thus using a relative Chernoff bound we know that with probability $1-\delta/4$ we have
	\[m_i\coloneqq|S|\leq \log(12/\delta)+2n_i\theta \varepsilon_i=\dd \O\left(\theta \left(\left(d\log(1/\varepsilon)\right)+\left(\frac{1}{\varepsilon_i}\right)^{2\kappa-2}\log(1/\delta)\right)\right). \]
	It takes $O(m_i\log m_i)$ comparisons in expectation to rank the set, and there are $\log(1/\varepsilon)$ iterations. So the total comparison complexity is
	\[\E[\SC_{\comp}]\hspace{-0.1cm}=\hspace{-0.1cm}\tilde{\O}\left(\theta\log\left(\frac{1}{\varepsilon}\right)\left(\log d\theta+ (\kappa-1)\log\left(\frac{1}{\varepsilon}\right)\right)\hspace{-0.1cm}\left(\left(d\log\left(\frac{1}{\varepsilon}\right)\right)+\left(\frac{1}{\varepsilon}\right)^{2\kappa-2}\log(1/\delta)\right)\hspace{-0.1cm}\right).\]
	This obtained the stated comparison complexity.
	The label complexity follows by multiplying the label complexity of ADGAC by $\log(1/\varepsilon)$. Note that in every step we have $\frac{m_i}{\varepsilon_i n_i}=O\left(\min\left\{\theta,\frac{1}{\varepsilon}\right\}\right)$.
\end{proof}
\begin{proof}[Proof of Theorem \ref{thm:a2adgacagn}]
	With the same proof, A$^2$-ADGAC outputs a classifier with $\Pr[\hat{h}\ne h^*]\leq \varepsilon$ upon finishing. We know examine the number of queries. By definition of disagreement coefficient, at round $i$ we have $\DIS(V)\leq \theta \varepsilon_i$; thus using a Chernoff bound we know that with probability $1-\delta/4$ we have
	\[m_i\coloneqq|S|\leq \log(12/\delta)+2n_i\theta \varepsilon_i=\dd \O\left(\theta d\log\left(\frac{1}{\varepsilon_i}\right)\log\left(\frac{1}{\delta}\right)\right). \]
	It takes $O(m_i\log m_i)$ comparisons in expectation to rank the set, and there are $\log(1/\varepsilon)$ iterations. So the total comparison complexity is
	\[\E[\SC_{\comp}]=\tilde{\O}\left(\theta d \log(\theta d) \log\left(\frac{1}{\varepsilon_i}\right)\log\left(\frac{1}{\delta}\right)\right).\]
	This obtained the stated comparison complexity.
	The label complexity follows by multiplying the label complexity of ADGAC by $\log(1/\varepsilon)$. Note that in every step we have $\frac{m_i}{\varepsilon_i n_i}=O\left(\min\left\{\theta,\frac{1}{\varepsilon}\right\}\right)$.
\end{proof}
\section{Proof for Margin-ADGAC}
\begin{algorithm}[htb]
	\caption{Margin-ADGAC: Efficiently learning halfspaces with comparison}
	\begin{algorithmic}[1]		
%
		\Require{$\varepsilon,\delta$, target errors $\varepsilon_k$, sample sizes $n_k$, sequences $r_k,b_k,\tau_k$, precision value $\kappa$.}
		\State Draw $n_1$ unlabeled samples in $S$ and run ADGAC with $\left(S,n_1,\varepsilon_0,\frac{\delta}{8\log(1/\varepsilon)}, k_1\left(\varepsilon_0,\frac{\delta}{8\log(1/\varepsilon)}\right)\right)$, and obtain a labeled dataset $W$.
		\For{$k=1,2,...,s=\lceil\log(4/\varepsilon)\rceil$}
		\State Find $v_k\in B(w_{k-1},r_k)$ that approximately minimize training hinge loss over $W$, with $\|v_k\|_2\leq 1$:
		\[l_{\tau_k}(v_k,W)\leq \min_{w\in B(w_{k-1},r_k)\cap B(0,1)} l_{\tau_k}(w,W)+\kappa/8.\]
		\State $w_k\leftarrow \frac{v_k}{\|v_k\|_2}$.
		\State Sample another dataset $\tilde{S}$ of $n_k$ unlabeled samples.
		\State $S=\{x\in S:|w_k\cdot x|\leq b_k\}$.
		\State Run ADGAC with $\left(S,n_k,\varepsilon_k,\frac{\delta}{8\log(1/\varepsilon)},k^{(1)}\left(\varepsilon_k,\frac{\delta}{8\log(1/\varepsilon)}\right)\right)$ and obtain labeled dataset $W$.
		\EndFor
		\Ensure{Return $w_s$.}
	\end{algorithmic}
	\label{algo:localadgac}
\end{algorithm}

We first prove Theorem \ref{thm:localadgactsy}, and Theorem \ref{thm:localadgacagn} follows exactly the same proof with $\kappa=1$ and using Theorem \ref{thm:adgacagn}. For clearer explanation, we re-illustrate Margin-ADGAC in a form similar to that in \cite{awasthi2017power} in Algorithm \ref{algo:localadgac}.
The proof mostly follows that of \cite{awasthi2017power}. We give a refined sample complexity via Rademacher complexity following the ideas in \cite{yan2017revisiting}, and also change the proof according to the properties of ADGAC (note that we are not using independent samples by replace the sampling step with ADGAC).

To simplify notations, let $\err(w)$ be  $\err(h_w(x))=\err(\sign(w\cdot x))$. Define $\Delta_{D}(w,w')=\Pr_{X\sim D}[\sign(w\cdot X)\ne \sign(w'\cdot X)]$.
Also, let $\theta(w_1,w_2)$ be the angle between two vectors $w_1, w_2$. 
Let $D_{w,\gamma}=\{x:|w\cdot x|\leq \gamma\}$. 

The key step is to prove the following theorem:
\begin{theorem}\label{thm:keylocaladgac}
	For $k\leq \log(1/\varepsilon)$, if $\Delta_{\P_\mathcal{X}}(w_{k-1},w^*)\leq M^{-(k-1)}$, with probability $1-\frac{\delta}{k+k^2}$, after round $k$ of Margin-ADGAC we have $\Delta_{D_{w_{k-1},b_{k-1}}}(w_k,w^*)\leq \kappa$.
\end{theorem}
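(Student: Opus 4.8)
\textbf{Proof plan for Theorem \ref{thm:keylocaladgac}.}
The plan is to mirror the margin-based active learning analysis of \cite{awasthi2017power}, but replacing the arguments that rely on i.i.d.\ labels with ones compatible with the labels produced by ADGAC. First I would invoke Theorem \ref{thm:adgactsy}: with the stated choice $k^{(1)}(\varepsilon_k,\delta/(8\log(1/\varepsilon)))$ of sampling parameter and since $n_k$ is large enough, ADGAC run on the margin band $S=\{x\in\tilde S:|w_k\cdot x|\leq b_k\}$ outputs labels $\hat y_i$ such that the number of disagreements with $h^*$ inside $S$ is at most $\varepsilon_k n_k$, with probability $1-\delta/(8\log(1/\varepsilon))$; crucially the label budget here is tiny (independent of $d$). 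Because $\P_\mathcal{X}$ is isotropic log-concave, standard band-probability estimates give $\Pr[|w_k\cdot X|\leq b_k]=\Theta(b_k)$, so the empirical fraction of ADGAC-mislabeled points \emph{within the band} is $O(\varepsilon_k/b_k)$; with $b_k=\Theta(M^{-k})$ and $\varepsilon_k=\Theta(M^{-k}\cdot(\text{something small}))$ tuned as in the theorem's parameter settings, this fraction is at most a small constant times $\kappa$.

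Next I would decompose the hinge loss of the returned $w_k$ on the ADGAC-labeled set $W$ into a ``truthful'' part and an ``ADGAC-error'' part: for any $w$,
\[
 l_{\tau_k}(w,W)\;\le\; l_{\tau_k}(w,W^{h^*}) \;+\; \frac{1}{|W|}\sum_{(x,\hat y)\in W}\mathbbm{1}[\hat y\ne h^*(x)]\cdot\frac{2}{\tau_k}\,,
\]
where $W^{h^*}$ is the same sample labeled by $h^*$ and I used $l_{\tau_k}(w,x,y)\le 1+|w\cdot x|/\tau_k\le 1+b_k/\tau_k = O(1)$ on the band (up to the $2/\tau_k\cdot b_k$ bound on how much a single label flip can change a hinge term). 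Since ADGAC makes at most an $O(\varepsilon_k/b_k)$ fraction of errors on $W$, the extra term is $O(\varepsilon_k/(b_k\tau_k))$, which the parameter settings make $\le\kappa/8$. Symmetrically, $l_{\tau_k}(w,W^{h^*})\le l_{\tau_k}(w,W)+O(\varepsilon_k/(b_k\tau_k))$. Combined with the near-optimality guarantee $l_{\tau_k}(v_k,W)\le \min_{w\in B(w_{k-1},r_k)\cap B(0,1)}l_{\tau_k}(w,W)+\kappa/8$ from the algorithm, this transfers approximate hinge-loss minimality from the ADGAC labels back to the true labels, paying only $O(\kappa)$ extra slack. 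From here I would follow \cite{awasthi2017power} verbatim: bound the expected truthful hinge loss $L_{\tau_k}(w^*,D_{w_{k-1},b_{k-1}})$ of the target using log-concavity, use a Rademacher/uniform-convergence bound over the restricted ball $B(w_{k-1},r_k)$ to pass from empirical truthful hinge loss to its expectation (this is where I would plug in the refined $\tilde O(d\log^3(dk/\delta))$ sample complexity via Rademacher complexity, rather than a cruder VC bound, which is what lets $n_k$ have the improved $d$-dependence), and finally convert the resulting bound on $L_{\tau_k}(w_k,D_{w_{k-1},b_{k-1}})$ into a bound on the $0/1$ disagreement $\Delta_{D_{w_{k-1},b_{k-1}}}(w_k,w^*)\le\kappa$ using the standard fact that $\mathbbm{1}[\sign(w\cdot x)\ne\sign(w^*\cdot x)]$ is controlled by the hinge loss on the band together with the log-concave lower bound on the density near the boundary.

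The main obstacle I anticipate is the non-independence of the ADGAC labels: the $\hat y_i$ are produced by a data-dependent sorting-and-binary-search procedure, so the usual concentration arguments over $W$ (which assume the labels are i.i.d.\ draws conditioned on $X_i$) do not apply directly. My way around this is exactly the decomposition above — I never need concentration of the \emph{ADGAC labels}; I only need (i) the deterministic-once-conditioned-on-the-high-probability-event bound $|\{i:\hat y_i\ne h^*(x_i)\}|\le\varepsilon_k n_k$ from Theorem \ref{thm:adgactsy}, and (ii) uniform convergence of hinge loss for the \emph{truthful} labels $W^{h^*}$, which \emph{are} a legitimate i.i.d.\ sample (the $x_i$ are i.i.d.\ and $h^*$ is a fixed function), so Rademacher bounds apply cleanly. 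A secondary technical point is keeping the band-conditional probability $\Pr[X\in D_{w_k,b_k}]$ bounded below so that the global ADGAC error rate $\varepsilon_k$ does not blow up when restricted to the band; this is handled by the isotropic log-concave band estimates and the geometric choice of $b_k,r_k,\tau_k=\Theta(M^{-k})$. The induction across rounds $k$ then follows, since $\Delta_{D_{w_{k-1},b_{k-1}}}(w_k,w^*)\le\kappa$ plus the log-concave localization lemma of \cite{awasthi2017power} upgrades to $\Delta_{\P_\mathcal{X}}(w_k,w^*)\le M^{-k}$, closing the loop; a union bound over the $\le\log(4/\varepsilon)$ rounds gives overall failure probability $\delta$.
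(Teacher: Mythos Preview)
Your overall strategy---split the empirical hinge loss into a truthful part and an ADGAC-corruption part, use Rademacher complexity for the former, and use the ADGAC error count for the latter---is exactly the route the paper takes. The gap is in how you control the corruption part. You write $l_{\tau_k}(w,x,y)\le 1+|w\cdot x|/\tau_k\le 1+b_k/\tau_k=O(1)$ ``on the band,'' but this is false: the band is $\{x:|w_{k-1}\cdot x|\le b_{k-1}\}$, while $w$ ranges over $B(w_{k-1},r_k)$, so
\[
|w\cdot x|\;\le\;|w_{k-1}\cdot x|+|(w-w_{k-1})\cdot x|\;\le\;b_{k-1}+r_k\|x\|_2.
\]
Under an isotropic log-concave distribution $\|x\|_2$ is of order $\sqrt{d}$ (the paper quotes $\max_{x\in W}\|x\|_2\le c\sqrt{d}\log(|W|k/\delta)$), so the per-point hinge change after a label flip is $\Theta(\sqrt{d})$, not $O(1)$. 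Multiplying the ADGAC-error fraction $\varepsilon_k n_k/m_k\asymp\varepsilon_k/b_k$ by this $\sqrt{d}$ factor no longer gives $\le\kappa/8$ with the stated parameter choice $\varepsilon_k=\Theta(\tau_k^2 b_k\kappa^2/z_k^2)$ (which makes $\varepsilon_k/b_k=\Theta(\kappa^2)$, dimension-free by design), and shrinking $\varepsilon_k$ by $\sqrt{d}$ would spoil the claimed label and comparison complexities downstream.

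The paper's Lemma~\ref{lem:boundl} fixes exactly this: instead of an $L^\infty$ bound on $|w\cdot x|$, it uses Cauchy--Schwarz over the (adversarially chosen) corrupted set $N$,
\[
\frac{1}{\tau_k|W|}\sum_{x\in N}2|w\cdot x|\;\le\;\frac{2}{\tau_k}\sqrt{\frac{|N|}{|W|}}\cdot\sqrt{\frac{1}{|W|}\sum_{x\in W}(w\cdot x)^2},
\]
and then invokes the log-concave second-moment estimate $\E_{x\sim D_{w_{k-1},b_{k-1}}}[(w\cdot x)^2]\le c_4(r_k^2+b_{k-1}^2)=c_4 z_k^2$ from Lemma~\ref{lem:ilc}(\ref{item:boundsquare}), followed by McDiarmid. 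This yields a bound proportional to $\sqrt{\varepsilon_k n_k/m_k}\cdot z_k/\tau_k$, which is dimension-free and is precisely what the parameter choice $\varepsilon_k=\Theta(\tau_k^2 b_k\kappa^2/z_k^2)$ is calibrated against. Once you replace your crude $L^\infty$ step with this Cauchy--Schwarz/second-moment argument, the rest of your plan (Rademacher for the truthful labels, then the chain $\Delta\le L_{\tau_k}(v_k)\le\cdots\le\kappa$) matches the paper's proof.
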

To prove the theorem, we first list useful properties of isotropic log-concave distributions and fix the parameters we use for the algorithm. We use exactly the same parameters for $r_i,\tau_i,b_i,z_i$ as in \cite{awasthi2017power}, and we restate them here for completeness.

\begin{lemma}[\cite{awasthi2017power,balcan2013active,lovasz2007geometry}]\label{lem:ilc}
	Suppose $X\sim \mathcal{P}_\mathcal{X}$ is a isotropic log-concave distribution in $\mathbb{R}^d$ with probability density function $f$. Then
	\begin{enumerate}[(a)]
		\item There is an absolute constant $c_1$ such that, if $d=1$, $f(x)>c_1$ for all $x\in [-1/9,1/9]$.
		\item There is an absolute constant $c_2$ such that for any two unit vectors $u$ and $v$ in $R^d$ we have $c_2\theta(u,v)\leq \Delta_{\P_\mathcal{X}}(u,v)$.
		\item There exists constant $c_3$ such that for any unit vector $w$ and $\gamma>0$, $\Pr[|w\cdot X|\leq \gamma]\leq c_3\gamma$.\label{item:boundprob}
		\item There is a constant $c_4$ such that for any unit vector $u$, all $0<\gamma<1$, for all $a$ such that $\|u-a\|_2\leq \gamma$ and $\|a\|_2\leq 1$, $\E_{X\sim D_{u,\gamma}}[(a\cdot X)^2]\leq c_4(r^2+\gamma^2)$. \label{item:boundsquare}
		\item For any $c_5>0$, there is a constant $c_6>0$ such that the following holds: let $u$ and $v$ be two unit vectors in $\mathbb{R}^d$, and assume that $\theta(u,v)=\alpha<\pi/2$. Then $\Pr_{X \sim\P_{\mathcal{X}}}[\sign(u\cdot X)\ne \sign(v \cdot X) \text{ and }|v\cdot X|\leq c_6\alpha]\leq c_5\alpha$.
	\end{enumerate}
\end{lemma}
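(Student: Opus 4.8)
The plan is to treat Lemma~\ref{lem:ilc} as a compilation of standard facts about isotropic log-concave measures and to prove each part by reducing to one or two dimensions, where the sharp density estimates of \cite{lovasz2007geometry} apply. The reduction rests on one closure property: any linear projection (in particular any one-coordinate marginal) of an isotropic log-concave distribution is again isotropic log-concave (Pr\'ekopa--Leindler). The three low-dimensional facts I will lean on throughout are: (i) every one-dimensional isotropic log-concave density $f$ satisfies $\sup_x f(x)\le 1$; (ii) $f(0)$, and more generally the density on a fixed neighbourhood of the origin in any fixed low dimension, is bounded below by an absolute constant; and (iii) a log-concave density decays at least exponentially away from any point where it is not small. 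Part~(a) is exactly the near-origin lower bound in dimension one: (ii) gives a lower bound at $0$, log-concavity (concavity of $\log f$) propagates it as a chord bound over $[-1/9,1/9]$, and the unit variance normalisation rules out the fast decay that would otherwise be consistent with log-concavity; a short calculation fixes $c_1$ and the radius $1/9$.

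Parts~(b), (c) and (e) then follow by projection. For (c), the marginal of $w\cdot X$ along the unit vector $w$ is one-dimensional isotropic log-concave, so by (i) its density is at most $1$ and $\Pr[|w\cdot X|\le\gamma]\le 2\gamma$; take $c_3=2$. For (b), project $X$ onto $\mathrm{span}(u,v)$, obtaining a two-dimensional isotropic log-concave distribution; the event $\{\sign(u\cdot X)\ne\sign(v\cdot X)\}$ is a union of two opposite wedges of opening angle $\theta(u,v)$ meeting at the origin, and integrating the two-dimensional density --- bounded below by an absolute constant on a fixed disk around the origin by the two-dimensional form of (ii) --- over the part of those wedges lying in that disk gives a lower bound proportional to $\theta(u,v)$. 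For (e), work again in $\mathrm{span}(u,v)$: the region $\{\sign(u\cdot X)\ne\sign(v\cdot X)\}\cap\{|v\cdot X|\le c_6\alpha\}$ is the portion of the angle-$\alpha$ wedge within perpendicular distance $c_6\alpha$ of its own edge $\{v\cdot x=0\}$. Splitting the radial integral at radius $c_6$ --- inside, the whole wedge cross-section lies in the slab and contributes at most a constant times $\alpha c_6^2$ against the bounded density; outside, only a strip of width $c_6\alpha$ survives and contributes at most $O(c_6\alpha)$ against the density's exponential tail, fact (iii) --- both pieces are $O(c_6\alpha)$, so choosing $c_6$ a small enough absolute multiple of $c_5$ yields $\le c_5\alpha$.

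For part~(d), write $\beta=a\cdot u$ and $w=a-\beta u$, so $w\perp u$ and, since $w=\Pi_{u^\perp}(a-u)$, we have $\|w\|_2\le\|a-u\|_2$ (the $r$ and $\gamma$ in the statement enter the conclusion symmetrically, so the mild index clash there is harmless). From $(a\cdot X)^2\le 2\beta^2(u\cdot X)^2+2(w\cdot X)^2$ and $|\beta|\le\|a\|_2\le1$, conditioning on $D_{u,\gamma}$ bounds the first term by $2\gamma^2$ since $|u\cdot X|\le\gamma$ there. The second term is handled by the standard estimate that conditioning an isotropic log-concave distribution on a band $\{|u\cdot x|\le\gamma\}$ with $\gamma\le1$ leaves the second moment along every unit direction orthogonal to $u$ bounded by an absolute constant \cite{balcan2013active,lovasz2007geometry}; scaling by $\|w\|_2^2$ gives $\E_{D_{u,\gamma}}[(w\cdot X)^2]=O(\|w\|_2^2)$, and adding the two contributions yields the bound $c_4(r^2+\gamma^2)$.

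The only step that is not elementary integration against a one- or two-dimensional log-concave density is the orthogonal conditional second-moment estimate used for (d): the naive bound $\E_{D_{u,\gamma}}[(v\cdot X)^2]\le\E[(v\cdot X)^2]/\Pr[|u\cdot X|\le\gamma]$ degrades like $1/\gamma$ as $\gamma\to0$, so one genuinely needs that slabbing a single coordinate of a log-concave density does not push the orthogonal marginal away from the origin (equivalently, that the conditional distribution stays near-isotropic in the orthogonal directions). I will invoke this from \cite{balcan2013active,lovasz2007geometry} rather than reprove it. All constants produced by the above are absolute, as required.
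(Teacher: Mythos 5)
The paper itself offers no proof of Lemma~\ref{lem:ilc}: all five parts are imported wholesale from \cite{awasthi2017power,balcan2013active,lovasz2007geometry}, so there is nothing in the text to compare your argument against line by line. What you have written is a faithful reconstruction of how those references establish the facts, and the route is the standard one: reduce to one- or two-dimensional marginals (which remain isotropic log-concave), then use the Lov\'asz--Vempala density bounds (upper bound $1$, lower bound near the origin, exponential tails). Your treatment of (a), (b), (c) is exactly the textbook argument; for (d) you correctly isolate the one genuinely non-elementary ingredient --- that conditioning on the slab $\{|u\cdot x|\le\gamma\}$ leaves the orthogonal second moments bounded by an absolute constant rather than by the trivial $O(1/\gamma)$ --- and you are right that this must be invoked from \cite{awasthi2017power,balcan2013active} rather than derived from the marginal bounds alone; you also correctly diagnose the $r$/$\gamma$ index clash in the statement as a typo. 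The one caveat concerns (e): you prove the inequality exactly as printed ($|v\cdot X|\le c_6\alpha$), which is true and follows from your small-$c_6$ wedge-intersect-strip computation, but the version actually used in the margin-based analysis (and stated in \cite{awasthi2017power,balcan2013active}) bounds the disagreement \emph{outside} the band, $\Pr[\sign(u\cdot X)\ne\sign(v\cdot X)\text{ and }|v\cdot X|\ge c_6\alpha]\le c_5\alpha$, which requires taking $c_6$ \emph{large} and using the exponential tail of the two-dimensional density along the wedge; the printed ``$\le$'' is almost certainly a typo for ``$\ge$'', and your argument as written does not cover that intended version, though the needed modification is routine. With that flagged, the proposal is correct for the statement as given.
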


Now we give the settings of parameters. Let $M=\max\{\frac{2}{c_2\pi},2 \}$. Let $c_1'$ be the value of $c_6$ in Lemma \ref{lem:ilc} corresponding to the case where $c_5$ is $\frac{c_2}{4M}$; let $b_k=c_1'M^{-k}$. Let $r_k=\min\{M^{-(k-1)}/c_2, \pi/2 \}$ and $\kappa=\frac{1}{4c_1'M}$. Let $\tau_k=\frac{c_1\min\{b_{k-1},1/9 \}\kappa}{6}$, and $z_k^2=r_k^2+b_{k-1}^2$. Let $\varepsilon_k=\frac{c_3\tau_k^2b_k\kappa^2}{256 c_4 z_k^2}$, and $n_k=O\left(\frac{1}{b_k}d\log^3\left(\frac{dk}{1/\delta}\right)\right)$. Also let $m_k=2c_3b_kn_k+\log(12k/\delta)$.

Then we prove the following lemma:
\begin{lemma}\label{lem:boundl}
	Suppose $|W|\geq m_k$. Let $c(W)$ be the set with truthful labels w.r.t. $w^*$, i.e., $c(W)=\{(x,\sign(w^*\cdot x)):x\in W\}$. For any $w\in B(w_{k-1},r_k)$, with probability $1-\frac{\delta}{3(k+k^2)}$ we have
	\[|l(w,W)- l(w,c(W))|\leq \kappa/8.\]
\end{lemma}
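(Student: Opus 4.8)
The plan is to bound the difference between the empirical hinge loss of $w$ computed on the ADGAC-generated labels $W$ and the empirical hinge loss computed on the true labels $c(W)$. The key observation is that $W$ and $c(W)$ share the same feature vectors and differ only on the (at most) $\varepsilon_{k-1}|W|$ points where ADGAC makes a labeling error, as guaranteed by Theorem \ref{thm:adgactsy} applied in the previous round (the preceding call to ADGAC with parameter $\varepsilon_{k-1}$ and failure probability $\frac{\delta}{8\log(1/\varepsilon)}$). On any single point $x$, flipping the label changes $l_{\tau_k}(w,x,y)$ from $\max(1 - y(w\cdot x)/\tau_k, 0)$ to $\max(1 + y(w\cdot x)/\tau_k, 0)$, so the per-point change is at most $2|w\cdot x|/\tau_k$. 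Hence
\[
|l_{\tau_k}(w,W) - l_{\tau_k}(w,c(W))| \le \frac{2}{\tau_k |W|}\sum_{x \in E} |w\cdot x|,
\]
where $E \subseteq W$ is the error set with $|E| \le \varepsilon_{k-1}|W|$.

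The next step is to control $\frac{1}{|W|}\sum_{x\in E}|w\cdot x|$. Since all points in $W$ lie in the band $D_{w_{k-1},b_{k-1}}$ and $w \in B(w_{k-1},r_k)\cap B(0,1)$, Lemma \ref{lem:ilc}\eqref{item:boundsquare} gives $\E_{X\sim D_{w_{k-1},b_{k-1}}}[(w\cdot X)^2] \le c_4(r_k^2 + b_{k-1}^2) = c_4 z_k^2$. By Cauchy–Schwarz, $\frac{1}{|W|}\sum_{x\in E}|w\cdot x| \le \sqrt{\frac{|E|}{|W|}}\cdot\sqrt{\frac{1}{|W|}\sum_{x\in W}(w\cdot x)^2}$, and the second factor concentrates around $\sqrt{c_4 z_k^2}$ via a uniform (over $w$ in the relevant ball) deviation bound — this is where a Rademacher/VC argument on the function class $\{x \mapsto (w\cdot x)^2 : w \in B(w_{k-1},r_k)\}$ restricted to the band is needed, and the choice $n_k = O\big(\frac{1}{b_k}d\log^3(dk/\delta)\big)$ together with $|W| \ge m_k = 2c_3 b_k n_k + \log(12k/\delta)$ (so that with probability $1-\frac{\delta}{3(k+k^2)}$ enough samples fall in the band and the empirical second moment is within a constant factor of the true one). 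Combining, $\frac{1}{|W|}\sum_{x\in E}|w\cdot x| = O(\sqrt{\varepsilon_{k-1}}\cdot z_k)$ uniformly over $w$.

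Plugging back, we get $|l_{\tau_k}(w,W)-l_{\tau_k}(w,c(W))| = O\big(\frac{\sqrt{\varepsilon_{k-1}}\, z_k}{\tau_k}\big)$. It remains to check that the parameter settings make this at most $\kappa/8$: substituting $\varepsilon_{k-1} = \Theta\big(\frac{\tau_{k-1}^2 b_{k-1}\kappa^2}{z_{k-1}^2}\big)$ and the definitions of $\tau_k, b_k, z_k$ (all $\Theta(M^{-k})$ up to the log-concave constants), one verifies $\sqrt{\varepsilon_{k-1}}\, z_k/\tau_k = O(\kappa)$ with the hidden constant absorbed by choosing the constant in $\varepsilon_k$ small enough — this is the routine calibration step. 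I expect the main obstacle to be the uniform concentration of the empirical second moment $\frac{1}{|W|}\sum_{x\in W}(w\cdot x)^2$ over the ball $B(w_{k-1},r_k)$ with only $\tilde O(d/b_k)$ band samples: one must either invoke a Rademacher contraction bound for the squared-linear class (truncated using $\|w\|\le 1$ and $|w\cdot x|$ bounds that hold with high probability for log-concave $x$) or an $\varepsilon$-net argument over the ball, being careful that the per-sample quantity $(w\cdot x)^2$ is not bounded a priori and requires a high-probability envelope. The rest — the per-point Lipschitz bound on hinge loss under a label flip, Cauchy–Schwarz, and the final parameter bookkeeping — is straightforward.
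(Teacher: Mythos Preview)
Your approach follows the same skeleton as the paper: both start from the per-point label-flip bound $|l_{\tau_k}(w,x,y)-l_{\tau_k}(w,x,-y)|\le 2|w\cdot x|/\tau_k$, apply Cauchy--Schwarz to separate the error-set size from a second-moment term, and invoke the log-concave bound $\E_{D_{w_{k-1},b_{k-1}}}[(w\cdot X)^2]\le c_4 z_k^2$ (Lemma~\ref{lem:ilc}\eqref{item:boundsquare}). Two differences are worth flagging.

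First, your bound $|E|\le \varepsilon_{k-1}|W|$ is not what ADGAC delivers: Theorem~\ref{thm:adgactsy} guarantees at most $\varepsilon n$ errors where $n$ is the \emph{full} unlabeled pool, not the band-restricted set, so the correct fraction is $|E|/|W|\le \varepsilon_k n_k/m_k\approx \varepsilon_k/b_k$. This extra $1/b_k$ is precisely why the paper's choice of $\varepsilon_k$ carries a factor of $b_k$; with the right fraction your final calibration $\sqrt{\varepsilon_k n_k/m_k}\,z_k/\tau_k=O(\kappa)$ still goes through, but as written your bookkeeping understates the error fraction.

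Second, the concentration step differs. You propose to uniformly concentrate the empirical second moment $\tfrac{1}{|W|}\sum_{x\in W}(w\cdot x)^2$ over $w\in B(w_{k-1},r_k)$ via a Rademacher or net argument. The paper instead defines $\varphi(W)=\sum_{x\in N'}|w\cdot x|$ with $N'$ the worst $\tfrac{\varepsilon_k n_k}{m_k}$-fraction of $W$ (largest $|w\cdot x|$), bounds $\E[\varphi(W)]$ directly against the \emph{population} second moment via Cauchy--Schwarz and Jensen, and then applies McDiarmid after establishing the high-probability envelope $|w\cdot x|/\tau_k\le c'\sqrt{d}\log(|W|k/\delta)$ (from $\|x\|_2\le c\sqrt{d}\log(|W|k/\delta)$, Lemma~D.4 of \cite{awasthi2017power}). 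The paper's route avoids the separate uniform-concentration lemma you anticipate as the main obstacle; your route, on the other hand, makes the uniformity over the data-dependent $v_k$ explicit, which the paper's McDiarmid step (stated for a fixed $w$) leaves implicit.
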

\begin{proof}
	Let $N=\{(x,\hat{y})\in W:\hat{y}\ne \sign(w^*\cdot x) \}$ be the set where ADGAC has $x$'s label different than $\sign(w^*\cdot x)$ (remind that $\hat{y}$ is the prediction of ADGAC). We have
	\begin{align*}
	l(w,W)&=\frac{1}{|W|}\sum_{(x,\hat{y})\in W} l_{\tau_k}(w,x,\hat{y})\\
	&=\frac{1}{|W|}\left(\sum_{(x,y)\not\in N}l_{\tau_k}(w,x,\sign(w^*\cdot x))+ \sum_{(x,y)\in N}l_{\tau_k}(w,x,-\sign(w^*\cdot x)) \right).\\
	\end{align*}
	So
	\begin{align}
	|l(w,W)- l(w,c(W))|&\leq \frac{1}{\tau_k|W|}\sum_{x\in N}2(w\cdot x)\nonumber\\
	&\leq \frac{1}{\tau_k|W|}\sum_{x\in W}I(x\in N) 2(w\cdot x).\label{eqn:relationhinge}
	\end{align}
	We use the following lemma from \cite{awasthi2017power}:
	\begin{lemma}[Lemma D.4, \cite{awasthi2017power}]
		For an absolute constant $c$, with probability $1-\frac{\delta}{6(k+k^2)}$,
		\[\max_{x\in W} \|x\|_2\leq c\sqrt{d}\log\left(\frac{|W|k}{\delta}\right). \]
	\end{lemma}
	Note that
	\[|w\cdot x|\leq |w_{k-1}\cdot x|+|(w-w_{k-1})\cdot x|\leq b_k+r_k\|x\|_2. \]
	So with probability $1-\frac{\delta}{6(k+k^2)}$, an event $E_\delta$ happens such that
	\[\frac{|w\cdot x|}{\tau_k}\leq c'\sqrt{d}\log\left(\frac{|W|k}{\delta}\right) \]
	for all $x\in W$, for some constant $c'$.
	
	Notice that $\frac{|N|}{|W|}\leq \frac{\varepsilon_k n_k}{m_k}$. Let $N'$ be a $\frac{\varepsilon_k n_k}{m_k}$ fraction of $W$ with the largest values of $|w\cdot x|$. Let $\varphi(W)=\sum_{x\in N'} |w\cdot x|$. So by \eqref{eqn:relationhinge} we have $|l(w,W)-l(w,c(W))|\leq \frac{2}{\tau_k|W|}\varphi(W)$. Now we have
	\begin{align*}
	\E[\varphi(W)]&=\E\left[\sum_{x\in W} \delta(x\in N')|w\cdot x|\right]\\
	&\leq \sqrt{\frac{|N'|}{|W|}}\E\left[\sqrt{\sum_{x\in W} (w\cdot x)^2}\right]\\
	&\leq \sqrt{\frac{\varepsilon_k n_k}{m_k}} \sqrt{\E\left[\sum_{x\in W} (w\cdot x)^2\right]}\\
	&\leq \sqrt{\frac{\varepsilon_k n_k}{m_k}} \sqrt{c_4}z_k|W|\leq \kappa \tau_k |W|/16.
	\end{align*}
	The first inequality is by Cauchy-Schwartz inequality; the second is by Jensen's inequality; the third inequality is by property \ref{item:boundsquare} in Lemma \ref{lem:ilc}; the last inequality is by the value of $\varepsilon_i$. If we condition $\P_\mathcal{X}$ on $E_\delta$, the above expectation will be smaller since we bound $|w\cdot x|$ from above. Now by Mcdiarmid's inequality, $\frac{1}{|W|}\varphi(W)$ deviates by at most $\frac{c'\sqrt{d}\log\left(\frac{|W|k}{\delta}\right)}{|W|}$ when we change a single value of $w\cdot x$ for some $x\in W$. So by McDiarmid's inequality, using $|W|\geq m_k=\Omega(d\log^2(d/\delta))$, with probability $1-\frac{\delta}{3(k+k^2)}$ we have
	\[|l(w,W)-l(w,c(W))|\leq \E[\varphi(W)|E_\delta]+\kappa/16\leq \kappa/8.  \]
\end{proof}

The other lemma is about bounding the difference between $l(w,c(W))$ and $E_W[l(w,c(W))]$. We improve the results in \cite{awasthi2017power} using Rademacher complexity as below.

\begin{lemma}\label{lem:boundE}
	With probability $1-\frac{\delta}{6(k+k^2)}$ we have
	\[|\E_W[l(w,x,\sign(w^*\cdot x))]-l(w,W)|\leq \kappa/16. \]
\end{lemma}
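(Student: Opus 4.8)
The plan is to bound the deviation between the empirical hinge loss $l(w,W)$ on the truthful-labelled data $c(W)$ and its expectation uniformly over the ball $w\in B(w_{k-1},r_k)\cap B(0,1)$ via a standard Rademacher-complexity argument, which is exactly where the claimed improvement over \cite{awasthi2017power} in the dependence on $d$ comes from. First I would set up the function class $\mathcal{F}=\{x\mapsto l_{\tau_k}(w,x,\sign(w^*\cdot x)) : \|w-w_{k-1}\|_2\le r_k,\ \|w\|_2\le 1\}$ and recall the symmetrization inequality: with probability $1-\gamma$, $\sup_{w}|\E_W[l_{\tau_k}(w,x,\sign(w^*\cdot x))]-l_{\tau_k}(w,W)|\le 2\mathfrak{R}_{|W|}(\mathcal{F})+O\big(B\sqrt{\log(1/\gamma)/|W|}\big)$, where $B$ is a high-probability bound on $\sup_{w,x\in W}l_{\tau_k}(w,x,\cdot)$. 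From Lemma~D.4 of \cite{awasthi2017power} we already have $\max_{x\in W}\|x\|_2\le c\sqrt{d}\log(|W|k/\delta)$ with high probability, and since $|w\cdot x|\le b_k+r_k\|x\|_2$, after dividing by $\tau_k$ the loss is bounded by roughly $c'\sqrt{d}\log(|W|k/\delta)$ on the good event $E_\delta$.

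The key step is bounding the Rademacher complexity $\mathfrak{R}_{|W|}(\mathcal{F})$. Since $l_{\tau_k}(w,x,y)=\max(1-y(w\cdot x)/\tau_k,0)$ is $1/\tau_k$-Lipschitz in $w\cdot x$, by Talagrand's contraction lemma $\mathfrak{R}_{|W|}(\mathcal{F})\le \frac{1}{\tau_k}\mathfrak{R}_{|W|}(\{x\mapsto w\cdot x\})$. The linear class over the ball $B(w_{k-1},r_k)\cap B(0,1)$ has Rademacher complexity controlled by $\frac{1}{|W|}\E[\sup_{\|v-w_{k-1}\|_2\le r_k,\|v\|_2\le 1}\sum_i \sigma_i (v\cdot x_i)]\le \frac{r_k}{|W|}\E\|\sum_i\sigma_i x_i\|_2 \le \frac{r_k}{|W|}\sqrt{\E\|\sum_i\sigma_i x_i\|_2^2}=\frac{r_k}{|W|}\sqrt{\sum_i\|x_i\|_2^2}$, and since $\E_{X\sim D_{w_{k-1},b_{k-1}}}[(v\cdot X)^2]\le c_4 z_k^2$ by property~\ref{item:boundsquare} of Lemma~\ref{lem:ilc}, one gets $\mathfrak{R}_{|W|}(\mathcal{F})=O\!\big(\frac{r_k \sqrt{c_4}\, z_k}{\tau_k\sqrt{|W|}}\big)$ (modulo the $w_{k-1}$-offset term, which only contributes $r_k\|w_{k-1}\cdot x\|/\tau_k$ and is handled the same way). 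Plugging in the parameter settings $\tau_k=\Theta(b_{k-1}\kappa)$, $z_k^2=r_k^2+b_{k-1}^2$, $n_k=\tilde{O}(d/b_k)$, and $|W|\ge m_k=\Theta(b_k n_k)=\tilde\Omega(d)$, a direct computation shows the Rademacher term and the $B\sqrt{\log(1/\gamma)/|W|}$ term are each $O(\kappa)$, and by sharpening constants in the choice of $n_k$ (the $\log^3$ factor) this is driven below $\kappa/16$ with probability $1-\frac{\delta}{6(k+k^2)}$.

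The main obstacle I anticipate is handling the conditioning on the event $E_\delta$ cleanly: the contraction and symmetrization steps are cleanest for an unbounded-but-subexponential loss, whereas here we want a uniform bound $B$, so I would either (i) work on $E_\delta$ and absorb the $\delta/(6(k+k^2))$ failure probability into the union bound, arguing that conditioning only shrinks the relevant expectations (as in the proof of Lemma~\ref{lem:boundl}), or (ii) truncate the loss at level $B$ and control the truncation error separately. A secondary subtlety is that the samples fed to the hinge-loss minimization come through ADGAC and the margin filtering, so $W$ is not an i.i.d. sample from $\P_\mathcal{X}$ but from $D_{w_{k-1},b_{k-1}}$ (conditioned on the margin band); this is fine because property~\ref{item:boundsquare} of Lemma~\ref{lem:ilc} is stated precisely for $D_{u,\gamma}$, and the truthful labels $\sign(w^*\cdot x)$ used in $c(W)$ make the samples effectively i.i.d. from that conditional distribution — the adaptivity only enters through $w_{k-1}$, which is fixed before this round's sampling. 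Once these points are dispatched, the bound $|\E_W[l_{\tau_k}(w,x,\sign(w^*\cdot x))]-l_{\tau_k}(w,W)|\le\kappa/16$ follows by combining the pieces and taking a union bound with the event from Lemma~D.4 of \cite{awasthi2017power}.
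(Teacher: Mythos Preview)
Your overall architecture is the same as the paper's: symmetrization plus Talagrand contraction to reduce to the Rademacher complexity of the linear class over $B(w_{k-1},r_k)$, combined with the high-probability bound on $\max_{x\in W}\|x\|_2$ from Lemma~D.4 of \cite{awasthi2017power} to control the range of the loss. The handling of $E_\delta$ and the i.i.d.\ structure from $D_{w_{k-1},b_{k-1}}$ is also exactly what the paper does.

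There is, however, one genuine gap in your Rademacher bound. After arriving at $\frac{r_k}{|W|}\sqrt{\sum_i\|x_i\|_2^2}$ you invoke property~\ref{item:boundsquare} of Lemma~\ref{lem:ilc} to conclude $\mathfrak R_{|W|}(\mathcal F)=O\!\big(\tfrac{r_k\sqrt{c_4}\,z_k}{\tau_k\sqrt{|W|}}\big)$. But property~\ref{item:boundsquare} only bounds $\E_{X\sim D_{u,\gamma}}[(a\cdot X)^2]$ for vectors $a$ with $\|a-u\|_2\le r$ and $\|a\|_2\le 1$; it says nothing about $\E\|X\|_2^2=\sum_j \E[(e_j\cdot X)^2]$, since the coordinate directions $e_j$ do not lie in that small ball. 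Your resulting bound is dimension-free, which is too strong: for an isotropic log-concave distribution restricted to a band, $\E\|X\|_2^2$ is still of order $d$, not $z_k^2$. The paper instead bounds $\sum_i\|x_i\|_2^2$ on the event $E_\delta$ directly via $\|x_i\|_2\le c\sqrt d\log(|W|k/\delta)$, obtaining $\mathfrak R_{|W|}(\mathcal F)=O\!\big(\tfrac{r_k}{\tau_k}\sqrt{d\log^2(|W|k/\delta)/|W|}\big)$. This is precisely why one needs $|W|\ge m_k=\Omega\big(d\log^3(dk/\delta)\big)$ to push the Rademacher term below $\kappa/16$; with your claimed bound the $d$ in $m_k$ would be unmotivated. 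Replace the appeal to property~\ref{item:boundsquare} by the $\|x_i\|_2$ bound you already cited from Lemma~D.4 and the rest of your argument goes through as in the paper.
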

\begin{proof}
	Note that every $x\in W$ is sampled independently from $D_{w_k,b_{k-1}}$. Following the same proof as in Lemma \ref{lem:boundl}, an Event $E_\delta$ happens with probability $1-\frac{\delta}{6(k+k^2)}$ that 	
	\[\left|\frac{w\cdot x}{\tau_k}\right|\leq c'\sqrt{d}\log\left(\frac{|W|k}{\delta}\right) \]
	for all $x\in W$, for some constant $c'$. This means $l_{\tau_k}(w,x,\sign(w^*\cdot x))$ are also bounded in the same range under $E_\delta$.
	
	Define the function class $\mathcal{F}=\{x\rightarrow l_{\tau_k}(w,x,\sign(w^*\cdot x)), \|w-w_k\|\leq r_k \}$. On event $E_\delta$, all functions in $\mathcal{F}$ are bounded. Now we bound the Rademacher complexity $R_n(\mathcal{F})$. Actually, define $\mathcal{F}'=\{x\rightarrow \frac{1}{\tau_k}w\cdot x\cdot \sign(w^*\cdot x), \|w-w_k\|\leq r_k \}$, we have $R_n(\mathcal{F})\leq R_n(\mathcal{F}')$ by contraction inequality of Rademacher complexity (since hinge loss is 1-Lipschitz). So
	\begin{align}
	R_n(\mathcal{F})&\leq R_n(\mathcal{F}')\nonumber\\
	&=\frac{1}{\tau_kn}E_{x_1,...,x_n\sim D_{w_k,b_{k-1}}}E_{\sigma_1,...,\sigma_n} \sup_{w:\|w-w_k\|\leq r_{k}} \sum_{i=1}^n \sigma_i w\cdot x_i \cdot\sign(w^*\cdot x_i)\nonumber\\
	&= \frac{1}{\tau_kn}E_{x_1,...,x_n\sim D_{w_k,b_{k-1}}}E_{\sigma_1,...,\sigma_n}\sup_{w:\|w-w_k\|\leq r_{k}}\sum_{i=1}^n \sigma_i(w\cdot x_i)\label{eqn1}\\
	&= \frac{1}{\tau_kn}E_{x_1,...,x_n\sim D_{w_k,b_{k-1}}}E_{\sigma_1,...,\sigma_n}\sum_{i=1}^n\sigma_i(w_k\cdot x_i)+\nonumber\\
	& \frac{1}{\tau_kn}E_{x_1,...,x_n\sim D_{w_k,b_{k-1}}}E_{\sigma_1,...,\sigma_n}\sup_{w:\|w-w_k\|\leq r_{k}}\sum_{i=1}^n\sigma_i(w-w_k)\cdot x_i\nonumber\\
	&= \frac{1}{\tau_kn}E_{x_1,...,x_n\sim D_{w_k,b_{k-1}}}E_{\sigma_1,...,\sigma_n}\sup_{w:\|w-w_k\|\leq r_{k}}\sum_{i=1}^n\sigma_i(w-w_k)\cdot x_i\nonumber\\
    &= \frac{1}{\tau_kn}E_{x_1,...,x_n\sim D_{w_k,b_{k-1}}}E_{\sigma_1,...,\sigma_n}\sup_{w:\|w-w_k\|\leq r_{k}}(w-w_k)\sum_{i=1}^n\sigma_i\cdot x_i\nonumber\\
	&\leq \frac{1}{\tau_kn}E_{x_1,...,x_n\sim D_{w_k,b_{k-1}}}E_{\sigma_1,...,\sigma_n}\sup_{w:\|w-w_k\|\leq r_{k}}\|w-w_k\|_2\left\|\sum_{i=1}^n\sigma_ix_i\right\|_2\nonumber\\
	&\leq \frac{2r_k}{\tau_k n}\sqrt{E_{x_1,...,x_n\sim D_{w_k,b_{k-1}}}E_{\sigma_1,...,\sigma_n}\left\|\sum_{i=1}^n\sigma_ix_i\right\|^2_2}\label{eqn2}\\
    &\leq \frac{2r_k}{\tau_k n}\sqrt{E_{x_1,...,x_n\sim D_{w_k,b_{k-1}}}E_{\sigma_1,...,\sigma_n}\left[\sum_{i=1}^n \|x_i\|_2^2+\sum_{i,j} \sigma_i\sigma_j x_i\cdot x_j\right]}\\
	&\leq \O\left(\frac{1}{n}\cdot \sqrt{nd\log^2\left(\frac{nk}{\delta}\right)}\right)\label{eqn3}\\
	&= \O\left(\sqrt{\frac{d\log^2\left(\frac{nk}{\delta}\right)}{n}}\right).\nonumber
	\end{align}
	\eqref{eqn1} is by the property that $\sigma_i \cdot \sign(w^*\cdot x_i)$ has the same distribution as $\sigma_i$, and thus we can substitute $\sigma_i \cdot \sign(w^*\cdot x_i)$ with a single variable; \eqref{eqn2} is by Jensen's inequality, and \eqref{eqn3} is by the boundary condition on $\|x\|_2$. So by Rademacher's inequality we have
	\begin{align*}
	|\E_W[l(w,x,\sign(w^*\cdot x))]-l(w,W)|&\leq R_{|W|}(\mathcal{F})+\sqrt{\frac{\log(1/\delta)}{|W|}}C\sqrt{d}\log\left(\frac{|W|k}{\delta}\right) \\
	&\leq \O\left(\sqrt{\frac{d\log^2\left(\frac{|W|k}{\delta}\right)}{|W|}}\right)+\sqrt{\frac{\log(k/\delta)}{|W|}}C\sqrt{d}\log\left(\frac{|W|k}{\delta}\right)\\
	&=\O\left(\sqrt{\frac{d\log(k/\delta)}{|W|}}\log\left(\frac{|W|k}{\delta}\right)\right).
	\end{align*}
	The choice of $|W|=m_i=\Omega\left(d\log^3\left(\frac{dk}{1/\delta}\right)\right)$ makes the above quantity less than $\kappa/16$.
\end{proof}
Now we are ready to prove Theorem \ref{thm:keylocaladgac}.
\begin{proof}[Proof of Theorem \ref{thm:keylocaladgac}.]
	With a probability of $1-\frac{\delta}{k+k^2}$, suppose the conditions in Lemma \ref{lem:boundl} and \ref{lem:boundE} holds for $w=v_k$ and $w=w^*$. We have
	\begin{align*}
	 \;& \Delta_{D_{w_{k-1},b_{k-1}}}(w_k,w^*)\\
    =\;&\Delta_{D_{w_{k-1},b_{k-1}}}(v_k,w^*)\\
	\leq\;& \E_{x\in D_{w_{k-1},b_{k-1}}}[l(v_k,x,\sign(w^*\cdot x))] \;\; &\text{(Since hinge loss upper bounds 0-1 loss)}\\
	\leq\;& l(v_k,c(W))+\kappa/16 & \text{(Using Lemma \ref{lem:boundE})}\\
	\leq\;& l(v_k,W)+\kappa/8 & \text{(Using Lemma \ref{lem:boundl})}\\
	\leq\;& l(w^*,W)+\kappa/4 & \text{(By the process of selecting $v_k$)}\\
	\leq\;& l(w^*,c(W))+\kappa/4+\kappa/16& \text{(Using Lemma \ref{lem:boundl})}\\
	\leq\;& L(w^*)+\kappa/4+\kappa/8& \text{(Using Lemma \ref{lem:boundE})}\\
	\leq\;& \kappa. & \text{(Using Lemma 3.7 in \cite{awasthi2017power})}
	\end{align*}	
\end{proof}
Now we can prove Theorem \ref{thm:localadgactsy}.
\begin{proof}[Proof of Theorem \ref{thm:localadgactsy}.]
	By relative Chernoff bound and property \ref{item:boundprob} in Lemma \ref{lem:ilc}, with probability $1-\frac{\delta}{6(k+k^2)}$ we have $|W|\geq m_k=2c_3b_kn_k+\log(12k/\delta)$ in every iteration. Then the correctness of Margin-ADGAC follows the same way as in \cite{awasthi2017power}. Now examine the number of queries: In each step we need to compare $m_i$ instances, as well as fitting the minimum requirement of ADGAC. So the comparison complexity is
  \[\E[\SC_{\comp}]=\tilde{\O}\left(\log^2(1/\varepsilon)\left(d\log^4(d/\delta)+\left(\frac{1}{\varepsilon}\right)^{2\kappa-2}\log(1/\delta)\right)\right).\]
  The label complexity is again obtained by multiplying the label complexity in each iteration by $\log(1/\varepsilon)$. Note that $\frac{\varepsilon_k n_k}{m_k}$ is constant in each iteration. Therefore,
  \[\SC_{\lab}=\tilde{\O}\left(\log(1/\varepsilon)\log(1/\delta)\left(\frac{1}{\varepsilon}\right)^{2\kappa-2}\right). \]	
\end{proof}
\begin{proof}[Proof of Theorem \ref{thm:localadgacagn}.]
The proof follows exactly the same process as that of Theorem \ref{thm:localadgactsy} using $\kappa=1$, and Theorem \ref{thm:adgacagn}.
\end{proof}
\section{Proof of Lower Bounds}
\label{sec:prooflowerbound}
\subsection{Proof of Theorem \ref{thm:labellowerbound}}
\begin{proof}
	Suppose $g(x_1)=a$ and $g(x_0)=b$ for $x_1,x_2\in \mathcal{X}, a<b$. Let $h_1(x)=\sign(g(x)-a)$ and $h_2(x)=\sign(g(x)-b)$. Note that using $Z(x_1,x_2)=0$ incurs $\nu'=0$ for both $h^*=h_1$ and $h^*=h_2$, and thus comparison cannot distinguish between $h_1$ and $h_2$. Suppose $\bC=\{h_1,h_2\}$. Thus, any algorithm $\mathcal{A}$ using both comparison and labeling oracles can be transformed into an algorithm $\mathcal{A}'$ that uses labeling oracle only, by making the comparison oracle always return 0. Note that $\SC_{\lab}(\mathcal{A})=\SC_{\lab}(\mathcal{A}')$, so we only need to lower bound $\SC_{\lab}(\mathcal{A}')$. In the following, we adapt the proof in \cite{hanneke2014theory} to give a lower bound. The main difference is that our goal is to reach a small $\Pr[h(X)\ne h^*(X)]$, whereas in \cite{hanneke2014theory} the goal is a small $\err(h)-\err(h^*)$.
	
	Let $P(x_1)=24\varepsilon, P(x_0)=1-24\varepsilon$. Consider two distributions $\P_1, \P_2$ over $\mathcal{X}\times \mathcal{Y}$ with two different Bayes function $\eta_1(), \eta_2()$. Let $\gamma=\varepsilon^{\kappa-1}$ if $\kappa>1$, or $\gamma=\frac{1}{48}$ if $\kappa=1$. Let $\eta_1(x_0)=\eta_2(x_0)=1$, $\eta_1(x_1)=\frac{1}{2}+\gamma, \eta_2(x_1)=\frac{1}{2}-\gamma$. It is easy to verify both $\P_1$ and $\P_2$ satisfy Tsybakov noise condition.
	
	Choose the groundtruth distribution to be $\P_1$ or $\P_2$ both with probability $1/2$. By the same proof as Theorem 4.3 in \cite{hanneke2014theory}, an event happens with probability at least $\delta$ that $\hat{h}(x_1)\ne h^*(x_1)$, and thus $\Pr[\hat{h}(X)\ne h^*(X)]\geq \varepsilon$, if at most $2\lfloor\frac{1-\gamma^2}{\gamma^2}\log\left(\frac{1}{8\delta(1-2\delta)}\right) \rfloor$ labels are queried.
	So we prove the theorem for TNC.
    
    The proof for adversarial noise is the same as the above proof using $\kappa=1$.
\end{proof}

\subsection{Proof of Theorem \ref{thm:totallowerbound}}
\begin{proof}[Proof of Theorem \ref{thm:totallowerbound}]
The first term in \eqref{equ: lower bound equ} follows directly from Theorem \ref{thm:labellowerbound}. For the second term, we consider the case where both labeling and comparison oracles are perfect with $\nu=\nu'=0$. This is a special case for all Conditions \ref{cond:labeladv}, \ref{cond:labeltsy} and \ref{cond:compadv}. Notice that in this case, a perfect comparison oracle can be constructed from a labeling oracle by $Z(x,x')=\sign(Y(x)-Y(x'))=\sign(h^*(x)-h^*(x'))$;  thus, any algorithm $\mathcal{A}$ with access to both labeling and comparison oracles can be transformed into another algorithm $\mathcal{A}'$ that uses labeling oracle (by replacing the comparison oracle with one that queries labeling oracle instead). So we have
\[2\SC_{\comp}(\mathcal{A})+\SC_{\lab}(\mathcal{A})=\SC_{\lab}(\mathcal{A}')=\Omega(d\log(1/\varepsilon)), \]
where $\Omega(d\log(1/\varepsilon))$ is the standard lower bound for realizable active learning (see e.g., \cite{hanneke2014theory}).
\end{proof}

\subsection{Proof of Theorem \ref{thm:minimaxnup}}
Define $R^B(\hat{g})$ to be the error of comparison oracle induced by $\hat{g}$, and also $\mathbb{C}_{\hat{g}}=\{h:h(x)=\sign(\hat{g}(x)-t),t\in \mathbb{R}\}$. To prove Theorem \ref{thm:minimaxnup}, we first give a lower bound on the left hand side (Theorem \ref{thm:errorlowerbound}) by giving a $\hat{g}$ that every $h\in \bC_{\hat{g}}$ will have every at least $\sqrt{\nu'}$. Then we give an upper bound on it (Theorem \ref{thm:errorbound}) by finding a good estimator $t$. We find $t$ by reducing $\Pr[\sign(\hat{g}(X)-t)\ne h^*(X)]$ to the case when for every $x,x'$ such that $\hat{g}(x)=\hat{g}(x')$ we also have $h^*(x)=h^*(x')$. We find such a good function $f$ in this case by fixing the amount of error at each value of $\hat{g}(x)$, and carefully adjusting the noise levels.
 
\begin{theorem}
	\label{thm:errorlowerbound}
	Suppose $\min\{\Pr[h^*(X)=1], \Pr[h^*(X)=-1]\}\geq \sqrt{\nu'}$. For any $g^*$ such that $g^*(X)$ has a density function, there exists $\hat{g}$ which induces a comparison oracle with error $\nu'$, such that for every $h\in \mathbb{C}_{\hat{g}}$, we have $\Pr[h(X)\ne h^*(X)]\geq \sqrt{\nu'}$.
\end{theorem}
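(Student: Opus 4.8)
The idea is to build $\hat g$ that ``hides'' the decision boundary of $h^*$ inside a small, finely interleaved region: every threshold on $\hat g$ must misclassify a chunk of probability $\sqrt{\nu'}$, yet the comparison oracle $Z(x,x')=\sign(\hat g(x)-\hat g(x'))$ only errs on a \emph{pair} of sets each of probability $\sqrt{\nu'}$, which contributes error $\nu'$. Recall $h^*(x)=\sign(g^*(x))$; write $P=\{x:h^*(x)=1\}$ and $N=\{x:h^*(x)=-1\}$, so by hypothesis $\Pr[P],\Pr[N]\ge\sqrt{\nu'}$. Since $g^*(X)$ has a density, $\P_\mathcal{X}$ is non-atomic, and I would first choose measurable subsets $P_0\subseteq P$, $N_0\subseteq N$ with $\Pr[P_0]=\Pr[N_0]=\sqrt{\nu'}$ (e.g.\ as level sets of $g^*$, using continuity of $c\mapsto \Pr[X\in P,\ g^*(X)\le c]$).

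\textbf{Construction of $\hat g$.} I would define $\hat g$ so that its range splits into three well-separated intervals: $\hat g$ maps $N\setminus N_0$ into $(-3,-2)$ and $P\setminus P_0$ into $(2,3)$, in each case by an order-preserving reparametrization of $g^*$ so that $\hat g(X)$ restricted to the piece has a density; and it maps \emph{both} $P_0$ and $N_0$ onto $(-1,1)$ through their conditional quantile transforms against one common continuous CDF $\Phi$ supported on $(-1,1)$. Then $\hat g(X)\mid X\in P_0$ and $\hat g(X)\mid X\in N_0$ are identically distributed with law $\Phi$; in particular $\Pr[X\in P_0,\ \hat g(X)\le t]=\Phi(t)\sqrt{\nu'}$ and $\Pr[X\in N_0,\ \hat g(X)\le t]=\Phi(t)\sqrt{\nu'}$ for all $t$, and $\hat g(X)$ has a density overall.

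\textbf{The two verifications.} \emph{Comparison error.} Because the three value-intervals are separated, for a pair with $h^*(X)\ne h^*(X')$ the induced oracle errs only when both points lie in $P_0\cup N_0$ with opposite $h^*$-labels; every other mixed pair is ordered correctly. Conditioned on $X\in P_0,X'\in N_0$ the two values are i.i.d.\ $\Phi$, so the oracle errs with probability exactly $1/2$, and likewise for $X\in N_0,X'\in P_0$; hence the total error is $\tfrac12\Pr[P_0]\Pr[N_0]+\tfrac12\Pr[N_0]\Pr[P_0]=\nu'$. \emph{Thresholds.} Fix $h(x)=\sign(\hat g(x)-t)$. If $t\notin(-1,1)$, then by the interval separation one of $P_0,N_0$ lies entirely on one side of $t$ while $h^*$ is the opposite constant there, so $\Pr[h(X)\ne h^*(X)]\ge\min\{\Pr[P_0],\Pr[N_0]\}=\sqrt{\nu'}$ (the gap intervals $(-2,-1)$ and $(1,2)$ reduce to this case). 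If $t\in(-1,1)$, the only misclassified points are $P_0$-points with $\hat g\le t$ and $N_0$-points with $\hat g>t$, so $\Pr[h(X)\ne h^*(X)]=\Phi(t)\sqrt{\nu'}+(1-\Phi(t))\sqrt{\nu'}=\sqrt{\nu'}$. Either way the conclusion holds.

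\textbf{Main obstacle.} The delicate part is the construction of $\hat g$: one must realize the ``proportional interleaving'' rigorously, i.e.\ build the reparametrizations and the two quantile transforms so that $\hat g(X)$ has an honest density and the conditional laws on $P_0$ and $N_0$ literally coincide (this is exactly what makes the conditional comparison error equal $1/2$ and the threshold error a convex combination that is identically $\sqrt{\nu'}$). The remaining bookkeeping — checking that no other pair types leak comparison error, and that the threshold case analysis covers all $t\in\mathbb{R}$ including the gap intervals — is routine once the interval structure is fixed.
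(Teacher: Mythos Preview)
Your proposal is correct and follows essentially the same approach as the paper: both constructions interleave a positive set and a negative set of probability $\sqrt{\nu'}$ each onto a common interval with identical conditional distributions of $\hat g$, so that mixed pairs within the interleaved region err with conditional probability $1/2$ (giving comparison error $\nu'$) while any threshold incurs a convex combination $\Phi(t)\sqrt{\nu'}+(1-\Phi(t))\sqrt{\nu'}=\sqrt{\nu'}$ inside and at least $\sqrt{\nu'}$ outside. The only cosmetic differences are that the paper takes the two boundary sets as $\{g^*(X)\in[a,0]\}$ and $\{g^*(X)\in(0,b]\}$, maps them onto $[a,b]$ via the CDF of $g^*(X)$ (so the common conditional law is uniform), and keeps $\hat g=g^*$ off the special region rather than introducing explicit gaps; your use of separated intervals $(-3,-2),(-1,1),(2,3)$ and an abstract $\Phi$ makes the case analysis for thresholds a bit cleaner but is mathematically the same construction.
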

\begin{proof}
	Consider the distribution of $g^*(X)$. Pick a consecutive interval $I=[a,b]
	$ with $a<0<b$ such that $\Pr(g^*(X)\in [0,b])=\Pr(g^*(X)\in [a,0])=\sqrt{\nu'}$. Pick some integer $n\in \mathbb{N}$. Suppose the cdf and pdf of random variable $T=g^*(X)$ is $F(t)$ and $p(t)$ respectively. Define
    \[\hat{g}(x)=\begin{cases}
    a+(b-a)\frac{F(g^*(x))-F(a)}{\sqrt{v'}}, & \text{if } x\in [a,0],\\
    a+(b-a)\frac{F(g^*(x))-F(0)}{\sqrt{v'}}, & \text{if } x\in (0,b],\\    
    g^*(x), & \text{otherwise.}
    \end{cases}\]
    The error of the comparison oracle induced by $\hat{g}$ can be represented as
    \begin{align*}
    R^B(\hat{g})&=2\int_{g^*(x)\in (0,b]} p(g^*(x))\int_{g^*(x')\in [a,0)} p(g^*(x'))\cdot \delta(\hat{g}(x')>\hat{g}(x))\;\text{d} g^*(x) \text{d} g^*(x')\\
    \end{align*}
    Let $t=g^*(x)$ and $t'=g^*(x')$. Then $\hat{g}(x')>\hat{g}(x)$ if and only if 
    \begin{align*}
    & F(t')-F(a)>F(t)-F(0),\\
    \Leftrightarrow &F(t)-F(t')<\sqrt{\nu'}.
    \end{align*}
    For every $t\in [0,b]$, let $G(t)$ satisfy $F(t)-F(G(t))=\sqrt{\nu'}$. Then
     \begin{align*}
    R^B(\hat{g})&=2\int_{t=0}^b p(t)\int_{t'=a}^0 p(t')\cdot \delta\left(F(t)-F(t')<\sqrt{\nu'}\right)\;\text{d} t \text{d} t'\\
    &=2\int_{t=0}^b p(t) \int_{t'=a}^{G(t)} p(t')\;\text{d} t \text{d} t'\\
    &=2\int_{t=0}^b p(t) (F(G(t))-F(a))\text{d} t\\
    &=2\int_{t=0}^b p(t) (F(t)-F(0))\text{d} t\\
    &=2\int_{t=0}^b p(t)\int_{t'=0}^t p(t')\text{d} t\text{d} t'\\
    &=2\int_{t=0}^b\int_{t'=0}^b p(t)p(t')\delta(t'<t)\text{d} t\text{d} t'\\
    &=\nu'.
    \end{align*}
    Now examine any function in $\bC_{\hat{g}}$. If we pick a threshold $t\not\in [a,b]$, the error is at least $\sqrt{\nu'}$ since we incur error on either $\{x:g^*(x)\in [a,0]\}$ or $\{x:g^*(x)\in [0,b]\}$. If we pick threshold $a+(b-a)t$ for $t\in [0,1]$, we induce an error for any $g^*(x)\in [a,0]$ with $\frac{F(g^*(x))-F(a)}{\sqrt{v}}>t$, and any $g^*(x)\in (0,b]$ with $\frac{F(g^*(x))-F(0)}{\sqrt{v}}<t$. A routine calculation shows the error is always $\sqrt{\nu'}$.
	
\end{proof}
\begin{theorem}
	\label{thm:errorbound}
	Suppose that $\hat{g}$ induces a comparison oracle with error $\nu'$, and also distributions of $\hat{g}(X)$ and $g^*(X)$ are smooth in the sense that they both have a density function. 
	There exists $h_t(x)\coloneqq \sign(\hat{g}(x)-t)\in \mathbb{C}_{\hat{g}}$ such that the error of $h_t(x)$ with respect to $h^*(x)$ is at most $\sqrt{\nu'}$, i.e.,
	\[\Pr[h_t(X) \neq h^*(X)] = \Pr[(\hat{g}(X)-t)g^*(X)<0]\leq \sqrt{\nu'}. \]
\end{theorem}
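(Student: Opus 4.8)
The plan is to collapse the whole question onto the real line --- onto how the positive and negative mass of $h^*$ is laid out along the values of $\hat g$ --- and then solve the resulting scalar optimization exactly.

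First I would fix notation. Let $\mu$ denote the law of $\hat g(X)$, which is atomless because it has a density. Write $\mu = \mu_+ + \mu_-$ with $\mu_\pm(\cdot) = \Pr[\hat g(X) \in \cdot,\ h^*(X) = \pm 1]$, put $p_\pm = \mu_\pm(\mathbb R)$ so that $p_+ + p_- = 1$, and set $F(t) = \mu((-\infty,t])$ and $A(t) = \mu_+((-\infty,t])$. A one-line computation shows that for the threshold rule $h_t = \sign(\hat g - t)$,
\[
\err(t) := \Pr[h_t(X) \neq h^*(X)] = A(t) + \bigl(p_- - (F(t) - A(t))\bigr) = 2A(t) + p_- - F(t),
\]
and, writing $d\mu_+ = \rho\, d\mu$ with $\rho \in [0,1]$ and using $dA = \rho\, d\mu$ together with $\int_{\mathbb R} A\, dA = p_+^2/2$ (valid since $\mu$, hence $\mu_+$, is atomless),
\[
\tfrac{\nu'}{2} = \int_{\mathbb R} A\, d\mu_- = \int_{\mathbb R} A\, d\mu - \int_{\mathbb R} A\, dA = \int_{\mathbb R} A\, d\mu - \tfrac{p_+^2}{2}.
\]
At this point one may, following the paper, first reduce to the ``nice'' case where $\hat g(x) = \hat g(x')$ forces $h^*(x) = h^*(x')$, i.e.\ $\rho \in \{0,1\}$; but the identities above already show the reduction is not logically necessary, since $\int_{\mathbb R} A\, d\mu$ depends only on the image of $\mu$ under $F$ (the uniform measure on $[0,1]$) and on $A$ regarded as a function of $F(t)$.

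Next I would let $\gamma := \inf_t \err(t)$; sending $t \to \pm\infty$ gives $\gamma \le \min\{p_+, p_-\}$. Optimality of $\gamma$ yields the pointwise bound $A(t) \ge \tfrac12\bigl(F(t) - p_- + \gamma\bigr)$, and two more bounds are free: $A(t) \ge 0$, and $A(t) \ge F(t) - p_-$ because $p_+ - A(t) = \mu_+((t,\infty)) \le 1 - F(t)$. Substituting $v = F(t)$, so that $d\mu$ transports to Lebesgue measure on $[0,1]$, these combine to
\[
\int_{\mathbb R} A\, d\mu \ \ge\ \int_0^1 \max\Bigl\{0,\ v - p_-,\ \tfrac{v - p_- + \gamma}{2}\Bigr\}\, dv \ =\ \tfrac{\gamma^2 + p_+^2}{2},
\]
the last step being the elementary integral of a piecewise-linear function (using $\gamma \le \min\{p_+, p_-\}$). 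Feeding this into the formula for $\nu'$ gives $\nu' \ge \gamma^2$, hence $\gamma \le \sqrt{\nu'}$; since $\err$ is continuous on the compactified line its infimum is attained (at worst by a constant classifier, $t = \pm\infty$), so some $h_t$ indeed has error at most $\sqrt{\nu'}$.

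The crux --- and the one step I expect to be genuinely delicate --- is the middle display: identifying the extremal arrangement of $A$ subject to the single constraint $\inf_t \err(t) = \gamma$ and evaluating the corresponding extremal integral. This is exactly the self-contained estimate the paper isolates as Lemma~\ref{lemma:inequality}; their way of establishing it is to discretize $[0,1]$ into $n$ equal pieces, choose on each piece the worst calibration of noise consistent with its error being held fixed, bound the resulting Riemann sum, and let $n \to \infty$. The only other point needing care is the measure-theoretic bookkeeping when the density of $\hat g(X)$ vanishes on some intervals, so that $F$ is not strictly increasing; but this is precisely what the substitution $v = F(t)$ absorbs, since $F(\hat g(X))$ is uniform on $[0,1]$ regardless.
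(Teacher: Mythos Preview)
Your argument is correct, and it is a genuinely different route from the paper's. Both proofs begin by expressing the two relevant quantities---the threshold error $\err(t)$ and the comparison error $\nu'$---in terms of the same ``positive mass below $t$'' function (your $A(t)$; their $\int_0^t \xi$). Where you diverge is at the core inequality. The paper discretizes $[0,1]$ into $n$ equal cells, sets $x_i = \tfrac1n\xi(i/n)$ and $y_i = \tfrac1n(q(i/n)-\xi(i/n))$, invokes the separately proved combinatorial Lemma~\ref{lemma:inequality} to bound $\min_k(x_1+\cdots+x_k+y_{k+1}+\cdots+y_n)$ by $\sqrt{2nS_n/(n+1)}$, and then lets $n\to\infty$. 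You instead stay in the continuum: from $\err(t)\ge\gamma$ you read off the pointwise lower bound $A(t)\ge\max\{0,\,F(t)-p_-,\,(F(t)-p_-+\gamma)/2\}$, push forward by $v=F(t)$, and evaluate the resulting piecewise-linear integral explicitly to get $\int A\,d\mu\ge(\gamma^2+p_+^2)/2$.

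What this buys you: your proof is shorter, needs no auxiliary lemma, and sidesteps a step the paper leaves implicit---namely the interchange of $\min_t$ with $\lim_{n\to\infty}$ in passing from the Riemann-sum bound back to the integral bound. Your change of variables $v=F(t)$ also handles the case of a nonstrictly-increasing $F$ cleanly, as you note. What the paper's route buys: the discrete Lemma~\ref{lemma:inequality} is an attractive self-contained inequality with an explicit equality case, and its proof (symmetrize to $x_i=y_i$, then Cauchy--Schwarz) makes the extremal configuration transparent in a way your direct lower bound does more implicitly. Your remark that the middle display ``is exactly'' their Lemma~\ref{lemma:inequality} is right in spirit---it is the continuous analogue---but your derivation of it is independent and does not rely on their discrete statement at all.
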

We first prove the inequality:

 \begin{lemma}
 	\label{lemma:inequality}
 	Suppose $\{x_i\}_{i=1}^n$ and $\{y_i\}_{i=1}^n$ satisfies $x_i,y_i\in \mathbb{R}, x_i,y_i\geq 0$. If
 	\(\sum_{i=1}^n \sum_{j=i}^n x_iy_j\leq t, \)
 	we have
 	\[\min_{k=0,1,...,n}\{x_1+\cdots+x_k+y_{k+1}+\cdots+y_n \}\leq \sqrt{\frac{2nt}{n+1}}, \]
 	the equality holds when
 	$x_1=x_2=\cdots=x_n=y_1=\cdots=y_n=\sqrt{\frac{2t}{n(n+1)}}.$
 \end{lemma}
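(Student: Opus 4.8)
The plan is to prove the inequality by induction on $n$, peeling off the last index. For $k=0,1,\dots,n$ write $f(k)=x_1+\cdots+x_k+y_{k+1}+\cdots+y_n$, and set $M=\min_{0\le k\le n}f(k)$, $A=x_1+\cdots+x_n=f(n)$, and $c=\sqrt{2nt/(n+1)}$ (the claimed bound). Each $f(k)$ is a sum of nonnegative numbers, so $M\ge 0$; if $A=0$ then $M\le f(n)=0\le c$; and if $t=0$ one checks directly (using $x_iy_j=0$ for all $i\le j$) that some $f(k)$ vanishes, so $M=0$. Hence we may assume $A,t>0$. The base case $n=1$ is immediate, since $M^2=\min(x_1,y_1)^2\le x_1y_1\le t=c^2$.

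For the inductive step, drop index $n$. Set $f'(k)=x_1+\cdots+x_k+y_{k+1}+\cdots+y_{n-1}$ for $0\le k\le n-1$; since $f(k)=f'(k)+y_n$ for such $k$, we get
\[
M=\min\Bigl(\,y_n+\min_{0\le k\le n-1}f'(k),\ A\,\Bigr).
\]
Moreover the reduced double sum loses exactly the $j=n$ terms: $\sum_{1\le i\le j\le n-1}x_iy_j=\sum_{1\le i\le j\le n}x_iy_j-Ay_n\le t-Ay_n$, and $Ay_n=\sum_{i=1}^nx_iy_n\le t$, so $t':=t-Ay_n\in[0,t]$. Applying the inductive hypothesis at level $n-1$ with budget $t'$ gives $\min_{0\le k\le n-1}f'(k)\le\sqrt{2(n-1)t'/n}$, hence $M\le\min\bigl(y_n+\sqrt{\tfrac{2(n-1)(t-Ay_n)}{n}},\,A\bigr)$.

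Now split on $A$. If $A\le c$ we are done. If $A>c$, then $y_n\le t/A<t/c\le c$ (the last step is $t\le c^2$, i.e.\ $n+1\le 2n$), and $A>c$ also gives $t-Ay_n\le t-cy_n$; so it suffices to verify $\sqrt{\tfrac{2(n-1)(t-cy_n)}{n}}\le c-y_n$, both sides being nonnegative. Squaring, substituting $t=(n+1)c^2/(2n)$, and collecting terms reduces this to $(c/n-y_n)^2\ge 0$, which holds. Propagating the equality conditions — $c/n=y_n$, $A=c$, and $Ay_n$ tight in the double sum — back down the induction yields precisely $x_1=\cdots=x_n=y_1=\cdots=y_n=\sqrt{2t/(n(n+1))}$.

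The main obstacle is the choice of reduced budget: simply discarding $Ay_n$ would give only $\min f'(k)\le\sqrt{2(n-1)t/n}$, which beats $c$ but only by a margin of order $n^{-3/2}\sqrt{t}$ — too small to absorb the added $y_n$. Carrying the sharp budget $t'=t-Ay_n$ and exploiting $A>c$ in the nontrivial branch is exactly what turns the residual into a perfect square; this, together with the bookkeeping of the degenerate cases $A=0$, $t=0$, $y_n=0$ (each collapsing to $M=0$ or to the free reduction $t'=t$), is where the care is needed. The remaining algebra is routine.
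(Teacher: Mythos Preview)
Your proof is correct and takes a genuinely different route from the paper. The paper argues by a variational/exchange reduction: it shows (via the swap $x_l\mapsto y_l$, $x_{l+1}\mapsto x_{l+1}+(x_l-y_l)$, which preserves the constraint and does not decrease $\min_k f(k)$) that the supremum of $\min_k f(k)$ can be taken at a symmetric configuration $x_i=y_i$; there the constraint reads $\tfrac12\bigl[(\sum_i x_i)^2+\sum_i x_i^2\bigr]\le t$, and Cauchy--Schwarz $\sum_i x_i^2\ge(\sum_i x_i)^2/n$ gives $\sum_i x_i\le\sqrt{2nt/(n+1)}$, which is exactly $f(k)$ for every $k$ in the symmetric case. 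Your argument instead peels off index $n$ and inducts, carrying the sharp reduced budget $t'=t-Ay_n$ and showing that in the nontrivial branch $A>c$ the residual inequality collapses to the perfect square $(c/n-y_n)^2\ge 0$.

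Your inductive approach has the advantage of being entirely self-contained: it sidesteps any attainment question (the constraint region is unbounded, so one has to argue why a maximizer of $\min_k f(k)$ exists) and any worry about whether iterating the exchange move actually converges to the fully symmetric point. The paper's route, on the other hand, makes the equality case immediate---once $x_i=y_i$, equality in Cauchy--Schwarz forces all $x_i$ equal---whereas in your argument tracking equality back through the induction (forcing $y_n=c/n$, $A=c$, and then the level-$(n-1)$ equality conditions) is a little more bookkeeping. Both proofs are short; yours is arguably the cleaner of the two.
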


\begin{proof}[Proof of Lemma \ref{lemma:inequality}]
	Let $f(k)=x_1+\cdots+x_k+y_{k+1}+\cdots+y_n$.
	We first prove that when the maximum of $\min_{k=0,1,...,n}f(k)$ is achieved, we must have $x_i=y_i$ for all $i$. If not, not losing generality suppose $x_l>y_l$. Now consider $x_i'=x_i$ for all $i\ne l, l+1$, and $x_l'=y_l, x_{l+1}'=x_{l+1}+x_l-y_l$ (omit the latter step if $l=n$). Let $f'(k)$ be the function of $k$ computed based on $x'$ and $y$. By $x_l>y_l$ we have $f(l)>f(l-1)$. Notice that only $f'(l)=f(l-1)<f(l)$ is reduced and for all other $k\ne l$ we have $f(k)=f'(k)$, so the minimum remains the same. Now we have
	\begin{align*}
	\sum_{i=1}^n \sum_{j=i}^n x_i'y_j&=\sum_{j=1}^n \sum_{i=1}^j x_i'y_j =\sum_{j=1}^n y_j\sum_{i=1}^j x_i'\\
	&=\sum_{j=1}^{l-1} y_j\sum_{i=1}^j x_i+y_l\sum_{i=1}^{l} x_i'+\sum_{j=l+1}^n y_j \sum_{i=1}^j x_i\\
	&\leq \sum_{j=1}^{l-1} y_j\sum_{i=1}^j x_i+y_l\sum_{i=1}^{l} x_i+\sum_{j=l+1}^n y_j \sum_{i=1}^j x_i\\
	&\leq t.
	\end{align*}
	So there exists a configuration that maximizes $\min_k f(k)$ with $x_i=y_i$ for all $i$. Now suppose $x_i=y_i$ for all $i$. The constraint becomes
	\[\sum_{i=1}^n \sum_{j=i}^{n} x_ix_j\leq \varepsilon, \]
	which is equivalent to
	\[\left(\sum_{i=1}^n x_i\right)^2+\sum_{i=1}^n x_i^2\leq 2\varepsilon. \]
	By Cauchy-Schwarz inequality we have
	\[\sum_{i=1}^n x_i^2\geq \frac{\left(\sum_{i=1}^n x_i\right)^2}{n}. \]
	So
	\[x_1+\cdots+x_k+y_{k+1}+\cdots+y_n=\sum_{i=1}^n x_i\leq \sqrt{\frac{2nt}{n+1}}. \]
	It is easy to verify the equality condition.
\end{proof}

\begin{proof}[Proof of Theorem \ref{thm:errorbound}]
	Not losing generality, suppose $\hat{g}(x)\in [0,1]$; such a assumption is justifiable since any increasing transformation of $\hat{g}$ does not change $R^B(\hat{g})$. So we only need to consider $\mathbb{C}_{\hat{g}}=\{h:h(x)=h_t(x)=\sign(\hat{g}(x)-t),t\in [0,1] \}$. Let $q(u)$ denote the distribution of $\hat{g}(X)$. Let $\xi(u)=q(u)\Pr(h^*(X)=1|\hat{g}(x)=u)$. So we have
	\[\int_{0}^t \xi(u)du=\Pr(h^*(X)=1,\hat{g}(X)<t).  \]
	So the error of $h_t$ with respect to $h^*$ can be expressed as
	\begin{align*}
	\Pr((\hat{g}(X)-t)g^*(X)<0)&=\Pr(\hat{g}(X)>t,g^*(X)<0)+\Pr(\hat{g}(X)<t,g^*(X)>0)\\
	&=\int_{0}^t \xi(u)du+\int_{t}^1 (q(u)-\xi(u))du.
	\end{align*}
	On the other hand, the comparison error can be expressed as
	\begin{align*}
	R^B(\hat{g})&=2\Pr(\hat{g}(X)>\hat{g}(X'),h^*(X)=-1,h^*(X)=1)\\
	&=\int_{0}^1 \xi(u)\int_{u}^1 (q(v)-\xi(v))dudv.
	\end{align*}
	Now consider we do this on the grid with step size $1/n$ and let $n\rightarrow \infty$; the integral will be the limit value. So, let
	\[S_n=\frac{1}{n^2}\sum_{i=1}^n \sum_{j=i}^n \xi(i/n)(q(j/n)-\xi(j/n)). \]
	So
	\[\Pr(\hat{g}(X)>g(X'),h^*(X)=-1,h^*(X)=1)=\lim_{n\rightarrow \infty} S_n. \]
	Also, let
	\[T^t_n=\frac{1}{n}\left(\sum_{i: i/n<t} \xi(i/n)+ \sum_{i: i/n>=t} (q(i/n)-\xi(i/n))\right), \]
	so
	\[\Pr((\hat{g}(X)-t)g(X)<0)= \lim_{n\rightarrow \infty} T^t_n.\]
	Now let $x_i=\frac{1}{n} \xi(i/n), y_i=\frac{1}{n}(q(i/n)-\xi(i/n))$ in Lemma
	\ref{lemma:inequality}, and we have
	\[\min_t T^t_n\leq \sqrt{\frac{2nS_n}{n+1}}. \]
	Note that $\lim_{n\rightarrow \infty} 2S_n = R^B(\hat{g})\leq \nu'$ and let $n\rightarrow \infty$ on both side, we have
	\[\min_t  \Pr[(\hat{g}(X)-t)g(X)<0]\leq \sqrt{\nu'}. \]
	
\end{proof}
\end{document}